\documentclass{article}

\usepackage{microtype}
\usepackage{subcaption}
\usepackage{booktabs} 
\usepackage{etoc}
\usepackage{graphicx, subcaption, adjustbox, wrapfig, multirow, array, tabularx, enumitem}
\usepackage{xcolor, hyperref}
\usepackage[table]{xcolor}
\usepackage{longtable}
\usepackage{tcolorbox}

\usepackage{hyperref}

\definecolor{lightcrimson}{rgb}{0.93, 0.16, 0.51}

\usepackage[accepted]{icml2026}

\usepackage{cjhebrew}
\usepackage{amsmath}
\usepackage{amssymb}
\usepackage{mathtools}
\usepackage{amsthm}

\usepackage[capitalize,noabbrev]{cleveref}

\theoremstyle{plain}
\newtheorem{theorem}{Theorem}[section]

\newtheorem{lemma}[theorem]{Lemma}

\newtheorem{problem}{Problem}
\theoremstyle{definition}

\newtheorem{assumption}[theorem]{Assumption}
\theoremstyle{remark}
\newtheorem{remark}[theorem]{Remark}

\usepackage[textsize=tiny]{todonotes}

\icmltitlerunning{GIST: Targeted Data Selection for Instruction Tuning via Coupled Optimization Geometry}

\begin{document}

\twocolumn[
\icmltitle{GIST: Targeted Data Selection for Instruction Tuning via \\Coupled Optimization Geometry}



\icmlsetsymbol{equal}{*}

\begin{icmlauthorlist}
\icmlauthor{Guanghui Min}{uva}
\icmlauthor{Tianhao Huang}{uva}
\icmlauthor{Ke Wan}{uva}
\icmlauthor{Chen Chen}{uva}
\end{icmlauthorlist}

\icmlaffiliation{uva}{Department of Computer Science, University of Virginia, Charlottesville, USA}

\icmlcorrespondingauthor{Guanghui Min}{jjm8vr@virginia.edu}

\icmlkeywords{Machine Learning, ICML}

\vskip 0.3in
]



\printAffiliationsAndNotice{}  

\begin{abstract}
Targeted data selection has emerged as a crucial paradigm for efficient instruction tuning, aiming to identify a small yet influential subset of training examples for a specific target task.
In practice, influence is often measured through the effect of an example on parameter updates.
To make selection scalable, many approaches leverage optimizer statistics (e.g., Adam states) as an axis-aligned surrogate for update geometry (i.e., diagonal precondition), implicitly treating parameters as coordinate-wise independent.
We show that this assumption breaks down in parameter-efficient fine-tuning (PEFT) methods such as LoRA. In this setting, the induced optimization geometry exhibits strong cross-parameter coupling with non-trivial off-diagonal interactions, while the task-relevant update directions are confined to a low-dimensional subspace.
Motivated by this mismatch, we propose \texttt{GIST} (\textbf{\underline{G}}radient \textbf{\underline{I}}sometric \textbf{\underline{S}}ubspace \textbf{\underline{T}}ransformation), a simple yet principled alternative that replaces axis-aligned scaling with robust subspace alignment.
\texttt{GIST} recovers a task-specific subspace from validation gradients via singular value decomposition (SVD), projects training gradients into this coupled subspace, and scores examples by their alignment with target directions.
Extensive experiments have demonstrated that \texttt{GIST} matches or outperforms the state-of-the-art baseline with only 0.29\% of the storage and 25\% of the computational time under the same selection budget. Our code is available at \url{https://github.com/GuanghuiMin/GIST}.
\end{abstract}

\section{Introduction}\label{sec:intro}
Instruction tuning has become the de facto standard for aligning Large Language Models (LLMs) with human intent~\citep{touvron2023llama, ouyang2022training}. While early approaches relied on scaling up the volume of instruction data, recent findings have precipitated a paradigm shift: the quality and relevance of data are far more critical than mere quantity. Seminal works like LIMA~\citep{zhou2023lima} and AlpaGasus~\citep{chen2024alpagasus} demonstrate that a small, carefully curated subset of high-quality examples can rival or even surpass the performance of models trained on massive, unfiltered datasets. This ``less is more'' phenomenon has catalyzed a surge of interest in automated data selection, aiming to identify the most effective training samples from large-scale pools to maximize downstream performance~\citep{liu2024what,zhang2025survey}.

Beyond general alignment, a more practical and challenging problem is \textbf{Targeted Instruction Tuning}~\citep{xia2024less}, where the goal is to select a data subset that maximizes performance on a specific target distribution under a limited budget. Existing approaches generally fall into three categories. 
First, hard example mining prioritizes training samples based on scalar metrics like perplexity or loss magnitude~\citep{zhao2024long,yincompute,antonello2021selecting}. 
Second, similarity-based methods utilize embedding retrieval to identify semantically relevant examples in the representation space~\citep{zhang2018unreasonable,hanawa2020evaluation,ivison2025large}. 
Finally, optimizer-based methods, exemplified by the state-of-the-art framework LESS~\citep{xia2024less}, quantify data contribution directly in the gradient space via influence functions~\citep{koh2017understanding}. Notably, LESS incorporates optimization geometry by leveraging Adam states, offering a scalable solution for LLMs.

However, relying on first-order optimizer statistics imposes a fundamental geometric limitation, prioritizing computational efficiency over the precision of the quadratic approximation.
Designed for linear scalability, optimizers like Adam approximate the local curvature, referring to the Hessian geometry describing the loss landscape, via a \emph{diagonal} preconditioner~\citep{kingma2015adam}.
While efficient, this imposes a hard expressivity bottleneck: \textit{diagonal matrices lack the off-diagonal terms necessary to represent \textbf{parameter interactions} (e.g., the bilinear coupling in LoRA)}.
Consequently, this structural incapacity inevitably distorts the intrinsic metric~\citep{amari1998natural} of the parameter space.
Furthermore, Adam's element-wise inversion indiscriminately amplifies directions with small gradients.
In the context of low-rank adaptation, where the vast majority of the parameter space constitutes a noisy null space~\citep{aghajanyan2021intrinsic}, this scaling boosts noise rather than preserving the sparse update structure.
Ultimately, unlike Newton's method which leverages full curvature to identify optimal descent paths, this approximation fails to capture the true \emph{descent potential} of induced updates, fundamentally misestimating their impact on loss reduction.

Motivated by this mismatch, we introduce \texttt{GIST} (\textbf{\underline{G}}radient \textbf{\underline{I}}sometric \textbf{\underline{S}}ubspace \textbf{\underline{T}}ransformation), a data selection framework that prioritizes geometric alignment over unstable diagonal approximations. 
\texttt{GIST} extracts a low-rank, task-specific subspace from validation gradients and scores training examples by projected gradient alignment, yielding an efficient selection rule that naturally respects parameter coupling without requiring full second-order information.
Our contributions are threefold: 
\begin{itemize}[leftmargin=0.25cm, nosep] 
\item \textbf{Theoretical Unification \& Analysis:} We unify prior targeted selection methods as approximations to a common geometry-aware objective, revealing that diagonal preconditioners are inherently limited under rotated low-rank coupling. We then derive a principled, tractable non-diagonal estimator using the spectral structure of target gradients.
\item \textbf{\texttt{GIST} Algorithm:} We introduce a scalable subspace-based selection method: \texttt{GIST} extracts a low-rank task subspace from validation gradients via SVD and ranks examples using projected gradient alignment, enabling efficient selection under coupled PEFT geometry.
\item \textbf{Empirical Superiority:} Extensive experiments on \textsc{MMLU}~\cite{hendrycks2020measuring}, \textsc{TydiQA}~\cite{clark2020tydi}, and \textsc{BBH}~\cite{suzgun2023challenging} demonstrate that \texttt{GIST} matches or outperforms the state-of-the-art baseline with only 0.29\% of the storage and 25\% of the computational time under the same selection budget.
\end{itemize}

\section{Preliminaries}
\noindent\textbf{Notations.}
We denote the large-scale instruction tuning dataset as $\mathcal{D} = \{\boldsymbol z_i\}_{i=1}^{|\mathcal{D}|}$ with $\boldsymbol z_i=(\boldsymbol x_i,\boldsymbol y_i)$.
The Large Language Model (LLM) is denoted as $\mathcal{M}_{\boldsymbol{\theta}}$.
We distinguish between the \textit{sample-wise loss} on a data instance $\boldsymbol{z}=(\boldsymbol{x}, \boldsymbol{y})$, denoted as $\ell(\boldsymbol{z}, \boldsymbol{\theta})$, and the \textit{empirical risk} over a dataset $S$, denoted as $\mathcal{L}(\mathcal{S}, \boldsymbol{\theta}) = \frac{1}{|\mathcal{S}|} \sum_{\boldsymbol{z} \in \mathcal{S}} \ell(\boldsymbol{z}, \boldsymbol{\theta})$.
All notations and symbols are summarized in Table~\ref{tab:symbols}.

\subsection{Problem Definition}
\label{subsec:problem_def}

We formalize the targeted data selection task as a constrained bi-level optimization problem.
Our goal is to select a subset $\mathcal{S} \subset \mathcal{D}$ with a cardinality constraint $|\mathcal{S}| = k$ (where $k \ll |\mathcal{D}|$ is the budget), such that the LLM trained on $S$ minimizes the loss on the target distribution $\mathcal{D}_{\text{val}}$, following the setting in \citet{xia2024less}. Formally, this is defined as:

\begin{problem}[Targeted Data Selection]
\label{def:problem}
Find the optimal subset $\mathcal{S}^*$ that solves the following bi-level optimization:
\begin{equation}
\begin{aligned}
    \min_{S \subseteq \mathcal{D}} \quad & \mathcal{L}(\mathcal{D}_{\text{val}}, \boldsymbol{\theta}_S^*) \\
    \text{s.t.} \quad & |S| = k, \\
    & \boldsymbol{\theta}_S^* = \operatorname*{arg\,min}_{\boldsymbol{\theta}} \mathcal{L}(S, \boldsymbol{\theta}).
\end{aligned}
\end{equation}
\end{problem}

\subsection{Optimization Dynamics of Instruction Tuning}
\label{subsec:opt_dynamics}

We analyze the optimization trajectory by examining the parameter update from step $t$ to $t+1$.
For clarity, we consider a per-sample (batch size $=1$) update, and denote the instantaneous training objective at step $t$ by $\ell(\boldsymbol z_t,\boldsymbol\theta)$.
We study the local behavior of $\ell(\boldsymbol z_t,\boldsymbol\theta)$ around the current parameters $\boldsymbol\theta_t$.

\noindent\textbf{Newton's Method.}
Ideally, the parameter update is derived by minimizing a local quadratic approximation of the instantaneous loss. We perform a second-order Taylor expansion of $\ell(\boldsymbol z_t,\boldsymbol{\theta})$ around $\boldsymbol{\theta}_t$:
\begin{equation}
\begin{aligned}
\ell(\boldsymbol z_t,\boldsymbol{\theta}) \approx \ell(\boldsymbol z_t,\boldsymbol{\theta}_t) + \mathbf{g}_t^\top \Delta \boldsymbol{\theta} + \frac{1}{2} \Delta \boldsymbol{\theta}^\top \mathbf{H}_t \Delta \boldsymbol{\theta},
\end{aligned}
\end{equation}
where $\mathbf{g}_t \triangleq \nabla_{\boldsymbol{\theta}} \ell(\boldsymbol z_t,\boldsymbol{\theta}_t)$ and
$\mathbf{H}_t \triangleq \nabla_{\boldsymbol{\theta}}^2 \ell(\boldsymbol z_t,\boldsymbol{\theta}_t)$
denote the gradient and Hessian, respectively. Minimizing this quadratic surrogate yields the stationarity condition $\mathbf{g}_t + \mathbf{H}_t \Delta \boldsymbol{\theta} = 0$ (assuming $\mathbf H_t$ is locally invertible or appropriately regularized).
Solving for $\boldsymbol{\theta}$ yields the classical Newton update step:
\begin{equation}
\label{eq:newton_step}
    \boldsymbol{\theta}_{t+1} = \boldsymbol{\theta}_t - \eta \mathbf{H}_t^{-1} \mathbf{g}_t,
\end{equation}
where $\eta$ is the learning rate. Here, $-\mathbf H_t^{-1}\mathbf g_t$ is the minimizer of the local quadratic surrogate, i.e., the \textbf{optimal second-order direction} under this approximation.

\noindent\textbf{Adam Optimizer.}
In practice, LLMs are fine-tuned via the Adam optimizer \citep{kingma2015adam}, utilizing exponential moving averages of gradient moments for adaptive updates. Structurally, the inverse second-moment term functions as a \emph{diagonal surrogate} for the curvature matrix to precondition the optimization steps; we provide the detailed update formulations in Appendix~\ref{sec:appendix_adam}.

\section{Theoretical Analysis}\label{sec:theory}
In this section, we analyze the data selection problem from the perspective of influence functions and optimization geometry. We first establish a unified view connecting data selection to gradient preconditioning. We then scrutinize standard approximations in prior work, specifically diagonal preconditioners from optimizer states, and demonstrate their theoretical failure modes in data selection. Finally, we derive a principled, tractable non-diagonal estimator based on the spectral structure revealed by target gradients.

\subsection{A Unified Optimization View of Data Selection}
\label{subsec:unified_view}
We aim to select a training sample $\boldsymbol{z} \in \mathcal{D}$ such that training on it maximizes the reduction in the validation risk $\mathcal{L}(\mathcal{D}_{\text{val}},\boldsymbol{\theta})$.
To quantify the effect of one-step parameter update on $\mathcal{D}_{\text{val}}$, we perform a first-order Taylor expansion of the validation loss $\mathcal{L}(\mathcal{D}_{\text{val}}, \boldsymbol{\theta})$ around the current parameters $\boldsymbol{\theta}_t$, with $\Delta\boldsymbol{\theta}_t=\boldsymbol{\theta}_{t+1}-\boldsymbol{\theta}_{t}$:
\begin{equation}
\begin{split}
    \mathcal{L}(\mathcal{D}_{\text{val}}, \boldsymbol{\theta}_{t+1}) \approx \mathcal{L}(\mathcal{D}_{\text{val}}, \boldsymbol{\theta}_t) + \nabla_{\boldsymbol{\theta}} \mathcal{L}(\mathcal{D}_{\text{val}}, \boldsymbol{\theta}_t)^\top \Delta\boldsymbol{\theta}_t.
\end{split}
\end{equation}
Ideally, using a Hessian-preconditioned local update as in \citet{koh2017understanding}, the one-step change of the validation loss can be approximated by substituting a preconditioned update $\Delta\boldsymbol{\theta}_t$ into the first-order expansion. In particular, taking $\Delta\boldsymbol{\theta}_t \approx -\eta\,\mathbf H_{\text{val},t}^\dagger \nabla_{\boldsymbol{\theta}}\ell(\boldsymbol z,\boldsymbol{\theta}_t)$ yields
\begin{align}
\Delta \mathcal{L}_{\text{val}}(\boldsymbol{z})
&= \mathcal{L}(\mathcal{D}_{\text{val}}, \boldsymbol{\theta}_{t+1})
 - \mathcal{L}(\mathcal{D}_{\text{val}}, \boldsymbol{\theta}_t)
\notag\\
&\approx -\eta \nabla_{\boldsymbol{\theta}} \mathcal{L}(\mathcal{D}_{\text{val}}, \boldsymbol{\theta}_t)^\top
\mathbf{H}_{\text{val},t}^\dagger
\nabla_{\boldsymbol{\theta}} \ell(\boldsymbol{z}, \boldsymbol{\theta}_t),
\label{eq:delta_loss}
\end{align}
where $\mathbf{H}_{\text{val},t} \triangleq \nabla_{\boldsymbol{\theta}}^2 \mathcal{L}(\mathcal{D}_{\text{val}},\boldsymbol{\theta}_t)$.
\footnote{
When $\mathbf{H}_{\text{val},t}$ is singular or ill-conditioned (typical for over-parameterized models), we use the Moore--Penrose pseudoinverse $\mathbf{H}_{\text{val},t}^\dagger$ (which is same with $\mathbf{H}_{\text{val},t}^{-1}$ when $\mathbf{H}_{\text{val},t}$ is invertible). Equivalently, $\mathbf{H}_{\text{val},t}^\dagger$ yields the minimum-norm solution to $\mathbf{H}_{\text{val},t} \Delta\boldsymbol\theta = -\nabla_{\boldsymbol\theta}\ell(\boldsymbol z,\boldsymbol\theta_t)$ and ignores directions in the null space.}
This matrix characterizes the instantaneous curvature of the target task manifold.
For a subset $S\subseteq\mathcal{D}$, we add the per-sample contributions due to linearity of empirical risk, yielding:
\begin{align}
\Delta \mathcal{L}_{\text{val}}(S)
&\approx \sum_{\boldsymbol{z}\in S}\Delta \mathcal{L}_{\text{val}}(\boldsymbol{z})
\notag\\
&\propto -\nabla_{\boldsymbol{\theta}} \mathcal{L}(\mathcal{D}_{\text{val}}, \boldsymbol{\theta}_t)^\top
\mathbf{H}_{\text{val},t}^\dagger
\nabla_{\boldsymbol{\theta}} \mathcal{L}(S, \boldsymbol{\theta}_t).
\label{eq:subset_infl}
\end{align}

\begin{theorem}[Single-level Approximate Optimization]
\label{thm:bilevel_single}
Fix the current parameters $\boldsymbol{\theta}_t$.
Under the first-order approximation,
Problem~\ref{def:problem} is approximately reduced to maximizing the predicted reduction in validation loss:
\begin{equation}
\label{eq:single_level}
\begin{aligned}
\max_{S\subseteq\mathcal{D}}\quad &
\nabla_{\boldsymbol{\theta}} \mathcal{L}(\mathcal{D}_{\text{val}}, \boldsymbol{\theta}_t)^\top
\mathbf{H}_{\text{val},t}^\dagger
\nabla_{\boldsymbol{\theta}} \mathcal{L}(S, \boldsymbol{\theta}_t) \\
\text{s.t.}\quad & |S|= k,
\end{aligned}
\end{equation}
up to constants independent of $S$ and higher-order terms.
\end{theorem}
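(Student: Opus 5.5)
The plan is to replace the inner $\arg\min$ of Problem~\ref{def:problem} by a single preconditioned step from the fixed point $\boldsymbol{\theta}_t$. This turns the bi-level problem into an explicit function of $S$, to which the first-order expansion of Section~\ref{subsec:unified_view} applies.

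First, I will approximate the lower-level solution $\boldsymbol{\theta}_S^*$ by one local step of the Newton-type update~\eqref{eq:newton_step}. As in \eqref{eq:delta_loss}, the curvature matrix is taken to be the target Hessian $\mathbf{H}_{\text{val},t}$, and the pseudoinverse handles singular directions. This gives
\begin{equation*}
\boldsymbol{\theta}_S^* \approx \boldsymbol{\theta}_t - \eta\,\mathbf{H}_{\text{val},t}^\dagger \nabla_{\boldsymbol{\theta}}\mathcal{L}(S,\boldsymbol{\theta}_t).
\end{equation*}
The gradient of the empirical risk is $\nabla_{\boldsymbol{\theta}}\mathcal{L}(S,\boldsymbol{\theta}_t)=\frac{1}{k}\sum_{\boldsymbol z\in S}\nabla_{\boldsymbol{\theta}}\ell(\boldsymbol z,\boldsymbol{\theta}_t)$, and the factor $1/k$ is constant because $|S|=k$ is fixed. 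The step is therefore the average of the per-sample steps, which matches the additivity in \eqref{eq:subset_infl}.

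Next, I will substitute this approximate $\boldsymbol{\theta}_S^*$ into the upper-level objective and expand $\mathcal{L}(\mathcal{D}_{\text{val}},\cdot)$ to first order around $\boldsymbol{\theta}_t$:
\begin{equation*}
\mathcal{L}(\mathcal{D}_{\text{val}},\boldsymbol{\theta}_S^*) = \mathcal{L}(\mathcal{D}_{\text{val}},\boldsymbol{\theta}_t) - \frac{\eta}{k}\,\nabla_{\boldsymbol{\theta}}\mathcal{L}(\mathcal{D}_{\text{val}},\boldsymbol{\theta}_t)^\top \mathbf{H}_{\text{val},t}^\dagger \sum_{\boldsymbol z\in S}\nabla_{\boldsymbol{\theta}}\ell(\boldsymbol z,\boldsymbol{\theta}_t) + O(\eta^2).
\end{equation*}
The term $\mathcal{L}(\mathcal{D}_{\text{val}},\boldsymbol{\theta}_t)$ does not depend on $S$, and $\eta/k>0$ is a positive constant. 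Minimizing the left side over $\{S:|S|=k\}$ is therefore equivalent, up to the $O(\eta^2)$ remainder, to maximizing the bilinear form in \eqref{eq:single_level}. That is the claimed reduction.

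The main obstacle is justifying the first step, since $\boldsymbol{\theta}_S^*$ is an exact minimizer and not a single step. I would state this explicitly as the modeling assumption behind ``under the first-order approximation'': the update is small (small $\eta$, or the local-quadratic regime around $\boldsymbol{\theta}_t$), so the trajectory from $\boldsymbol{\theta}_t$ is captured by the preconditioned step, as in the influence-function treatment of \citet{koh2017understanding}. All discrepancies, including the use of $\mathbf{H}_{\text{val},t}$ in place of the training Hessian, then fall into the ``higher-order terms'' that the statement explicitly discards. I would also note that the remainder is uniform over $S$ because the per-sample gradients are bounded on the finite set $\mathcal{D}$. This is what licenses comparing the objectives across different $S$.
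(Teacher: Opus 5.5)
Your proposal is essentially the paper's proof. The paper likewise replaces $\boldsymbol{\theta}_S^{*}$ by the single step $\boldsymbol{\theta}_t-\eta\,\mathbf{H}_{\text{val},t}^{\dagger}\nabla_{\boldsymbol{\theta}}\mathcal{L}(S,\boldsymbol{\theta}_t)$, Taylor-expands $\mathcal{L}(\mathcal{D}_{\text{val}},\cdot)$, and drops the $S$-independent constant; it expands to second order and exhibits the quadratic term and remainder as $O(\eta^2)$, whereas you absorb them into one $O(\eta^2)$. Your uniformity remark over the finitely many $S$ is a small addition, but note that using $\mathbf{H}_{\text{val},t}$ in place of the training Hessian is part of the modeling assumption, not an $O(\eta^2)$ term.
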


A full proof is provided in Appendix~\ref{proof:optim}. Theorem~\ref{thm:bilevel_single} reveals a geometric view of targeted data selection:
each candidate subset $S$ is scored by alignment under the metric induced by $\mathbf{H}_{\text{val},t}^\dagger$ between the target gradient
$\nabla_{\boldsymbol{\theta}}\mathcal{L}(\mathcal{D}_{\text{val}},\boldsymbol{\theta}_t)$
and the subset gradient $\nabla_{\boldsymbol{\theta}}\mathcal{L}(S,\boldsymbol{\theta}_t)$.

In modern LLM fine-tuning, however, forming $\mathbf{H}_{\text{val},t}$ (let alone $\mathbf{H}_{\text{val},t}^\dagger$) is intractable.
Consequently, existing selection criteria can be viewed as optimizing Eq.~\eqref{eq:single_level}
under different structural surrogates of $\mathbf{H}_{\text{val},t}^\dagger$ and/or simplified assumptions about the gradient geometry,
which we formalize next.

\noindent\textbf{Hard Example Mining (Assumption: Magnitude Prior).}
Let $\tilde{\mathbf g}_{\text{val},t}\triangleq (\mathbf{H}_{\text{val},t}^\dagger)^{1/2}\nabla_{\boldsymbol{\theta}} \mathcal{L}(\mathcal{D}_{\text{val}},\boldsymbol{\theta}_t)$
and $\tilde{\mathbf g}_t(S)\triangleq (\mathbf{H}_{\text{val},t}^\dagger)^{1/2}\nabla_{\boldsymbol{\theta}} \mathcal{L}(S,\boldsymbol{\theta}_t)$.
Then we have
\begin{equation}
\label{eq:set_utility_decomp}
\begin{aligned}
&\mathbf \nabla_{\boldsymbol{\theta}} \mathcal{L}(\mathcal{D}_{\text{val}})^\top \mathbf{H}_{\text{val,t}}^\dagger \nabla_{\boldsymbol\theta}\mathcal L(S,\boldsymbol\theta_t)
=\tilde{\mathbf g}_{\text{val}}^\top \tilde{\mathbf g}_S \\
=&\|\tilde{\mathbf g}_{\text{val}}\|_2\,\|\tilde{\mathbf g}_S\|_2
\cos\angle(\tilde{\mathbf g}_{\text{val}},\tilde{\mathbf g}_S).
\end{aligned}
\end{equation}
Hard-example mining \emph{implicitly assumes} that the directional term
$\cos\angle(\tilde{\mathbf g}_{\text{val}},\tilde{\mathbf g}_S)$ varies little across candidate subsets (or is weakly informative),
so maximizing the influence is approximated by maximizing the magnitude $\|\tilde{\mathbf g}_S\|_2$.
In practice, hard-mining ranks individual samples by length of tokens \citep{zhao2024long} or perplexity~\citep{yincompute} as a proxy for $\|\mathbf g(\boldsymbol z)\|_2$,
and then forms $S$ from top-ranked samples.
For cross-entropy losses, low predicted probability on the ground-truth token(s) yields both larger loss and larger output-layer residuals, which typically increases the backpropagated gradient norm under mild smoothness/bounded-Jacobian conditions.

\noindent\textbf{Similarity-based Methods (Assumption: Representation Kernel Matching).}
Embedding- or retrieval-based methods like RDS~\cite{zhang2018unreasonable,hanawa2020evaluation} and RDS+~\citep{ivison2025large} \emph{implicitly assume} that the utility/influence in Eq.~\eqref{eq:single_level} can be well-approximated by a similarity kernel defined on a chosen representation (e.g., last-layer hidden states or external embedding models).
Equivalently, they replace the $\mathbf{H}_{\text{val,t}}^\dagger$-induced parameter-space alignment in Eq.~\eqref{eq:single_level}
with a representation-space kernel score and select nearest neighbors of $\mathcal{D}_{\text{val}}$ under this kernel.
This bypasses explicit modeling of the parameter-space metric induced by $\mathbf{H}_{\text{val,t}}^\dagger$ and induces a geometry determined by the chosen representation/kernel,
which may miss task-relevant anisotropy present in parameter space.

\noindent\textbf{Optimizer-based Methods (Assumption: Locally Orthogonal Parameter Coordinates).}
Scalable influence approximations, such as LESS \citep{xia2024less}, \emph{implicitly assume} that the local metric induced by $\mathbf{H}_{\text{val,t}}^\dagger$ is approximately diagonal in the chosen parameter coordinates,
i.e., cross-parameter couplings are negligible and the coordinates are (locally) orthogonal under this metric.
Concretely, they use a diagonal surrogate of the form $\mathrm{diag}(\sqrt{\hat{\mathbf v}_t}+\epsilon)^{-1}$ in Eq.~\eqref{eq:adam_standard} derived from optimizer statistics (e.g., Adam's second-moment estimates).
While this captures element-wise rescaling, it discards off-diagonal correlations in $\mathbf{H}_{\text{val,t}}^\dagger$ and thus cannot represent cross-parameter interactions encoded by curvature, which can change the ranking of utilities when off-diagonal correlations are non-negligible.

\subsection{The Limitation: Diagonal Surrogates Cannot Handle Coupling}
\label{subsec:diagonal_mismatch}

State-of-the-art methods like LESS \citep{xia2024less} typically approximate the optimization geometry using diagonal preconditioners derived from optimizer states (e.g., Adam). This approach implicitly assumes that parameters are coordinate-wise independent.

Figure~\ref{fig:grad_spectra_multi} shows that the validation-gradient matrix concentrates most of its variance
in a low-dimensional principal subspace (rapid spectral decay and early saturation of explained variance),
a consequence of the targeted setting where $\mathrm{rank}(\mathbf{G}_{\mathrm{val}})\le |\mathcal D_{\mathrm{val}}|\ll d$.
Importantly, \emph{low-rank does not imply axis-alignment}. The resulting principal directions are generally
\emph{linear combinations} of coordinates, i.e., a rotated subspace that encodes genuine parameter coupling.

We show that coupling in PEFT is structural rather than incidental.
Under LoRA~\citep{hu2022lora} with $W=W_0+BA$, even a local quadratic model in $W$ produces cross-block curvature between the LoRA factors $A$ and $B$ (Theorem~\ref{thm:lora_offdiag_maintext}).

\begin{theorem}[LoRA induces cross-block curvature]
\label{thm:lora_offdiag_maintext}
Let $\mathcal L(W)$ be twice differentiable and consider the LoRA parameterization
$W=W_0+BA$, where $B\in\mathbb R^{m\times r}$ and $A\in\mathbb R^{r\times n}$.
Let $\mathbf G_W=\nabla_W \mathcal L(W)$ and let $\mathbf H_W$ denote the Hessian with respect to $\mathrm{vec}(W)$.
For any $i\in[m]$, $j\in[n]$, and $k,k'\in[r]$,
\begin{equation}
\label{eq:lora_cross_maintext}
\frac{\partial^2 \mathcal L}{\partial B_{ik'}\partial A_{kj}}
=
\left\langle
\mathbf H_W[B_{:k}e_j^\top],\, e_iA_{k':}
\right\rangle_F
+
\delta_{kk'}(\mathbf G_W)_{ij}.
\end{equation}
Thus, the LoRA-parameter Hessian contains explicit cross-block terms between $A$ and $B$.
In particular, when $k=k'$, the mixed derivative between $B_{ik}$ and $A_{kj}$ contains the additional term
$(\mathbf G_W)_{ij}$, which arises directly from the bilinear parameterization $BA$.
Whenever the right-hand side of Eq.~\eqref{eq:lora_cross_maintext} is nonzero for some $(i,j,k,k')$, the LoRA-parameter Hessian has a nonzero cross-block entry that cannot be represented by a diagonal preconditioner.
\end{theorem}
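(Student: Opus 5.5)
The plan is a direct chain-rule computation on the composite map $(A,B)\mapsto \mathcal L(W_0+BA)$. First-order derivatives come first. Writing $W=W_0+BA$, we have $W_{pq}=(W_0)_{pq}+\sum_{l}B_{pl}A_{lq}$, so
\[
\frac{\partial W}{\partial A_{kj}} = B_{:k}e_j^\top, \qquad \frac{\partial W}{\partial B_{ik'}} = e_iA_{k':},
\]
where $e_i\in\mathbb R^m$ and $e_j\in\mathbb R^n$ are standard basis vectors. By the chain rule this gives
\[
\frac{\partial \mathcal L}{\partial A_{kj}} = \langle \mathbf G_W,\, B_{:k}e_j^\top\rangle_F = \sum_p (\mathbf G_W)_{pj}B_{pk}.
\]

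Next, differentiate this first-order expression with respect to $B_{ik'}$ using the product rule. This produces two terms, which I will handle separately.

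\emph{Curvature term.} Differentiating $\mathbf G_W$ through its dependence on $W$ gives $\frac{\partial \mathbf G_W}{\partial B_{ik'}} = \mathbf H_W[e_iA_{k':}]$. Here $\mathbf H_W[\cdot]$ is the Hessian acting as a linear map on $m\times n$ matrices, identified via $\mathrm{vec}$. Pairing with $B_{:k}e_j^\top$ yields $\langle \mathbf H_W[e_iA_{k':}],\,B_{:k}e_j^\top\rangle_F$. Since $\mathcal L$ is twice differentiable, the Hessian is symmetric (Schwarz). This symmetry lets me rewrite the term as $\langle \mathbf H_W[B_{:k}e_j^\top],\,e_iA_{k':}\rangle_F$, which matches the stated form.

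\emph{Parameterization term.} Differentiating the explicit factor $B_{pk}$ gives $\partial B_{pk}/\partial B_{ik'}=\delta_{pi}\delta_{kk'}$. Substituting into $\sum_p (\mathbf G_W)_{pj}B_{pk}$ leaves exactly $\delta_{kk'}(\mathbf G_W)_{ij}$. Adding the two terms establishes Eq.~\eqref{eq:lora_cross_maintext}.

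For the concluding claims, observe that the $(B_{ik'},A_{kj})$ entry of the Hessian in the stacked variable $(\mathrm{vec}(A),\mathrm{vec}(B))$ is by definition an off-diagonal entry, because $A$ and $B$ are disjoint coordinate blocks. Any diagonal preconditioner has zero in every such position. Therefore, whenever the right-hand side is nonzero, the Hessian has a nonzero cross-block entry that a diagonal matrix cannot represent. As a remark, when $k=k'$ the term $(\mathbf G_W)_{ij}$ persists even if $\mathbf H_W=0$ (e.g., locally linear $\mathcal L$), which shows that the coupling is structural.

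The main obstacle is bookkeeping rather than any conceptual difficulty. I need to fix the convention for $\mathbf H_W[\cdot]$ as a symmetric bilinear form on matrices so that the Frobenius pairing and the symmetry step are justified. I also need to make sure the index $k$ versus $k'$ lands correctly in the Kronecker delta.
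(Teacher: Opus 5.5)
Your proposal is correct and matches the paper's proof essentially step for step. Like the paper, you compute $\partial W/\partial A_{kj}=B_{:k}e_j^\top$ and $\partial W/\partial B_{ik'}=e_iA_{k':}$, differentiate $\langle \mathbf G_W, B_{:k}e_j^\top\rangle_F$ in $B_{ik'}$ by the product rule, use Hessian symmetry to put the curvature term in the stated form, and read off $\delta_{kk'}(\mathbf G_W)_{ij}$ from the explicit dependence on $B$; your remark that a diagonal preconditioner is zero in every $(B,A)$ cross-block position just makes the paper's closing sentence explicit.
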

Theorem~\ref{thm:lora_offdiag_maintext} shows that coupling is not an artifact of optimization noise:
it is \emph{structurally induced} by the bilinear parameterization $BA$ and manifests as genuine cross-block curvature
between the LoRA factors $A$ and $B$. The full proof is provided in Appendix~\ref{app:lora_offdiag_proof}.

Geometrically, a diagonal preconditioner can only \emph{rescale} coordinate axes; it cannot represent the
\emph{rotation/shear} encoded by off-diagonal curvature. Concretely, even in 2D, if
$\mathbf{H}=\begin{bmatrix}1 & \rho\\ \rho & 1\end{bmatrix}$ with $\rho\neq 0$,
then for any diagonal $\mathbf{D}=\mathrm{diag}(d_1,d_2)$,
\begin{equation}
\|\mathbf{H}-\mathbf{D}\|_F^2 = (1-d_1)^2 + (1-d_2)^2 + 2\rho^2 \;\ge\; 2\rho^2,
\end{equation}
so a diagonal surrogate suffers an irreducible error floor whenever curvature is rotated/coupled.

\begin{figure}[t]
    \centering
    \begin{subfigure}[t]{0.235\textwidth}
        \centering
        \includegraphics[width=\linewidth]{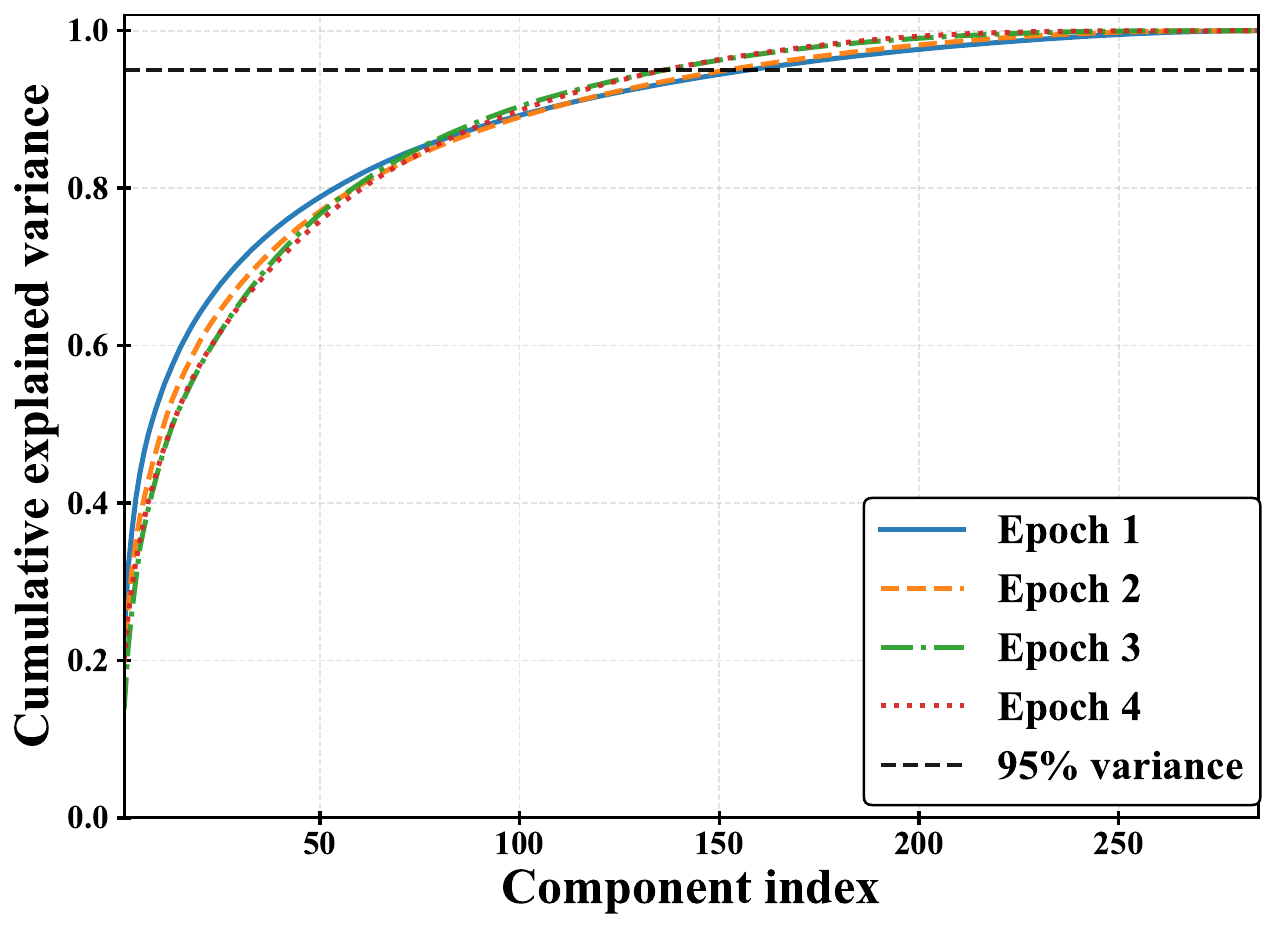}
        \caption{Cumulative explained variance.}
        \label{fig:ev_multi}
    \end{subfigure}
    \hfill
    \begin{subfigure}[t]{0.235\textwidth}
        \centering
        \includegraphics[width=\linewidth]{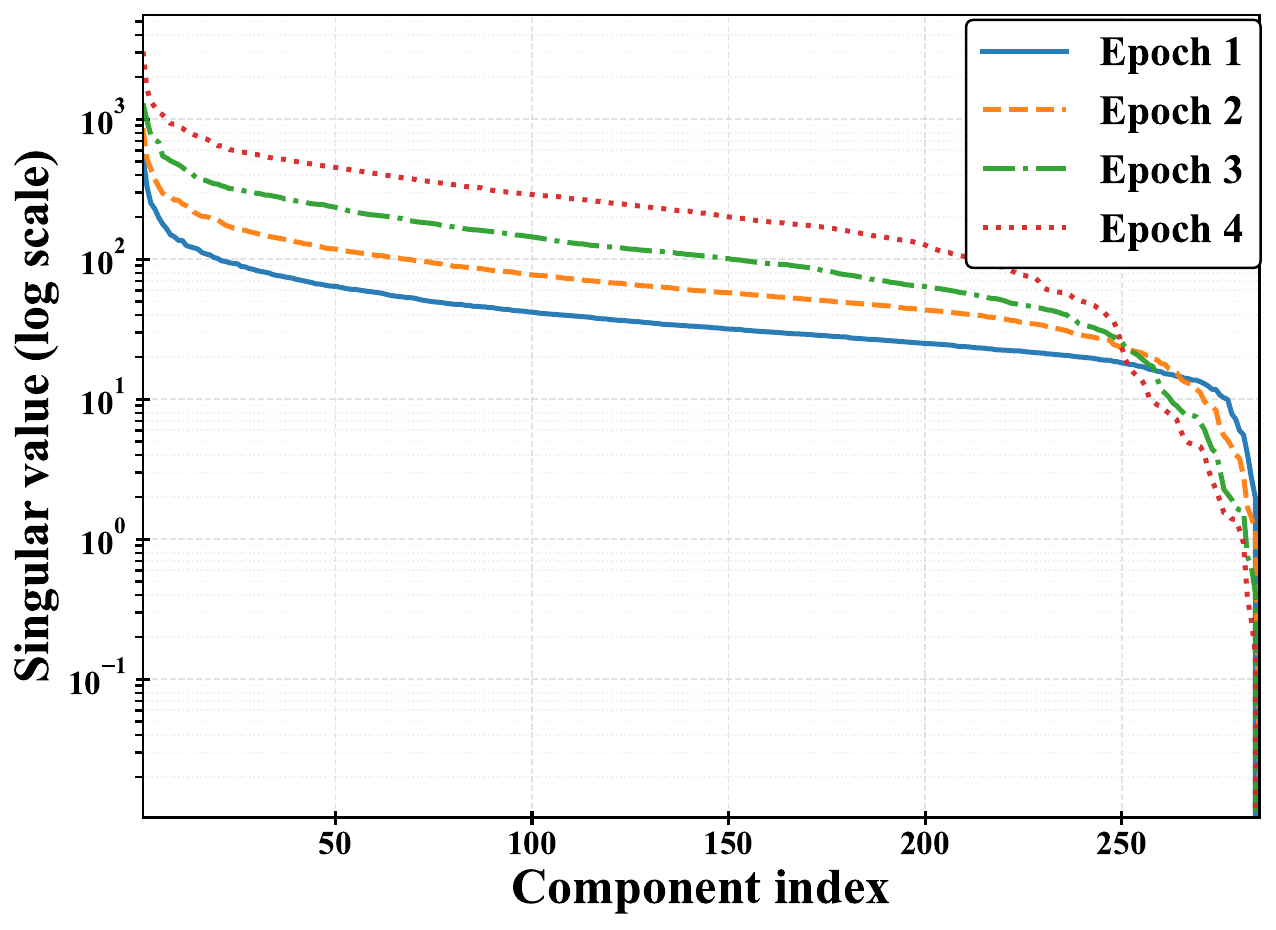}
        \caption{Singular value spectrum.}
        \label{fig:sv_multi}
    \end{subfigure}
    \vspace{-2mm}
    \caption{Spectral analysis of the \textsc{MMLU} validation gradient $\mathbf{G}_{\text{val}}$ on Llama2-7B. We decompose the gradient matrix via SVD to obtain singular values $\sigma_i$. \textbf{(a)} Cumulative explained variance. A steeper curve indicates that a \textbf{smaller principal subspace dimension} is sufficient to capture the majority of the variance (e.g., Rank $150$ captures $95\%$), confirming high directional information density. \textbf{(b)} The singular values ($\sigma_k$) exhibit precipitous decay, further verifying the intrinsic low-rank structure.}
    \label{fig:grad_spectra_multi}
    \vspace{-3mm}
\end{figure}

\subsection{The Solution: Optimal Geometry Recovery via Spectral Filtering}
\label{subsec:spectral_optimality}

Section~\ref{subsec:diagonal_mismatch} suggests that diagonal surrogates cannot represent the rotated, coupled geometry that governs targeted improvement. 
Thus, rather than refining a diagonal scaling, we construct a tractable \emph{non-diagonal} operator that captures cross-parameter couplings while remaining computable at LLM scale. 
In the ideal influence proxy (Eq.~\eqref{eq:single_level}), the metric $\mathbf H_{\mathrm{val},t}^\dagger$ measures alignment between the target gradient and a candidate update, but forming $\mathbf H_{\mathrm{val},t}$ (let alone $\mathbf H_{\mathrm{val},t}^\dagger$) is infeasible. 
Crucially, for ranking data we do not need an entrywise approximation of $\mathbf H_{\mathrm{val},t}$; it suffices to recover a low-dimensional \emph{coupled subspace} that concentrates the dominant task-relevant directions, which diagonal methods cannot express by construction.

Our construction starts from per-example validation gradients, which are already required to define the targeted direction. 
Let $\mathbf G_{\mathrm{val},t}\in\mathbb R^{d\times |\mathcal D_{\mathrm{val}}|}$ be the stacked gradient matrix whose columns are $\nabla_{\boldsymbol\theta}\ell(\boldsymbol z_{\mathrm{val}},\boldsymbol\theta_t)$ for $\boldsymbol z_{\mathrm{val}}\in\mathcal D_{\mathrm{val}}$. 
Define the gradient covariance proxy
\begin{equation}
\widehat{\mathbf F}_{\mathrm{val},t}\ \triangleq\ \mathbf G_{\mathrm{val},t}\mathbf G_{\mathrm{val},t}^\top\ \in\ \mathbb R^{d\times d},
\end{equation}
which is \emph{positive semidefinite (PSD)} and \emph{non-diagonal}.

\begin{figure*}[t]
    \centering
    \includegraphics[width=\textwidth]{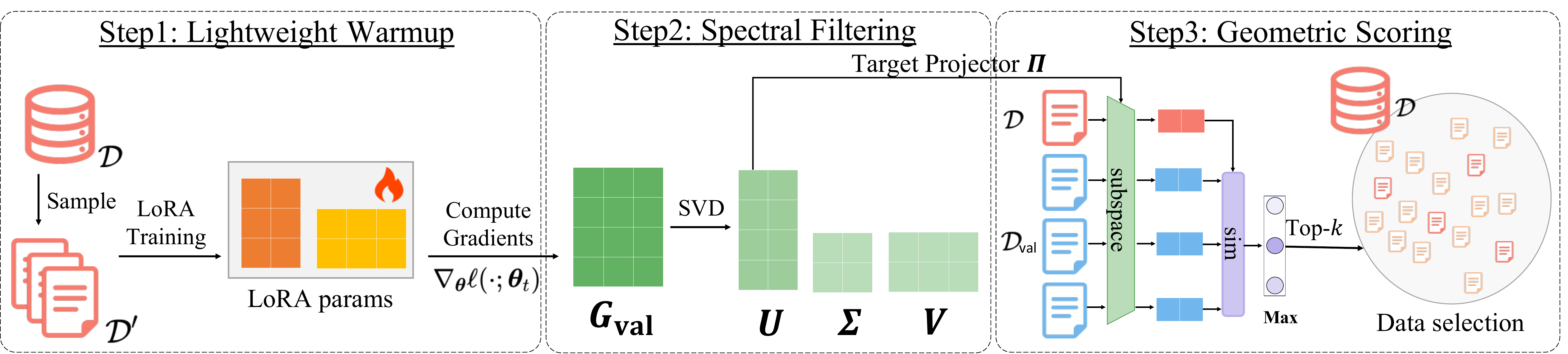}
    \caption{\textbf{Overview of \texttt{GIST}.}
    \textbf{Step 1: Lightweight warmup} performs a short LoRA warmup on a sampled subset and computes validation gradients.
    \textbf{Step 2: Spectral filtering} applies an SVD on the validation gradient matrix to construct a low-rank target subspace (Target projector).
    \textbf{Step 3: Geometric scoring} projects candidate gradients onto the target subspace and selects Top-$k$ samples.}
    \label{fig:gist_pipeline}
    \vspace{-3mm}
\end{figure*}

\begin{theorem}[Eigenspace stability of the proxy]
\label{thm:subspace_consistency}
Let $\mathcal S_r(\mathbf M)$ denote the top-$r$ eigenspace of a PSD matrix $\mathbf M$, and let $\Theta(\cdot,\cdot)$ denote the principal-angle matrix between two subspaces. 
For negative log-likelihood (NLL) objectives, the per-example Hessian admits the classical Gauss--Newton/Fisher decomposition, and there exists a quantity $\varepsilon_t\ge 0$ 
that summarizes (i) the residual-curvature magnitude and (ii) the proxy mismatch, such that
\begin{equation}
\label{eq:subspace_consistency}
\big\|\sin\Theta\big(\mathcal S_r(\mathbf H_{\mathrm{val},t}),\ \mathcal S_r(\widehat{\mathbf F}_{\mathrm{val},t})\big)\big\|_2
\ \le\ C\,\varepsilon_t,
\end{equation}
where the constant $C$ depends only on the leading eigengap of the validation curvature.
\end{theorem}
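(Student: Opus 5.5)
The plan is to reduce the statement to a Davis--Kahan $\sin\Theta$ perturbation bound between two symmetric matrices, after a Gauss--Newton decomposition that exposes $\widehat{\mathbf F}_{\mathrm{val},t}$ as the principal part of $\mathbf H_{\mathrm{val},t}$ up to scaling.

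\textbf{Step 1: Gauss--Newton/Fisher decomposition.} For each validation example, write the NLL loss as $\ell(\boldsymbol z,\boldsymbol\theta)=-\log p(\boldsymbol y\mid f_{\boldsymbol\theta}(\boldsymbol x))$ with Jacobian $\mathbf J_z=\partial f/\partial\boldsymbol\theta$. The chain rule gives
\begin{equation*}
\nabla^2_{\boldsymbol\theta}\ell=\mathbf J_z^\top \nabla_f^2\ell\,\mathbf J_z+\sum_c (\partial_{f_c}\ell)\,\nabla^2_{\boldsymbol\theta} f_c .
\end{equation*}
The first term is the generalized Gauss--Newton matrix. For NLL with a softmax output, its expectation under the model's predictive distribution equals the Fisher $\mathbb E[\nabla\ell\nabla\ell^\top]$. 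Averaging over $\mathcal D_{\mathrm{val}}$ yields
\begin{equation*}
\mathbf H_{\mathrm{val},t}=\mathbf F_{\mathrm{val},t}+\mathbf R_t,
\end{equation*}
where $\mathbf R_t$ is the residual curvature, i.e.\ the second-order network term weighted by the output residuals.

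\textbf{Step 2: Relating the Fisher to the proxy.} The empirical proxy satisfies $\frac{1}{|\mathcal D_{\mathrm{val}}|}\widehat{\mathbf F}_{\mathrm{val},t}=\mathbf F^{\mathrm{emp}}_{\mathrm{val},t}$, the empirical Fisher, which uses observed labels instead of model samples. Define the proxy mismatch $\mathbf E_t=\mathbf F_{\mathrm{val},t}-\mathbf F^{\mathrm{emp}}_{\mathrm{val},t}$. Top-$r$ eigenspaces are invariant under positive scaling, so $\mathcal S_r(\widehat{\mathbf F}_{\mathrm{val},t})=\mathcal S_r(\mathbf F^{\mathrm{emp}}_{\mathrm{val},t})$. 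Then
\begin{equation*}
\mathbf H_{\mathrm{val},t}-\tfrac{1}{|\mathcal D_{\mathrm{val}}|}\widehat{\mathbf F}_{\mathrm{val},t}=\mathbf R_t+\mathbf E_t .
\end{equation*}
Set $\varepsilon_t\triangleq\|\mathbf R_t\|_2+\|\mathbf E_t\|_2$. This is exactly the two-part quantity the theorem refers to, and it is $\ge 0$ by construction.

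\textbf{Step 3: Davis--Kahan.} Apply the Davis--Kahan $\sin\Theta$ theorem in its Yu--Wang--Samworth form to the symmetric pair $\mathbf H_{\mathrm{val},t}$ and $\frac{1}{|\mathcal D_{\mathrm{val}}|}\widehat{\mathbf F}_{\mathrm{val},t}$. This gives
\begin{equation*}
\|\sin\Theta\|_2\le \frac{2\|\mathbf R_t+\mathbf E_t\|_2}{\delta_r}\le \frac{2}{\delta_r}\,\varepsilon_t,
\end{equation*}
where $\delta_r=\lambda_r(\mathbf H_{\mathrm{val},t})-\lambda_{r+1}(\mathbf H_{\mathrm{val},t})>0$ is the eigengap of the validation curvature. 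Hence $C=2/\delta_r$, which depends only on that eigengap, as claimed. The Yu--Wang--Samworth variant requires the gap only of the unperturbed matrix $\mathbf H$, not of the proxy, so no condition on $\widehat{\mathbf F}$'s spectrum is needed. If $\varepsilon_t\ge\delta_r/2$, the bound exceeds $1$ and holds trivially because $\|\sin\Theta\|_2\le 1$.

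\textbf{Main obstacle.} Davis--Kahan as stated needs $\mathbf H_{\mathrm{val},t}$ symmetric, with an ordering of eigenvalues under which "top-$r$" is well defined. The Hessian may be indefinite, while the lemma's notation $\mathcal S_r(\mathbf M)$ is for PSD $\mathbf M$. I would handle this by interpreting $\mathcal S_r$ as the eigenspace of the $r$ largest algebraic eigenvalues; Davis--Kahan applies to arbitrary symmetric matrices, so the argument goes through. Alternatively, one can assume local convexity near $\boldsymbol\theta_t$. The Gauss--Newton part is PSD and $\mathbf R_t$ is small when residuals are small, so the dominant spectrum is nonnegative anyway.

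A secondary subtlety is the normalization constant between $\widehat{\mathbf F}$ and the averaged Hessian. Choosing the scaling $1/|\mathcal D_{\mathrm{val}}|$ inside the perturbation, while using scale invariance of the eigenspace, resolves it.
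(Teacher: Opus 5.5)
Your proof is correct and has the same skeleton as the paper's. Both use the Gauss--Newton split $\mathbf H_{\mathrm{val},t}=\mathbf F_{\mathrm{val},t}+\mathbf R_t$ and the same $\varepsilon_t=\|\mathbf R_t\|_2+\|\mathbf F_{\mathrm{val},t}-\widehat{\mathbf F}_{\mathrm{val},t}\|_2$; the paper simply builds the factor $1/n_{\mathrm{val}}$ into $\widehat{\mathbf F}_{\mathrm{val},t}$. Both then apply the triangle inequality and a Davis--Kahan bound.

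The real difference is the perturbation lemma. The paper assumes an eigengap $\gamma_t$ of the proxy $\widehat{\mathbf F}_{\mathrm{val},t}$. It quotes Davis--Kahan as $\|\sin\Theta\|_2\le\|\mathbf A-\mathbf B\|_2/\gamma$, with $\gamma$ the gap of $\mathbf B$ only, and gets $C=1/\gamma_t$. You instead assume the gap $\delta_r$ of $\mathbf H_{\mathrm{val},t}$ and use the Yu--Wang--Samworth form, getting $C=2/\delta_r$.

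Your version is the sound one. When the gap is assumed on only one matrix, the constant $1$ is false even for PSD matrices. Take $r=1$, $\mathbf B=\mathrm{diag}(1,0)$ and $\mathbf A=\mathrm{diag}(\tfrac12-\epsilon,\tfrac12+\epsilon)$ with $0<\epsilon<\tfrac12$. Then $\|\sin\Theta\|_2=1$, but $\|\mathbf A-\mathbf B\|_2/\gamma=\tfrac12+\epsilon$. A valid bound needs either the factor $2$ or a denominator such as $\gamma-\|\mathbf A-\mathbf B\|_2$. That is exactly the Weyl-plus-trivial-case reasoning you sketch. Your placement of the gap also matches the theorem's wording (``eigengap of the validation curvature''), whereas the paper's proof actually uses the gap of the proxy.

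What the paper's placement buys is checkability. Its gap $\gamma_t=(\sigma_r^2-\sigma_{r+1}^2)/n_{\mathrm{val}}$ can be read directly off the SVD that GIST already computes, while $\delta_r$ involves the unobservable Hessian. The factor-$2$ bound is symmetric in which matrix carries the gap, so you could state your result with either gap.
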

\vspace{-2mm}
A full proof and formal conditions are provided in Appendix~\ref{proof:jtj_hessian}.
Theorem~\ref{thm:subspace_consistency} implies that exact entrywise matching is unnecessary; instead, \textbf{it suffices for the loss to decrease to a low magnitude} to bound the subspace approximation error.
This insight elucidates the requirement of a warm-up phase: it is required to steer the optimization trajectory out of the initial high-noise regime into a local basin where the curvature stabilizes.
Crucially, since pre-trained models facilitate a rapid initial descent, this warm-up can be \textbf{extremely brief}.
Empirical validation of these dynamics is provided in Appendix~\ref{app:training_analysis}.

We therefore recover the task subspace directly from $\mathbf G_{\mathrm{val},t}$ via a compact SVD:
\begin{equation}\label{eq:compact_svd}
\mathbf G_{\mathrm{val},t}=\mathbf U_t\mathbf\Sigma_t\mathbf V_t^\top,
\end{equation}
and define the rank-$r$ projector $\mathbf P_r\triangleq \mathbf U_{t,r}\mathbf U_{t,r}^\top$ using the top-$r$ left singular vectors. 
By Eckart--Young--Mirsky \cite{eckart1936approximation}, $\mathbf P_r$ is the optimal rank-$r$ linear operator for capturing the dominant parameter-space variation of $\mathbf G_{\mathrm{val},t}$ under squared reconstruction error. 
Importantly, $\mathbf P_r$ is explicitly non-diagonal: it represents a rotation in parameter space and can encode the coupled directions that diagonal preconditioners provably miss, yielding a tractable surrogate geometry for comparing candidate updates in Eq.~\eqref{eq:single_level}.

\section{\texttt{GIST}: The Proposed Methodology}
Based on the theoretical analysis in Section~\ref{sec:theory}, we propose \texttt{GIST}, a data selection framework that leverages spectral subspace filtering to robustly identify high-value training samples. \texttt{GIST} operates in three main steps: (1) capturing gradient trajectories via lightweight LoRA warmup, (2) extracting the intrinsic target geometry via SVD, and (3) selecting data based on projected alignment. Full pipeline is illustrated in Fig.~\ref{fig:gist_pipeline}.

\subsection{Subspace Gradient Feature Computation}

\noindent\textbf{Step 1: Trajectory Collection via Lightweight Warmup.}
To efficiently approximate the evolution of the optimization landscape without incurring the cost of full training, we perform a lightweight warmup.
We randomly sample a small subset of the candidate pool (e.g., $5\%$), denoted as $\mathcal{D}' \subset \mathcal{D}$, and fine-tune the base LLM on $\mathcal{D}'$ using LoRA adapters \citep{hu2022lora} for an epoch.
We collect the checkpoint $\boldsymbol\theta_t$.
Let $d$ denote the number of trainable LoRA parameters. For each target example $\boldsymbol z_{\text{val}} \in \mathcal{D}_{\text{val}}$ and candidate example $\boldsymbol z \in \mathcal{D}$, we compute their gradients $\nabla_{\boldsymbol\theta}\ell(\cdot;\boldsymbol\theta_t) \in \mathbb R^d$ in the LoRA parameter space.

\noindent\textbf{Step 2: Extracting the Task Subspace.}
Construct the target gradient matrix $\mathbf G_{\text{val},t} \in \mathbb{R}^{d \times |\mathcal{D}_{\text{val}}|}$.
Then we perform Singular Value Decomposition (SVD) on $\mathbf G_{\text{val},t}$:
\begin{equation}
\mathbf G_{\text{val},t} = \mathbf{U}_t \mathbf{\Sigma}_t \mathbf{V}_t^\top.
\end{equation}
We determine the effective rank $r_t$ by analyzing the singular value spectrum (e.g., capturing 95\% explained variance).
The \textbf{Target Projector} is then defined by the top-$r$ left singular vectors:
\begin{equation}
\boldsymbol{\Pi} \triangleq \mathbf{U}_r^\top \in \mathbb{R}^{r \times d}.
\end{equation}
This operator captures the principal directions of the task while discarding the orthogonal noise.

\noindent\textbf{Step 3: Geometric Scoring via Projected Alignment.}
Let $\mathbf g_{\text{val},t}^{(j)} \triangleq \nabla_{\boldsymbol\theta}\ell(\boldsymbol z_{\text{val}}^{(j)};\boldsymbol\theta_t)\in\mathbb R^d$. With the target projector $\boldsymbol{\Pi}$, we map gradients into the $r$-dimensional target subspace.
For a candidate $\boldsymbol z_i$ and a specific target example $\boldsymbol z_{\text{val}}^{(j)} \in \mathcal{D}_{\text{val}}$, we define their \textbf{Subspace Alignment Score} at step $t$ as the cosine similarity of their projected gradients:
\begin{equation}
\label{eq:pairwise_score}
\text{Sim}_t(\boldsymbol z_i, \boldsymbol z_{\text{val}}^{(j)}) \triangleq 
\frac{(\boldsymbol{\Pi} \mathbf g_{i,t})^\top (\boldsymbol{\Pi} \mathbf g_{\text{val},t}^{(j)})}
{\| \boldsymbol{\Pi} \mathbf g_{i,t} \|_2 \, \| \boldsymbol{\Pi} \mathbf g_{\text{val},t}^{(j)} \|_2},
\end{equation}
Note that $\mathbf g_i=\nabla_{\boldsymbol\theta}\ell(\boldsymbol z_i;\boldsymbol\theta_t)$ aggregates token-level contributions (average over tokens), so its norm can vary substantially with sequence length and local loss scale.
To prevent such magnitude effects from dominating the ranking, we use cosine similarity after projecting to the learned task subspace: normalization removes per-example scale variations, while $\boldsymbol\Pi$ retains only the coupled directions that are reproducible on $\mathcal D_{\text{val}}$. 
As a result, \texttt{GIST} scores candidates by directional agreement with the target geometry, rather than by raw gradient inner product that can be confounded by token aggregation.

\subsection{Multi-Task Aggregation and Selection}
\label{subsec:multitask}

In standard setting, the target set $\mathcal D_{\text{val}} = \{\boldsymbol z_{\text{val}}^{(1)}, \dots, \boldsymbol z_{\text{val}}^{(M)}\}$ is constructed such that each example represents a distinct task or capability.
A candidate data point might be highly relevant to one specific task (e.g., math) but orthogonal to others (e.g., coding). Averaging the target gradients 
would dilute these specific gradient directions.
Instead, we employ a \textbf{Maximum Relevance} strategy, following \citet{xia2024less,ivison2025large}. For each candidate $\boldsymbol z_i$, we compute its alignment with every target example $j \in \{1, \dots, M\}$ using Eq.~\eqref{eq:pairwise_score} and retain the maximum score:
\begin{equation}
\text{FinalScore}(\boldsymbol z_i) = \max_{j \in \{1, \dots, M\}} \text{Sim}_t(\boldsymbol z_i, \boldsymbol z_{\text{val}}^{(j)}).
\end{equation}
This strategy prioritizes \emph{specialist} candidates that effectively support at least one target requirement, regardless of their utility for others.
Finally, we select the top-$k$ candidates with the highest FinalScore for fine-tuning.

\section{Experiments}
\subsection{Experimental Setup}
\label{subsec:setup}
\noindent\textbf{Training Datasets.} We adopt the settings from \citet{xia2024less} and utilize a heterogeneous pool of instruction tuning data, comprising approximately 270K examples. The pool integrates (1) task-specific datasets aggregated from sources like \textsc{flan v2}~\cite{longpre2023flan} and \textsc{cot}~\cite{wei2022chain}, with (2) open-ended, human-authored generation datasets such as \textsc{dolly}~\cite{conover2023free} and \textsc{open assistant 1}~\cite{kopf2023openassistant}. These sources exhibit significant variance in both instruction format and underlying reasoning logic. We note that this pool is designed to be disjoint from the target domain, ensuring that the selection process is tested in a rigorous transfer setting. See Appendix~\ref{app:train_data} for additional details.
\vspace{-3mm}
\begin{table}[h]
    \caption{Statistics of evaluation datasets. We select tasks covering diverse output formats, including multiple-choice, extractive spans, and generative reasoning.}
    \label{tab:eval_dataset}
    \centering
    \resizebox{0.49\textwidth}{!}{%
    \begin{tabular}{l c c rr l}
    \toprule
    & & & \multicolumn{2}{c}{\textbf{Data Size}} & \\
    \cmidrule(lr){4-5}
    \textbf{Dataset} & \textbf{Shots} & \textbf{Tasks} & \textbf{Val.} & \textbf{Test} & \textbf{Output Format} \\ 
    \midrule
    \textsc{MMLU}   & 5 & 57 & 285 & 18,721 & Multiple Choice \\ 
    \textsc{TyDiQA} & 1 & 9  & 9   & 1,713  & Extractive Span \\
    \textsc{BBH}    & 3 & 23 & 81  & 920    & Generation (CoT) \\ 
    \bottomrule 
    \end{tabular}%
    }
    \vspace{-2mm}
\end{table}

\noindent\textbf{Evaluation datasets.}
Following the evaluation protocol established by \citet{xia2024less}, we benchmark our method on three diverse datasets: \textsc{MMLU}~\cite{hendrycks2020measuring}, \textsc{TydiQA}~\cite{clark2020tydi}, and \textsc{BBH}~\cite{suzgun2023challenging}.
\textsc{MMLU} serves as a broad knowledge test, covering 57 subjects ranging from elementary mathematics to law.
\textsc{TydiQA} evaluates multilingual reading comprehension across 9 typologically diverse languages, requiring the model to extract exact answers from passages.
\textsc{BBH} is a curated suite of 27 challenging tasks from BIG-Bench, designed to probe complex reasoning capabilities.
Table~\ref{tab:eval_dataset} provides detailed statistics.
Consistent with \citet{xia2024less}, we utilize the few-shot examples provided in each subtask for a dual purpose: they serve as the target set $\mathcal{D}_{\text{val}}$ to guide our data selection and subsequently act as in-context demonstrations during evaluation. See Appendix~\ref{app:eval} for further details.

\noindent\textbf{Models for data selection and training.}
We evaluate \texttt{GIST} with three representative base models: Llama2-7B~\citep{touvron2023llama}, Llama3.2-3B~\citep{dubey2024llama}, and Qwen2.5-1.5B~\citep{qwen2.5}.
Following standard efficient fine-tuning protocols, all training stages are conducted using LoRA~\cite{hu2022lora}.
We report the average performance and standard deviation across three random seeds.
Appendix~\ref{app:training_setup} provides further implementation details.

\subsection{Baselines}
We compare \texttt{GIST} against methods spanning four categories. First, as a naive baseline, we employ \textbf{Random Selection}. Second, we evaluate heuristics prioritizing intrinsic difficulty, including \textbf{Length} \citep{zhao2024long} and \textbf{PPL.} \citep{yincompute,antonello2021selecting}, where we select examples with the \emph{highest} perplexity. Third, we include similarity-based retrievers targeting semantic proximity to the validation set: standard \textbf{Embedding} similarity via GTR-Base \citep{ni2022large} and \textbf{RDS+} \citep{ivison2025large}, which exploits the model’s own hidden states. Finally, we compare against \textbf{LESS} \citep{xia2024less}, the state-of-the-art optimizer-based method selecting data via gradient projection and Adam-state rescaling. See Appendix~\ref{app:baseline_details} for details.

\begin{table*}[t]
\centering
\caption{Accuracy across datasets and models. \textit{Base} denotes 0\% (no selection) and \textit{Full} denotes 100\% (full dataset). All other methods select 5\% of the data under the same finetuning budget. Gray $\pm$ values report standard deviations. \textbf{Bold} numbers denote the best selected subset in each row, and \underline{underlined} numbers denote the second best selected subset. Avg.\ $\Delta$ reports the average absolute improvement over \textit{Base} over \textsc{MMLU}, \textsc{TydiQA} and \textsc{BBH}.}
\scalebox{0.85}{
\begin{tabular}{l|l|cccccccc>{\columncolor{gray!20}}c}
\toprule
\textbf{Model} & \textbf{Dataset} &
\textbf{Base \textcolor{gray}{\scriptsize(0\%)}} &
\textbf{Full \textcolor{gray}{\scriptsize(100\%)}} & \textbf{Rand.} & \textbf{Length} & \textbf{PPL.} &
\textbf{Embed.} & \textbf{RDS+} & \textbf{LESS} & \textbf{\texttt{GIST}} \\
\midrule

\multirow{4}{*}{Llama2-7B}
 & \textsc{MMLU}              & \textit{45.6} & \textit{51.6} & 46.5\textcolor{gray}{\scriptsize$\pm$0.5} & \underline{50.5}\textcolor{gray}{\scriptsize$\pm$0.3} 
 & 38.9\textcolor{gray}{\scriptsize$\pm$0.9} & 47.3\textcolor{gray}{\scriptsize$\pm$0.3} & 46.1\textcolor{gray}{\scriptsize$\pm$0.6} & 50.2\textcolor{gray}{\scriptsize$\pm$0.5} & \textbf{51.2}\textcolor{gray}{\scriptsize$\pm$0.7} \\
 & \textsc{TydiQA} & \textit{46.4} & \textit{54} & 52.7\textcolor{gray}{\scriptsize$\pm$0.4} & 51.1\textcolor{gray}{\scriptsize$\pm$0.3} & 32.9\textcolor{gray}{\scriptsize$\pm$0.4} & 49.8\textcolor{gray}{\scriptsize$\pm$0.8} & 51.0\textcolor{gray}{\scriptsize$\pm$0.3} & \textbf{56.2}\textcolor{gray}{\scriptsize$\pm$0.7}& \underline{55.8}\textcolor{gray}{\scriptsize$\pm$0.6} \\
 & \textsc{BBH}            & \textit{38.3} & \textit{43.2} & 38.9\textcolor{gray}{\scriptsize$\pm$0.5} & 39.8\textcolor{gray}{\scriptsize$\pm$0.7} & 34.5\textcolor{gray}{\scriptsize$\pm$0.2} & 38.6\textcolor{gray}{\scriptsize$\pm$0.6} & 39.0\textcolor{gray}{\scriptsize$\pm$0.7} & \underline{41.5}\textcolor{gray}{\scriptsize$\pm$0.6}  & \textbf{41.8}\textcolor{gray}{\scriptsize$\pm$0.8} \\
 & Avg. $\Delta$ & -- & +\textit{6.2} & +2.6 & +3.7 & -8.0 & +1.8 & +1.9 & \underline{+5.9} & \textbf{+6.2} \\
\midrule

\multirow{4}{*}{Llama3.2-3B}
 & \textsc{MMLU}              & \textit{53.9} & \textit{55.2} & 53.2\textcolor{gray}{\scriptsize$\pm$0.6} & \textbf{57.6}\textcolor{gray}{\scriptsize$\pm$0.9} & 51.8\textcolor{gray}{\scriptsize$\pm$0.3} & 53.9\textcolor{gray}{\scriptsize$\pm$0.6} & 51.9\textcolor{gray}{\scriptsize$\pm$0.2} & \underline{56.5}\textcolor{gray}{\scriptsize$\pm$0.6} & 56.1\textcolor{gray}{\scriptsize$\pm$0.4} \\
 & \textsc{TydiQA}               & \textit{60.4} & \textit{66.6} & 64.1\textcolor{gray}{\scriptsize$\pm$0.4} & 63.9\textcolor{gray}{\scriptsize$\pm$1.3} & 57.1\textcolor{gray}{\scriptsize$\pm$0.7} & 62.8\textcolor{gray}{\scriptsize$\pm$0.4} & 62.6\textcolor{gray}{\scriptsize$\pm$0.5} & \underline{67.1}\textcolor{gray}{\scriptsize$\pm$0.8} & \textbf{69.2}\textcolor{gray}{\scriptsize$\pm$0.3} \\
 & \textsc{BBH}            & \textit{45.5} & \textit{47.8} & 45.1\textcolor{gray}{\scriptsize$\pm$0.2} & 45.3\textcolor{gray}{\scriptsize$\pm$0.3} & 43.0\textcolor{gray}{\scriptsize$\pm$0.4} & 44.6\textcolor{gray}{\scriptsize$\pm$0.5} & 45.1\textcolor{gray}{\scriptsize$\pm$0.4} & \underline{46.1}\textcolor{gray}{\scriptsize$\pm$0.7} & \textbf{48.0}\textcolor{gray}{\scriptsize$\pm$0.5} \\
 & Avg. $\Delta$ & -- & \textit{+3.3} & +0.9 & +2.3 & -2.6 & +0.5 & -0.1 & \underline{+3.3} & \textbf{+4.5} \\
\midrule

\multirow{4}{*}{Qwen2.5-1.5B}
 & \textsc{MMLU}              & \textit{62.0}  & \textit{62.3} & 61.5\textcolor{gray}{\scriptsize$\pm$0.3} & \underline{62.8}\textcolor{gray}{\scriptsize$\pm$0.1} & 61.7\textcolor{gray}{\scriptsize$\pm$0.2} & 62.3\textcolor{gray}{\scriptsize$\pm$0.2} & 62.4\textcolor{gray}{\scriptsize$\pm$0.2} & 62.2\textcolor{gray}{\scriptsize$\pm$0.2} & \textbf{62.9}\textcolor{gray}{\scriptsize$\pm$0.6} \\
 & \textsc{TydiQA}               & \textit{57.8} & \textit{59.9} & 55.6\textcolor{gray}{\scriptsize$\pm$0.3} & 56.7\textcolor{gray}{\scriptsize$\pm$0.3}
 & 58.4\textcolor{gray}{\scriptsize$\pm$0.4} & 54.1\textcolor{gray}{\scriptsize$\pm$0.3} & 56.2\textcolor{gray}{\scriptsize$\pm$0.3} & \textbf{61.4}\textcolor{gray}{\scriptsize$\pm$0.8} & \underline{61.2}\textcolor{gray}{\scriptsize$\pm$0.5} \\
 & \textsc{BBH}            & \textit{43.8} & \textit{44.0} & 43.0\textcolor{gray}{\scriptsize$\pm$0.4} & 42.8\textcolor{gray}{\scriptsize$\pm$0.2} & 43.1\textcolor{gray}{\scriptsize$\pm$0.2} & \underline{43.6}\textcolor{gray}{\scriptsize$\pm$0.2} & 39.0\textcolor{gray}{\scriptsize$\pm$0.6} & 43.2\textcolor{gray}{\scriptsize$\pm$0.5} & \textbf{44.1}\textcolor{gray}{\scriptsize$\pm$0.4} \\
 & Avg. $\Delta$ & -- & \textit{+0.9} & -1.2 & -0.4 & -0.1 & -1.2 & -2 & \underline{+1.1} & \textbf{+1.5} \\
\bottomrule
\end{tabular}
\vspace{-15mm}
}
\label{tab:main-exp}
\end{table*}

\subsection{Main Results}
Results are reported in Table~\ref{tab:main-exp}. For LESS, following the official default setting, we run 4 epochs with random projection dimension 8192. For \texttt{GIST}, we run 1 warmup epoch, and use projection dimensions 150, 9, and 50 for \textsc{MMLU}, \textsc{TydiQA}, and \textsc{BBH}, respectively. Gradient spectral analysis and rank selection are detailed in Appendix~\ref{app:eigen}.

\noindent\textbf{\texttt{GIST} matches or outperforms the state-of-the-art baseline across diverse model architectures.}
As shown in Table \ref{tab:main-exp}, \texttt{GIST} achieves the greatest average improvement (Avg.\ $\Delta$) across all three models, outperforming heuristic baselines and the previous SOTA method, LESS. On Llama2-7B, \texttt{GIST} achieves a +6.2 average improvement, matching the Full fine-tuning upper bound.
The advantage is still pronounced on the more capable Llama3.2-3B and Qwen2.5-1.5B models, where \texttt{GIST} delivers improvements of +4.5 and +1.5, respectively, surpassing LESS (which achieves +3.3 and +1.1).
This demonstrates that \texttt{GIST}'s geometric subspace approach can scale effectively to modern, stronger base models.

\noindent\textbf{\texttt{GIST} mitigates negative transfer by surpassing fine-tuning on the full dataset}. 
A remarkable finding is that \texttt{GIST}, using only 5\% of the data, frequently matches or exceeds the performance of fine-tuning on the full dataset (100\%). Specifically, on Llama3.2-3B, \texttt{GIST} outperforms Full fine-tuning by a substantial margin (+4.5 vs. +3.3), and on Qwen2.5-1.5B, it achieves nearly double the gain of the full dataset (+1.5 vs. +0.9). This indicates that the full dataset likely contains redundant or noisy samples that hinder optimization, whereas \texttt{GIST} isolates task-relevant update directions, allowing the model to learn more effectively.

\noindent\textbf{\texttt{GIST} demonstrates superior robustness compared to heuristic and optimizer-based baselines.}
While heuristics like \textit{Length} occasionally perform well on specific metrics (e.g., \textsc{MMLU} on Llama3.2), they lack consistency, often degrading performance on other tasks (e.g., negative average gains on Qwen). Similarly, while LESS generally performs well, it struggles on Qwen2.5-1.5B, degrading on \textsc{BBH} (43.2 vs. Base 43.8). In contrast, \texttt{GIST} maintains significant positive gains across all tasks and models, proving that subspace alignment is a more robust proxy for Eq.~\eqref{eq:single_level} than curvature-based influence or simple surface statistics.


\subsection{Sensitivity}
\begin{figure}[t]
    \centering
    \begin{subfigure}[b]{0.23\textwidth}
        \centering
        \includegraphics[width=\linewidth]{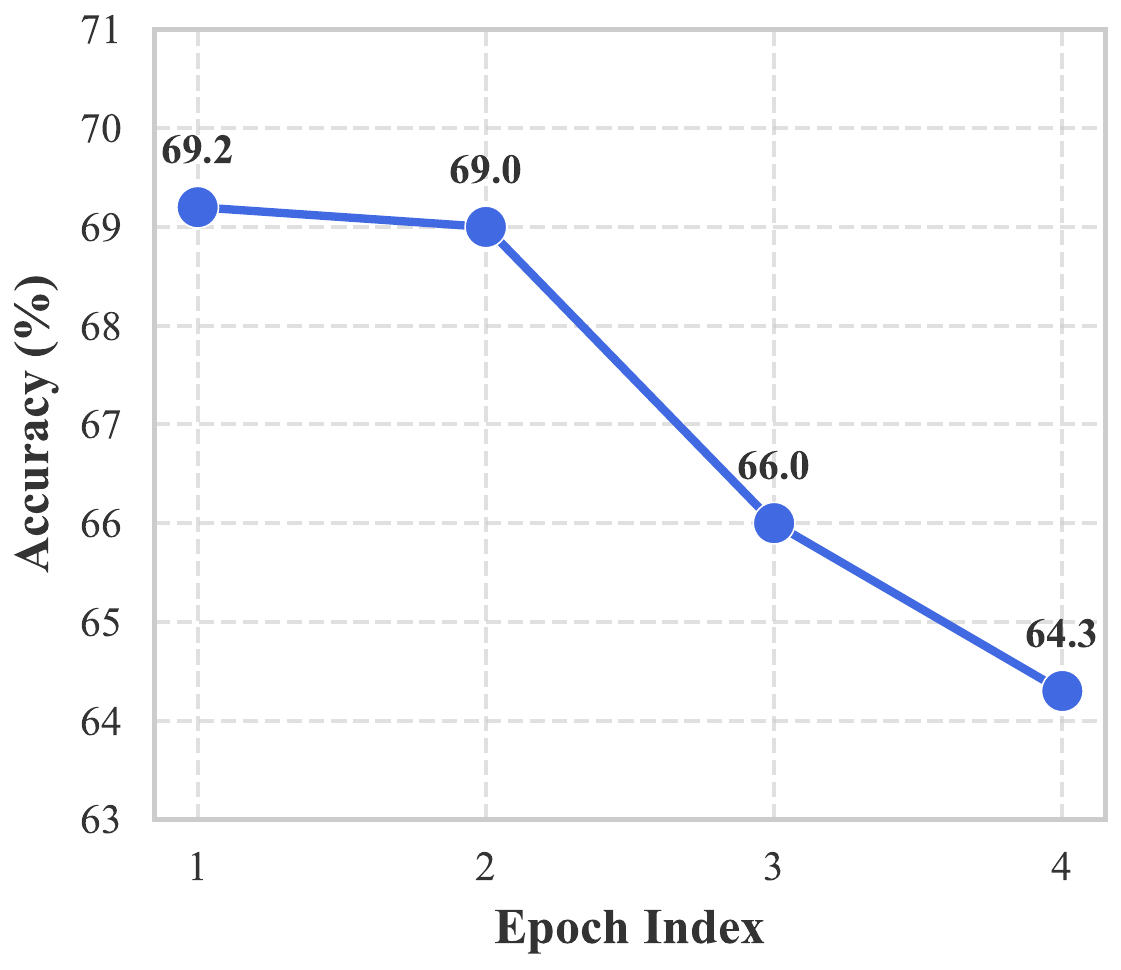}
        \caption{Single Epoch Performance}
        \label{fig:single_epoch}
    \end{subfigure}
    \hfill
    \begin{subfigure}[b]{0.23\textwidth}
        \centering
        \includegraphics[width=\linewidth]{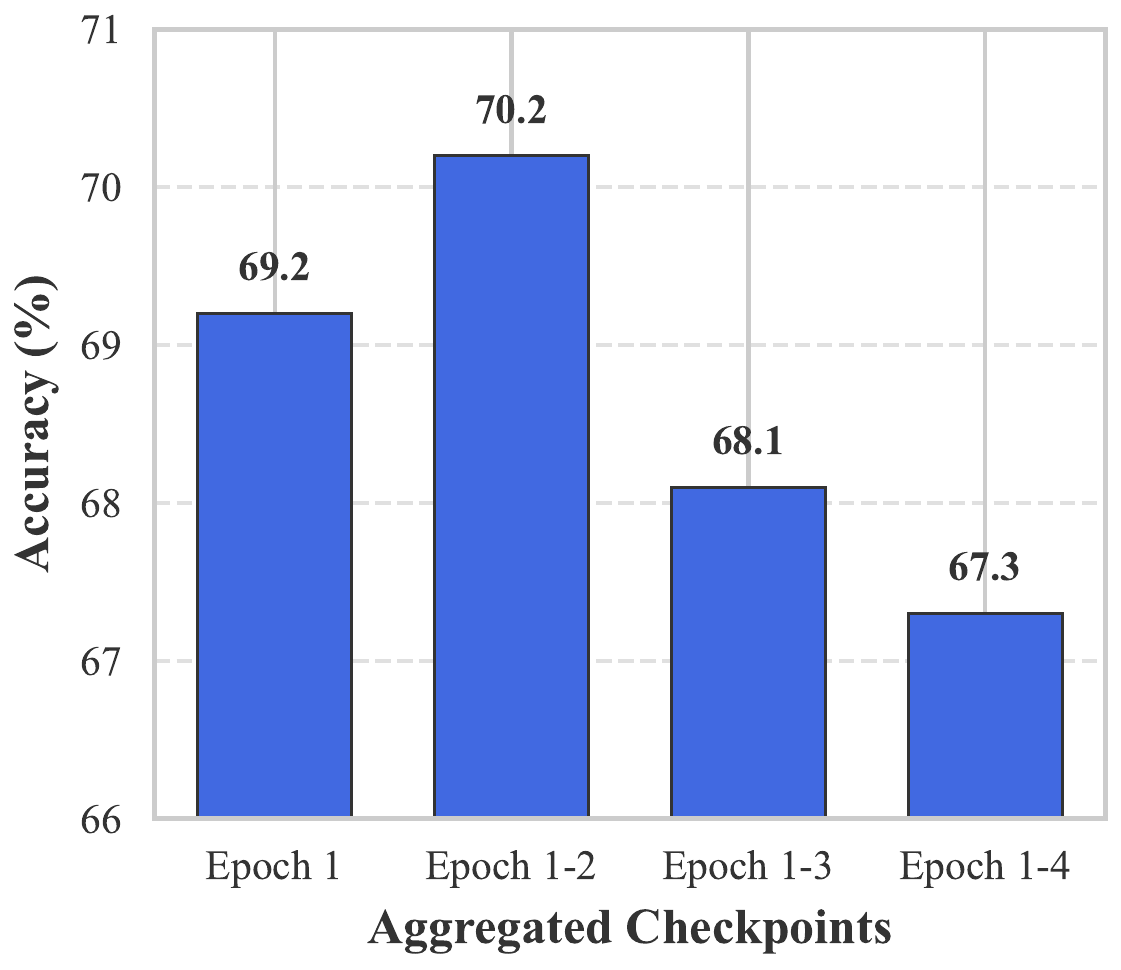}
        \caption{Cumulative Performance}
        \label{fig:cumulative_epoch}
    \end{subfigure}
    \vspace{-2.5mm}
    \caption{\textbf{Impact of Checkpoint Selection.} 
    (a) Using single-epoch gradients shows a clear performance drop in later epochs. 
    (b) Aggregating multiple checkpoints (weighted) does not outperform the early-stop strategy, confirming that early gradients contain the essential task optimization directions.}
    \label{fig:epoch_analysis}
    \vspace{-4mm}
\end{figure}

\noindent\textbf{Using only early checkpoints is sufficient; later checkpoints can even hurt performance.}
Our findings offer a nuanced perspective on checkpoint aggregation compared to \citet{xia2024less}. 
While aggregating influence scores can stabilize the zig-zagging Adam trajectory near convergence (Figure~\ref{fig:toy_coupled}) and partly mitigate the limited expressivity of a single-checkpoint diagonal surrogate, we find that for geometric subspace alignment, \textbf{the most valuable directional information is intrinsically concentrated in the early training stage}.
As illustrated in Figure~\ref{fig:single_epoch}, the performance of data selection using single-epoch gradients shows a monotonic decline on \textsc{TydiQA} with Llama3.2-3B, dropping from 69.2\% (Epoch 1) to 64.3\% (Epoch 4).

\begin{table}[t]
    \caption{Number of checkpoints ($N$) used for select data with \texttt{GIST} for Llama3.2-3B.}
    \centering
    \resizebox{0.4\textwidth}{!}{%
    \begin{tabular}{@{}lcccc@{}} \toprule
                  & \textbf{\textsc{MMLU}} & \textbf{\textsc{TydiQA}} & \textbf{\textsc{BBH}} & \textbf{Avg.} \\ \cmidrule(r){1-1} \cmidrule(lr){2-5} 
    LESS       & 56.5\textcolor{gray}{\scriptsize$\pm$0.6}  & 67.1\textcolor{gray}{\scriptsize$\pm$0.8} & 46.1\textcolor{gray}{\scriptsize$\pm$0.7} & 56.6 \\
    $N=4$        & 56.3\textcolor{gray}{\scriptsize$\pm$0.3} &  67.3\textcolor{gray}{\scriptsize$\pm$0.1}  & 46.8\textcolor{gray}{\scriptsize$\pm$0.4} & 56.8 \\
    \cmidrule(r){1-1} \cmidrule(lr){2-5} 
    $N=1$ (default) & 56.1\textcolor{gray}{\scriptsize$\pm$0.4} & 69.2\textcolor{gray}{\scriptsize$\pm$0.3}  & 48.0\textcolor{gray}{\scriptsize$\pm$0.2} & \textbf{57.8} \\ \bottomrule
    \end{tabular}}
    \label{tab:first_epoch}
    \vspace{-5mm}
    
\end{table}
Even when aggregating checkpoints using the weighted scheme proposed in LESS (Figure~\ref{fig:cumulative_epoch}), adding later checkpoints yields diminishing returns and eventually harms performance: while combining Epochs 1-2 yields a slight peak (70.2\%), incorporating all four epochs degrades accuracy to 67.3\%, significantly lower than using the early checkpoints alone. This aligns with the results in Table~\ref{tab:first_epoch}.
We attribute this phenomenon to the evolution of the gradient geometry shown in Figure~\ref{fig:grad_spectra_multi}. As training progresses, the spectral distribution of the target gradient matrix becomes increasingly concentrated on the leading singular values. This shrinkage in intrinsic dimension suggests that late-stage gradients may lose the diverse directional information present in early checkpoints, discarding critical optimization directions required for robust data selection.

\begin{figure}[t]
  \centering
  \includegraphics[width=0.96\linewidth]{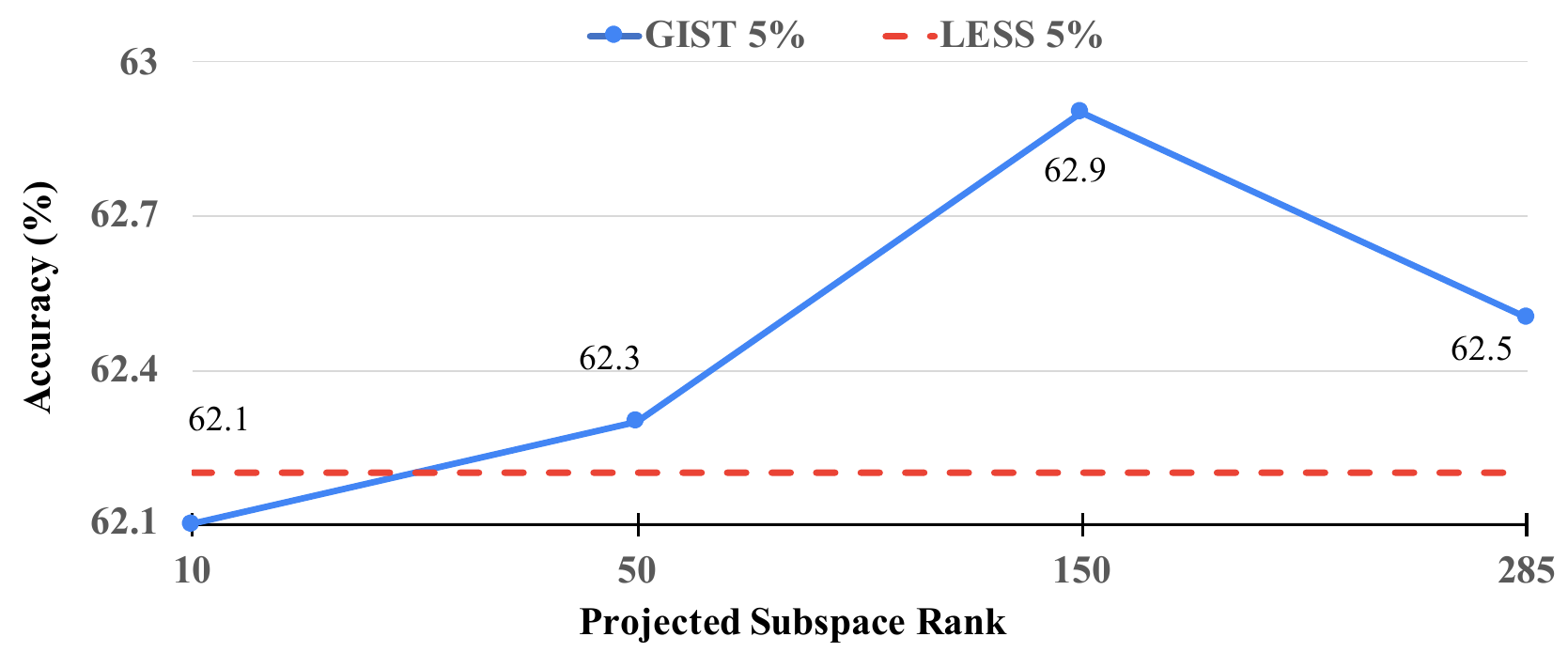}
  \caption{Accuracy as a function of the projection rank used by \texttt{GIST}, compared with LESS at the same selection budget.}
  \label{fig:svd_sensitivity}
  \vspace{-4mm}
\end{figure}

\noindent\textbf{Spectral filtering is essential, yet identifying the dominant optimization direction is the primary driver of performance.}
Figure~\ref{fig:svd_sensitivity} presents the sensitivity analysis of \texttt{GIST} regarding subspace rank $r$ on \textsc{MMLU} ($|\mathcal D_{\text{val}}|=285$).
We observe that performance does not increase monotonically with rank; instead, it peaks at $r=150$ (62.9\%) and drops to 62.5\% using full rank ($r=285$). This confirms the necessity of spectral filtering to exclude noisy tail components that may harm generalization.
Surprisingly, even at an extremely low rank $r=10$ which captures only 40\% of the cumulative explained variance, \texttt{GIST} achieves an accuracy of 62.1\%, comparable to LESS.
This further validates our core idea: effective data selection depends on \textbf{maintaining directional consistency with task subspace} instead of preserving the exact inner products of gradients. We include a detailed subspace analysis in Appendix~\ref{app:principal_direction}.

\subsection{Efficiency}

\noindent\textbf{Compared to LESS, \texttt{GIST} is highly efficient in practice, achieving better performance with only about one-quarter of LESS’s wall-clock runtime for a single task.} Table~\ref{tab:efficiency} summarizes the asymptotic complexity and wall-clock runtime of each stage in \texttt{GIST} compared with LESS. All wall-clock time is measured in \textbf{single} A100 (80GB) GPU hours. Similar to LESS, the dominant cost comes from computing per-example gradient features, which scales linearly with the candidate dataset size $|\mathcal{D}|$ and the per-step training cost. In practice, \texttt{GIST} is substantially cheaper overall because it requires only \emph{one} epoch (i.e., a single checkpoint) to form the target subspace, whereas LESS typically aggregates gradients across multiple epochs/checkpoints; this yields an end-to-end runtime that is roughly $\sim4\times$ smaller under the same setup.

\textbf{The target SVD is cheap to compute in \texttt{GIST}. }A key additional cost unique to \texttt{GIST} is the \emph{target SVD} used to construct the low-rank projector. Importantly, this step is inexpensive in both theory and practice. Concretely, we avoid forming any $d\times d$ matrix and instead compute the sample-space Gram matrix 
$\mathbf G_{\mathrm{val},t}^{\top}\mathbf G_{\mathrm{val},t}$
on the target set, followed by an eigendecomposition on an $|\mathcal{D}_{\text{val}}|\times|\mathcal{D}_{\text{val}}|$ matrix (with chunked multiplication), so the compute primarily scales with the small target size $|\mathcal{D}_{\text{val}}|$ and the chosen rank $r$. Empirically, constructing the projection is fast: for \textsc{TydiQA} with $|\mathcal{D}_{\text{val}}|=9$, it takes only 3.51s on average; for \textsc{BBH} ($|\mathcal{D}_{\text{val}}|=69$) and \textsc{MMLU} ($|\mathcal{D}_{\text{val}}|=285$), the average time is 22.54s and 61.06s, respectively. This confirms that the target-SVD overhead is negligible compared to gradient feature computation, and the overall runtime improvement of \texttt{GIST} mainly comes from using a single-epoch gradient store.

\newcommand{\gtext}[1]{\textcolor{gray}{\scriptsize #1}}

\begin{table}[t]
    \caption{Efficiency comparison between \texttt{GIST} (Ours) and LESS. Runtime is measured in \textbf{single A100 GPU hours}. \texttt{GIST} significantly reduces the overhead in Warmup and Feature Extraction stages (approx. $4\times$ speedup) and requires negligible time for SVD.}
    \label{tab:efficiency}
    \centering
    \resizebox{0.48\textwidth}{!}{%
    \begin{tabular}{l lccc}
    \toprule
    \textbf{Method} & \textbf{Metric} & \textbf{Warmup} & \textbf{Target SVD} & \textbf{Grad. Feats.} \\
    \midrule
    
    \multirow{2}{*}{\textbf{LESS}} 
        & Time & $6.0$ h & -- & $48.0$ h \\
        & Compl. & \gtext{$\mathcal{O}(|\mathcal{D}'|\cdot N)$} & \gtext{--} & \gtext{$\mathcal{O}(|\mathcal{D}|\cdot N)$} \\
    \cmidrule(l){2-5} 
    
    \multirow{2}{*}{\textbf{\texttt{GIST}}} 
        & \textbf{Time} & \textbf{1.5 h} & \textbf{$<$ 1 m} & \textbf{12.0 h} \\
        & Compl. & \gtext{$\mathcal{O}(|\mathcal{D}'|)$} & \gtext{$\mathcal{O}(|\mathcal{D}_{\text{val}}|^2 d)$} & \gtext{$\mathcal{O}(|\mathcal{D}|)$} \\
    
    \bottomrule
    \end{tabular}%
    }
    \vspace{-4mm}
\end{table}

\noindent\textbf{\texttt{GIST} is extremely storage-efficient in practice.} 
For example, when storing gradient features for Qwen2.5-1.5B over the three tasks, \texttt{GIST} uses only 217MB on disk, while LESS requires 75GB under the same setting, i.e., $\approx 350\times$ larger storage (about $99.7\%$ more disk usage).
We follow the standard LESS setup with four training datasets totaling $\sim 270\text{k}$ examples and a projection dimension of 8192.
In contrast to LESS, which applies a Johnson--Lindenstrauss random projection \citep{johnson1984extensions} with entries drawn from a Rademacher distribution and stores the resulting projected gradient features across multiple checkpoints, \texttt{GIST} performs a lightweight target SVD to recover a very low-dimensional task-relevant subspace and then projects gradients onto this target subspace. Together with single-epoch extraction, this leads to a dramatically smaller disk footprint.


\section{Conclusion}
In this work, we presented a unified optimization perspective on data selection, revealing that prior methods fundamentally suffer from geometric misalignment—either by ignoring curvature or relying on restrictive diagonal approximations. We demonstrated that in PEFT, the optimization landscape exhibits a rotated, low-rank structure that diagonal preconditioners fail to capture. To address this, we proposed \texttt{GIST}, which leverages spectral filtering to recover the coupled task geometry. \texttt{GIST} achieves state-of-the-art performance with significantly reduced computational overhead, validating that correctly modeling optimization geometry, rather than merely scaling up selection complexity, is key to efficient targeted instruction tuning.

\section*{Acknowledgments}

The authors would like to thank the anonymous reviewers for their constructive comments.
This work was supported in part by the Commonwealth Cyber Initiative (CCI) under Award No.~VV-1Q26-005 and the National Science Foundation under Grant No.~2331315.
Any opinions, findings, and conclusions or recommendations expressed in this material are those of the authors and do not necessarily reflect the views of the National Science Foundation.

\section*{Impact Statement}
This paper studies influence-based data selection for targeted instruction tuning of large language models, with the goal of improving data efficiency and training effectiveness. The primary intended benefits are reduced compute and cost for fine-tuning, faster iteration, and better performance on specified target tasks or domains.

Our methods may also introduce or amplify risks. Because influence scores prioritize examples that most affect a chosen target set, they can reinforce target-set biases, overfit to narrow benchmarks, or de-emphasize underrepresented groups and topics if the target distribution is skewed. In addition, data selection can inadvertently increase the presence of toxic, copyrighted, private, or otherwise sensitive content if such examples are deemed influential for the target objective. These concerns are not unique to our approach but are particularly relevant when automating dataset curation.

To mitigate these risks, we recommend pairing influence-based selection with standard data governance practices, including privacy and licensing checks, toxicity and safety filtering, and audits of demographic and topical coverage. We also encourage reporting target-set construction, selection criteria, and downstream evaluation beyond the target benchmarks (e.g., robustness and safety) to reduce the chance of unintended harms.

Overall, this work aims to advance machine learning methodology, and its broader impacts will depend on how the selected data and resulting models are deployed and governed.

\bibliography{main}
\bibliographystyle{icml2026}


\onecolumn
\newpage
\appendix

\section{Limitations and Future Work}
While \texttt{GIST} offers a principled geometric view, our current instantiation is intentionally simple---a minimal, proof-driven realization of the theory. Empirically, we find that applying \texttt{GIST} early in training (after a lightweight warmup) is often most beneficial; in our experiments, using it within the first two epochs can yield stronger performance. Developing more expressive variants and adaptive mechanisms that automatically decide \emph{when} and \emph{how} to apply \texttt{GIST} remains an important direction.

\noindent\textbf{Granularity and Magnitude.}
First, following standard practices in efficient data selection \citep{xia2024less,nikdan2025efficient}, we utilize sequence-level gradients and cosine similarity.
While computationally robust, this choice emphasizes \textit{directional alignment} and de-emphasizes gradient magnitude (a proxy for difficulty) as well as token-level granularity.
As a result, \texttt{GIST} may under-separate samples whose gradients share similar directions but differ substantially in scale, and token-level signals are averaged out within each sequence representation.

\noindent\textbf{Warmup Requirement.}
Second, our approximation relies on the Gauss--Newton decomposition (Lemma~\ref{lem:gn_decomp}), which is most accurate once optimization enters a relatively stable regime.
We satisfy this with a short warmup phase.
While our experiments suggest the relevant subspace stabilizes quickly, our current analysis does not yet characterize the transient dynamics in the very early optimization steps \emph{before} warmup, which could potentially be leveraged for further gains.

\noindent\textbf{Scale vs.\ Direction.}
Third, unlike optimizer-based approaches that implicitly capture curvature scale (e.g., Adam's second moment), \texttt{GIST} focuses on recovering \textit{principal directions} (an eigenspace).
This is a deliberate design to avoid numerical issues associated with ill-conditioned curvature in LLMs.
However, how to incorporate reliable \emph{scaling} for the recovered directions (e.g., via lightweight diagonal or low-rank scale estimates) remains open.

\noindent\textbf{Future Directions.}
Motivated by the above trade-offs, future work includes:
(1) improving the fidelity of the recovered subspace (e.g., reducing information loss introduced by spectral filtering), and
(2) developing principled and practical estimators for the effective dimensionality and stage-dependent geometry of the optimization manifold.
A deeper understanding of what dominant optimization directions represent may further inform data selection and related downstream interventions grounded in optimization geometry.


\section{Extended Discussion on Related Work}
\noindent\textbf{Instruction Data Selection.} The transition from data quantity to quality has become a central theme in LLM training. Early work focused on heuristic filtering based on surface-level metrics like perplexity or instruction length~\citep{zhou2023lima, chen2024alpagasus, li2024quantity}. While efficient, these methods are often \textit{task-agnostic}, ignoring the specific requirements of the target distribution. To address this, similarity-based approaches utilize embedding retrieval to select data semantically close to the target~\citep{ni2022large, liu2024what}. However, semantic similarity in the pre-trained embedding space does not necessarily translate to gradient alignment during fine-tuning. Our work targets the more rigorous \textit{gradient-based} setting, aiming to select data that explicitly optimizes the training objective on a target domain.

\noindent\textbf{Scalable Influence and Data Attribution Approximations.}
Influence functions~\citep{koh2017understanding} provide a principled framework for data attribution and selection by estimating the effect of a training sample on validation loss, but their classical form requires Hessian-inverse computations that are infeasible for billion-scale models. Recent works therefore develop scalable approximations, including TRAK~\citep{park2023trak}, DataInf~\citep{kwon2024datainf}, scalable fine-tuning from multiple data sources~\citep{li2024scalable}, influence distillation~\citep{nikdan2025efficient}, and influence-preserving proxy models for gradient-based data selection~\citep{chen2026influence}. Closest to our setting, LESS~\citep{xia2024less} selects instruction-tuning data by comparing projected gradients and using optimizer statistics such as Adam's second moment to approximate influence. While these methods make attribution practical at LLM scale, they mainly focus on scalable gradient matching or optimizer-dependent approximations. In contrast, \texttt{GIST} directly recovers a low-rank target subspace from validation gradients and scores candidates by projected directional alignment, replacing diagonal optimizer heuristics with task-specific geometric subspace recovery.

\noindent\textbf{Spectral Properties of Deep Optimization.} 
Our approach is grounded in the spectral analysis of neural network optimization. It is well-established that the Hessian and gradient covariance of deep networks exhibit a ``bulk-and-outlier'' or heavy-tailed spectral structure~\citep{sagun2016eigenvalues, papyan2020traces, ghorbani2019investigation}: the optimization trajectory is often concentrated in a low-dimensional manifold, while the remaining directions are dominated by stochastic noise~\citep{li2018measuring, li2018visualizing}. This view has also motivated recent LLM optimizers that move beyond purely coordinate-wise updates by exploiting spectral or rotation-aware structure, such as GaLore~\citep{zhao2024galore}, SOAP~\citep{vyas2025soap}, and Muon~\citep{jordan2024muon}. These methods show that identifying and updating within well-conditioned low-dimensional or rotated subspaces can improve large-scale optimization. \texttt{GIST} applies a similar geometric principle to gradient-based data selection: instead of unstable curvature inversion or diagonal optimizer-state rescaling, we perform spectral filtering on validation gradients and score candidates by their alignment with the resulting task subspace. This connects data selection with the intrinsic dimension of the optimization trajectory, ensuring that selection is driven by stable task-relevant directions rather than noisy or axis-aligned artifacts.

\section{Notations and Symbols}
We summarize the mathematical notations used throughout the paper in Table~\ref{tab:symbols}.
\begin{center}
\small
\setlength{\LTcapwidth}{\linewidth}
\renewcommand{\arraystretch}{1.15}
\begin{longtable}{@{}l p{0.74\linewidth}@{}}
\caption{Summary of notations used in this paper.}
\label{tab:symbols}\\
\toprule
\textbf{Symbol} & \textbf{Description} \\
\midrule
\endfirsthead

\toprule
\textbf{Symbol} & \textbf{Description} \\
\midrule
\endhead

\midrule
\multicolumn{2}{r}{\small Continued on next page} \\
\endfoot

\bottomrule
\endlastfoot

\multicolumn{2}{c}{\textit{Datasets and Models}}\\
$\mathcal D$ & Large-scale instruction tuning candidate pool, $\mathcal D=\{\boldsymbol z_i\}_{i=1}^{|\mathcal D|}$ \\
$\boldsymbol z$ & A data instance $\boldsymbol z=(\boldsymbol x,\boldsymbol y)$ \\
$\mathcal D_{\mathrm{val}}$ & Target validation set defining the target distribution/task \\
$\mathcal D_{\mathrm{test}}$ & Held-out test set for evaluation \\
$S$ & Selected subset from the candidate pool, $S\subseteq\mathcal D$ \\
$k$ & Selection budget (cardinality constraint), $|S|=k$ \\
$J$ & Number of target tasks/requirements represented in $\mathcal D_{\mathrm{val}}$ \\
$\boldsymbol z_{\mathrm{val}}^{(j)}$ & The $j$-th target/validation example (task requirement), $j\in\{1,\ldots,J\}$ \\
$\mathcal{M}_{\boldsymbol\theta}$ & LLM with trainable parameters $\boldsymbol\theta\in\mathbb R^{d}$ (e.g., LoRA parameters) \\
$d$ & Number of trainable parameters (e.g., LoRA parameters) \\
$\boldsymbol\theta_t$ & Trainable parameters at checkpoint/step $t$ \\
$N$ & Number of collected checkpoints along warmup trajectory\\

\midrule
\multicolumn{2}{c}{\textit{Losses, Gradients, and Curvature}}\\
$\ell(\boldsymbol z,\boldsymbol\theta)$ & Sample-wise loss at parameters $\boldsymbol\theta$ \\
$\mathcal L(S,\boldsymbol\theta)$ & Empirical risk over a dataset $S$: $\mathcal L(S,\boldsymbol\theta)=\frac{1}{|S|}\sum_{\boldsymbol z\in S}\ell(\boldsymbol z,\boldsymbol\theta)$ \\
$\mathbf g_t$ & Instantaneous (per-sample) training gradient at step $t$: $\mathbf g_t=\nabla_{\boldsymbol\theta}\ell(\boldsymbol z_t,\boldsymbol\theta_t)$ \\
$\mathbf H_t$ & Instantaneous (per-sample) Hessian at step $t$: $\mathbf H_t=\nabla_{\boldsymbol\theta}^2\ell(\boldsymbol z_t,\boldsymbol\theta_t)$ \\
$\mathbf H_{\mathrm{val},t}$ & Validation Hessian: $\mathbf H_{\mathrm{val},t}=\nabla_{\boldsymbol\theta}^2\mathcal L(\mathcal D_{\mathrm{val}},\boldsymbol\theta_t)$ \\
$\mathbf H_{\mathrm{val},t}^\dagger$ & Moore--Penrose pseudoinverse of $\mathbf H_{\mathrm{val},t}$ \\
$\eta$ & Learning rate (step size) \\
$\epsilon$ & Numerical stability constant in Adam update \\

\midrule
\multicolumn{2}{c}{\textit{Adam Optimizer Statistics}}\\
$\mathbf m_t$ & First-moment estimate (EMA of gradients) at step $t$ \\
$\mathbf v_t$ & Second-moment estimate (EMA of squared gradients) at step $t$ \\
$\hat{\mathbf m}_t$ & Bias-corrected first moment at step $t$ \\
$\hat{\mathbf v}_t$ & Bias-corrected second moment at step $t$ \\

\midrule
\multicolumn{2}{c}{\textit{Target Gradient Matrix and Spectral Subspace}}\\
$\mathbf g_{i,t}$ & Per-sample gradient for candidate $\boldsymbol z_i$ at checkpoint $t$: $\mathbf g_{i,t}=\nabla_{\boldsymbol\theta}\ell(\boldsymbol z_i,\boldsymbol\theta_t)\in\mathbb R^d$ \\
$\mathbf g_{\mathrm{val},t}^{(j)}$ & Per-sample gradient for target example $\boldsymbol z_{\mathrm{val}}^{(j)}$ at checkpoint $t$: $\mathbf g_{\mathrm{val},t}^{(j)}=\nabla_{\boldsymbol\theta}\ell(\boldsymbol z_{\mathrm{val}}^{(j)},\boldsymbol\theta_t)\in\mathbb R^d$ \\
$\mathbf G_{\mathrm{val},t}$ & Stacked target gradient matrix at checkpoint $t$: columns are $\mathbf g_{\mathrm{val},t}^{(j)}$, $\mathbf G_{\mathrm{val},t}\in\mathbb R^{d\times |\mathcal D_{\mathrm{val}}|}$ \\
$\mathbf U_t,\mathbf\Sigma_t,\mathbf V_t$ & Compact SVD: $\mathbf G_{\mathrm{val},t}=\mathbf U_t\mathbf\Sigma_t\mathbf V_t^\top$ \\
$r_t$ & Effective rank estimated from the spectrum at checkpoint $t$ \\
$r$ & Global subspace dimension used for scoring (e.g., $r=\max_t r_t$) \\
$\mathbf U_{r}$ & Top-$r$ left singular vectors of $\mathbf G_{\mathrm{val},t}$, $\mathbf U_r\in\mathbb R^{d\times r}$ \\
$\boldsymbol\Pi$ & Spectral projector to target subspace, $\boldsymbol\Pi=\mathbf U_r^\top\in\mathbb R^{r\times d}$ \\
$\mathbf P_r$ & Rank-$r$ orthogonal projector in parameter space, $\mathbf P_r=\mathbf U_r\mathbf U_r^\top\in\mathbb R^{d\times d}$ \\

\midrule
\multicolumn{2}{c}{\textit{Projected Alignment and Selection Scores}}\\
$\text{Sim}_t(\boldsymbol z_i,\boldsymbol z_{\mathrm{val}}^{(j)})$ & Subspace cosine similarity at checkpoint $t$ (Eq.~\eqref{eq:pairwise_score}) \\
$\mathrm{FinalScore}(\boldsymbol z_i)$ & Aggregated score for selection, $\mathrm{FinalScore}(\boldsymbol z_i)=\max_{j\in\{1,\ldots,J\}}\text{Sim}_t(\boldsymbol z_i,\boldsymbol z_{\mathrm{val}}^{(j)})$ \\

\end{longtable}
\end{center}
\vspace{-10mm}

\section{Details of the Adam Optimizer and Gradient Rescaling}
\label{sec:appendix_adam}

In this section, we provide the detailed formulation of the Adam optimizer \cite{kingma2015adam} used in fine-tuning Large Language Models (LLMs) and discuss how its second-moment statistics are utilized to approximate the optimization landscape for data selection methods like LESS \cite{xia2024less}.

\subsection{Standard Adam Update Rule}
Let $\ell(\boldsymbol z, \boldsymbol{\theta})$ denote the loss function for a data sample $\boldsymbol z$ and model parameters $\boldsymbol{\theta} \in \mathbb{R}^d$. At each training step $t$, we compute the stochastic gradient on a mini-batch $\mathcal{B}_t$:
\begin{equation}
    \mathbf{g}_t \triangleq \frac{1}{|\mathcal{B}_t|} \sum_{\boldsymbol z \in \mathcal{B}_t} \nabla_{\boldsymbol{\theta}}\ell(\boldsymbol z, \boldsymbol{\theta}_t).
\end{equation}
Adam estimates the first moment (mean) $\mathbf{m}_t$ and the second raw moment (uncentered variance) $\mathbf{v}_t$ of the gradients using exponential moving averages (EMA):
\begin{align}
    \mathbf{m}_{t+1} &= \beta_1 \mathbf{m}_t + (1 - \beta_1) \mathbf{g}_t, \\
    \mathbf{v}_{t+1} &= \beta_2 \mathbf{v}_t + (1 - \beta_2) (\mathbf{g}_t \odot \mathbf{g}_t),
\end{align}
where $\odot$ denotes the element-wise product, and $\beta_1, \beta_2 \in [0, 1)$ are the decay rates (typically $\beta_1=0.9, \beta_2=0.999$). To counteract the initialization bias (since $\mathbf{m}_0$ and $\mathbf{v}_0$ are initialized to zero vectors), bias-corrected estimates are computed as:
\begin{equation}
    \hat{\mathbf{m}}_{t+1} = \frac{\mathbf{m}_{t+1}}{1 - \beta_1^{t+1}}, \quad \hat{\mathbf{v}}_{t+1} = \frac{\mathbf{v}_{t+1}}{1 - \beta_2^{t+1}}.
\end{equation}
The parameters are then updated via:
\begin{equation}\label{eq:adam_standard}
    \boldsymbol{\theta}_{t+1} = \boldsymbol{\theta}_t - \eta \cdot \frac{\hat{\mathbf{m}}_{t+1}}{\sqrt{\hat{\mathbf{v}}_{t+1}} + \epsilon},
\end{equation}
where $\eta$ is the learning rate and $\epsilon$ is a small constant for numerical stability.
Structurally, the term $(\sqrt{\hat{\mathbf{v}}_{t+1}} + \epsilon)^{-1}$ acts as a \textbf{diagonal preconditioner}, rescaling the gradient based on the historical curvature of the optimization landscape.

\subsection{Toy Example: Diagonal vs.\ Full-Matrix Preconditioning under Coupling}
\label{app:toy_details}

To complement the Adam update in Eq.~\eqref{eq:adam_standard}, we visualize a 2D quadratic objective
under \emph{non-coupled} (axis-aligned) versus \emph{coupled} (rotated) curvature.
Consider
\begin{equation}
    L(\boldsymbol{\theta}) \;=\; \frac{1}{2}\boldsymbol{\theta}^{\top}\mathbf{H}\boldsymbol{\theta},
    \qquad \nabla L(\boldsymbol{\theta}) \;=\; \mathbf{H}\boldsymbol{\theta},
\end{equation}
with the common initialization $\boldsymbol{\theta}_0 = (-2.5,\, 0)^\top$.
This choice places the start on a coordinate axis so that, in the \emph{non-coupled} case, the update direction
does not introduce an artificial ``orbit'' caused by momentum; hence the visual difference is dominated by
\emph{coupling} rather than initialization.

\noindent\textbf{Two geometries (same spectrum, different orientation).}
We compare an axis-aligned quadratic and its rotated counterpart:
\begin{align}
\textbf{Non-coupled (axis-aligned):}\quad & \mathbf{H}_{\text{diag}} = \mathrm{diag}(20,\,1),\\
\textbf{Coupled (rotated):}\quad & \mathbf{H}_{\text{cpl}} = \mathbf{R}\,\mathbf{H}_{\text{diag}}\,\mathbf{R}^{\top},
\end{align}
where $\mathbf{R}$ is a 2D rotation matrix.
Thus, both cases share the same eigenvalues (and condition number), but $\mathbf{H}_{\text{cpl}}$ is
\emph{not diagonal in the coordinate basis}, yielding tilted level sets due to off-diagonal coupling.
For visualization we instantiate
\(
\mathbf{H}_{\text{cpl}}=
\begin{bmatrix}
10.5 & 9.5\\
9.5 & 10.5
\end{bmatrix}.
\)

\noindent\textbf{Newton step (full-matrix preconditioning).}
Newton's method uses the full curvature:
\begin{equation}
\boldsymbol{\theta}_{t+1}
=\boldsymbol{\theta}_{t}-\eta_{\text{Newton}}\mathbf{H}^{-1}\nabla L(\boldsymbol{\theta}_t).
\end{equation}
With $\eta_{\text{Newton}}=1$ and $\nabla L(\boldsymbol{\theta}_t)=\mathbf{H}\boldsymbol{\theta}_t$,
Newton reaches the minimizer $\boldsymbol{0}$ in a single step on this quadratic, correcting both \emph{scaling} and \emph{rotation}.

\noindent\textbf{Adam (diagonal preconditioning).}
We run standard Adam with $(\beta_1,\beta_2)=(0.9,0.999)$ for $T=45$ steps,
using initial learning rate $\eta=0.25$ with a linear decay schedule.
Since Adam's preconditioner is diagonal, it can only rescale coordinates independently and cannot represent the
off-diagonal coupling present in $\mathbf{H}_{\text{cpl}}$.
As a result, on the coupled landscape, Adam may drift across the narrow valley and exhibit a less direct trajectory,
whereas on the non-coupled landscape it behaves in a more stable, axis-consistent manner.

\begin{figure}[t]
    \centering
    \begin{subfigure}[t]{0.45\textwidth}
        \centering
        \includegraphics[width=\linewidth]{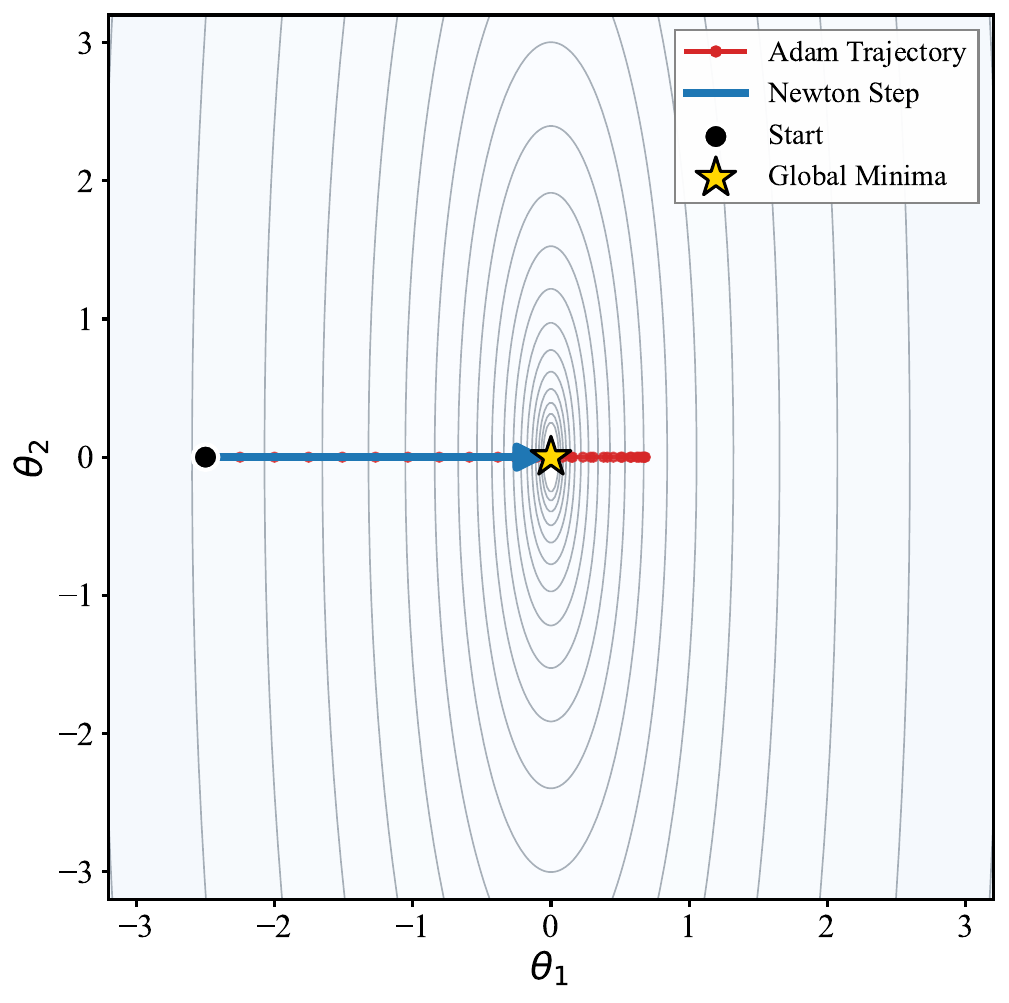}
        \caption{Non-coupled (axis-aligned) quadratic: $\mathbf{H}_{\text{diag}}=\mathrm{diag}(20,1)$.}
        \label{fig:toy_noncoupled}
    \end{subfigure}
    \hfill
    \begin{subfigure}[t]{0.45\textwidth}
        \centering
        \includegraphics[width=\linewidth]{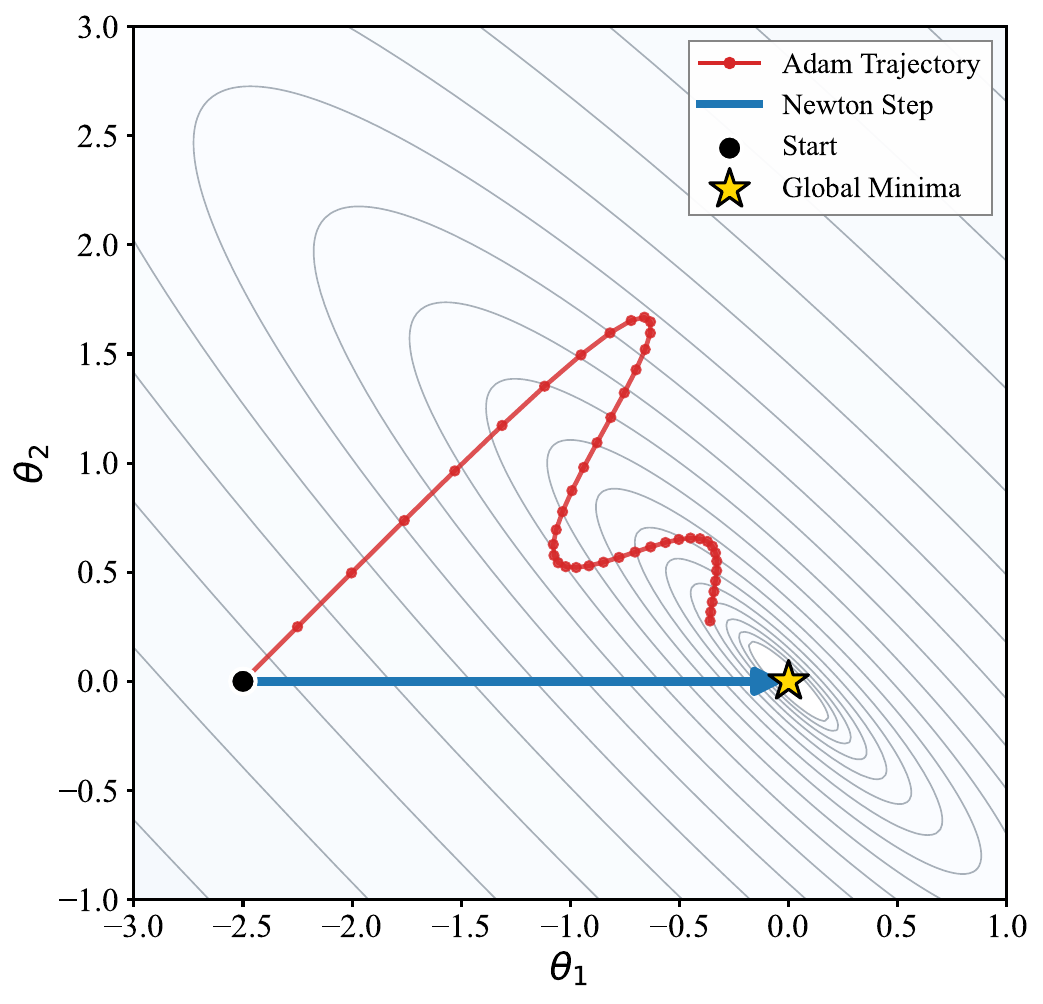}
        \caption{Coupled (rotated) quadratic: $\mathbf{H}_{\text{cpl}}$ has large off-diagonal entries.}
        \label{fig:toy_coupled}
    \end{subfigure}
    \vspace{-2mm}
    \caption{Toy 2D optimization dynamics with the same initialization $\boldsymbol{\theta}_0=(-2.5,0)$.
    Newton (full-matrix) follows the direct descent direction, while Adam (diagonal) cannot express the rotation induced by coupling,
    leading to a ``zig-zag'' trajectory on the coupled landscape.}
    \label{fig:toy_two_geometries}
    \vspace{-3mm}
\end{figure}

\noindent\textbf{Takeaway.}
This toy example highlights the key geometric distinction: \emph{full-matrix} curvature information can correct parameter coupling (rotation), whereas \emph{diagonal} preconditioning cannot, which can lead to inefficient trajectories when the optimization landscape is strongly coupled.


\section{Theoretical Proof}\label{app:proof}
\subsection{Proof of Theorem~\ref{thm:bilevel_single}}\label{proof:optim}

\noindent\textbf{Theorem \ref{thm:bilevel_single}} (Single-level Approximate Optimization).
\textit{Fix the current parameters $\boldsymbol{\theta}_t$.
Under the first-order approximation,
Problem~\ref{def:problem} is approximately reduced to maximizing the predicted reduction in validation loss:
\begin{equation}
\label{app:single_level}
\begin{aligned}
\max_{S\subseteq\mathcal{D}}\quad &
\nabla_{\boldsymbol{\theta}} \mathcal{L}(\mathcal{D}_{\text{val}}, \boldsymbol{\theta}_t)^\top
\mathbf{H}_{\text{val},t}^\dagger
\nabla_{\boldsymbol{\theta}} \mathcal{L}(S, \boldsymbol{\theta}_t) \\
\text{s.t.}\quad & |S|= k,
\end{aligned}
\end{equation}
up to constants independent of $S$ and higher-order terms.}
\begin{proof}
We write $\mathcal L_{\mathrm{val}}(\boldsymbol\theta)
:=\mathcal L(\mathcal D_{\mathrm{val}},\boldsymbol\theta)$.
Let
\begin{equation}
\mathbf g_{\mathrm{val},t}
:=\nabla_{\boldsymbol\theta}\mathcal L_{\mathrm{val}}(\boldsymbol\theta_t),
\qquad
\mathbf H_{\mathrm{val},t}
:=\nabla_{\boldsymbol\theta}^2\mathcal L_{\mathrm{val}}(\boldsymbol\theta_t),
\end{equation}
and for any subset $S$,
\begin{equation}
\mathbf g_{S,t}
:=\nabla_{\boldsymbol\theta}\mathcal L(S,\boldsymbol\theta_t).
\end{equation}
Problem~\ref{def:problem} selects $S$ to minimize the validation loss after training on $S$,
i.e., it compares $\mathcal L_{\mathrm{val}}(\boldsymbol\theta_S^*)$ across different $S$,
where $\boldsymbol\theta_S^*$ is the output of the inner optimization (exact minimizer or the terminal point of a training procedure) starting from $\boldsymbol\theta_t$.

First, we approximate the intractable mapping $S\mapsto \boldsymbol\theta_S^*$ around $\boldsymbol\theta_t$
by a single geometry-aware descent step using the validation curvature:
\begin{equation}
\label{eq:theta_surrogate}
\widetilde{\boldsymbol\theta}_S
\;\triangleq\;
\boldsymbol\theta_t - \eta\,\mathbf H_{\mathrm{val},t}^\dagger\,\mathbf g_{S,t},
\end{equation}
where $\eta>0$ is a small stepsize and $\mathbf H_{\mathrm{val},t}^\dagger$ denotes the Moore--Penrose pseudoinverse.
This surrogate is a first-order approximation in the sense that it uses only first-order information of the subset loss at $\boldsymbol\theta_t$
and a fixed local metric at $\boldsymbol\theta_t$.

Then we assume $\mathcal L_{\mathrm{val}}$ is twice continuously differentiable in a neighborhood of $\boldsymbol\theta_t$.
Applying Taylor's theorem to $\mathcal L_{\mathrm{val}}(\widetilde{\boldsymbol\theta}_S)$ around $\boldsymbol\theta_t$ yields
\begin{equation}
\label{eq:taylor_val}
\mathcal L_{\mathrm{val}}(\widetilde{\boldsymbol\theta}_S)
=
\mathcal L_{\mathrm{val}}(\boldsymbol\theta_t)
+
\mathbf g_{\mathrm{val},t}^\top(\widetilde{\boldsymbol\theta}_S-\boldsymbol\theta_t)
+
\frac{1}{2}(\widetilde{\boldsymbol\theta}_S-\boldsymbol\theta_t)^\top
\mathbf H_{\mathrm{val},t}
(\widetilde{\boldsymbol\theta}_S-\boldsymbol\theta_t)
+
R_S,
\end{equation}
where the remainder term satisfies $R_S=o(\|\widetilde{\boldsymbol\theta}_S-\boldsymbol\theta_t\|^2)$ as $\eta\to 0$.

Substituting \eqref{eq:theta_surrogate} into the linear term in \eqref{eq:taylor_val} gives
\begin{equation}
\mathbf g_{\mathrm{val},t}^\top(\widetilde{\boldsymbol\theta}_S-\boldsymbol\theta_t)
=
-\eta\,\mathbf g_{\mathrm{val},t}^\top\mathbf H_{\mathrm{val},t}^\dagger \mathbf g_{S,t}.
\end{equation}
The quadratic term in \eqref{eq:taylor_val} scales as $\mathcal{O}(\eta^2)$:
\begin{equation}
\frac{1}{2}(\widetilde{\boldsymbol\theta}_S-\boldsymbol\theta_t)^\top
\mathbf H_{\mathrm{val},t}
(\widetilde{\boldsymbol\theta}_S-\boldsymbol\theta_t)
=
\frac{\eta^2}{2}\,
\mathbf g_{S,t}^\top
\mathbf H_{\mathrm{val},t}^\dagger
\mathbf H_{\mathrm{val},t}
\mathbf H_{\mathrm{val},t}^\dagger
\mathbf g_{S,t},
\end{equation}
which is $\mathcal{O}(\eta^2)$ and hence is a higher-order term relative to the $\mathcal{O}(\eta)$ linear improvement term.
(Here $\mathbf H^\dagger \mathbf H \mathbf H^\dagger$ is well-defined even when $\mathbf H$ is singular.)

Therefore, combining the above,
\begin{equation}
\label{eq:val_after_step}
\mathcal L_{\mathrm{val}}(\widetilde{\boldsymbol\theta}_S)
=
\mathcal L_{\mathrm{val}}(\boldsymbol\theta_t)
-\eta\,\mathbf g_{\mathrm{val},t}^\top\mathbf H_{\mathrm{val},t}^\dagger \mathbf g_{S,t}
\;+\; \mathcal{O}(\eta^2)
\;+\; o(\eta^2),
\end{equation}
where the $\mathcal{O}(\eta^2)$ term collects the quadratic Taylor term and the remainder.

Since $\mathcal L_{\mathrm{val}}(\boldsymbol\theta_t)$ is independent of $S$,
minimizing $\mathcal L_{\mathrm{val}}(\widetilde{\boldsymbol\theta}_S)$ over $S$ is, up to an $S$-independent constant and higher-order terms in $\eta$,
equivalent to maximizing the first-order predicted decrease term
$\mathbf g_{\mathrm{val},t}^\top\mathbf H_{\mathrm{val},t}^\dagger \mathbf g_{S,t}$.
Formally, ignoring $\mathcal O(\eta^2)$ and $o(\eta^2)$ terms in \eqref{eq:val_after_step}, we obtain the surrogate outer problem
\begin{equation}
\max_{S\subseteq\mathcal D,\ |S|=k}\;
\mathbf g_{\mathrm{val},t}^\top\mathbf H_{\mathrm{val},t}^\dagger \mathbf g_{S,t},
\end{equation}
which is exactly \eqref{eq:single_level} after substituting the definitions of $\mathbf g_{\mathrm{val},t}$ and $\mathbf g_{S,t}$.

Finally, because $\widetilde{\boldsymbol\theta}_S$ is a first-order local surrogate of the inner solution $\boldsymbol\theta_S^*$ around $\boldsymbol\theta_t$ (Eq.~\eqref{eq:theta_surrogate}),
this yields the claimed single-level approximation to Problem~\ref{def:problem}, up to $S$-independent constants and higher-order terms.
\end{proof}

\subsection{Proof of Theorem~\ref{thm:lora_offdiag_maintext}}\label{app:lora_offdiag_proof}

\noindent\textbf{Theorem \ref{thm:lora_offdiag_maintext}} (LoRA induces cross-block curvature).
\textit{Let $\mathcal L(W)$ be twice differentiable and consider the LoRA parameterization
$W=W_0+BA$, where $B\in\mathbb R^{m\times r}$ and $A\in\mathbb R^{r\times n}$.
Let $\mathbf G_W=\nabla_W \mathcal L(W)$ and let $\mathbf H_W$ denote the Hessian with respect to $\mathrm{vec}(W)$.
For any $i\in[m]$, $j\in[n]$, and $k,k'\in[r]$,
$$
\frac{\partial^2 \mathcal L}{\partial B_{ik'}\partial A_{kj}}
=
\left\langle
\mathbf H_W[B_{:k}e_j^\top],\, e_iA_{k':}
\right\rangle_F
+
\delta_{kk'}(\mathbf G_W)_{ij}.
$$
Thus, the LoRA-parameter Hessian contains explicit cross-block terms between $A$ and $B$.
In particular, when $k=k'$, the mixed derivative between $B_{ik}$ and $A_{kj}$ contains the additional term
$(\mathbf G_W)_{ij}$, which arises directly from the bilinear parameterization $BA$.
Whenever the right-hand side of Eq.~\eqref{eq:lora_cross_maintext} is nonzero for some $(i,j,k,k')$, the LoRA-parameter Hessian has a nonzero cross-block entry that cannot be represented by a diagonal preconditioner.}

\begin{proof}
Let 
\begin{equation}
f(A,B)\triangleq \mathcal L(W_0+BA).
\end{equation}
We compute the mixed second derivative between $B_{ik'}$ and $A_{kj}$.

First, since
\begin{equation}
\frac{\partial W}{\partial A_{kj}} = B_{:k}e_j^\top,
\end{equation}
the first derivative with respect to $A_{kj}$ is
\begin{equation}
\frac{\partial f}{\partial A_{kj}}
=
\left\langle \mathbf G_W,\, B_{:k}e_j^\top \right\rangle_F,
\end{equation}
where $\mathbf G_W=\nabla_W\mathcal L(W)$ is evaluated at $W=W_0+BA$.

Now differentiate this expression with respect to $B_{ik'}$.
There are two contributions. The first comes from the dependence of
$\mathbf G_W$ on $W$, and the second comes from the explicit dependence of
$B_{:k}e_j^\top$ on $B$:
\begin{align}
\frac{\partial^2 f}{\partial B_{ik'}\partial A_{kj}}
&=
\left\langle
\frac{\partial \mathbf G_W}{\partial B_{ik'}},
\, B_{:k}e_j^\top
\right\rangle_F
+
\left\langle
\mathbf G_W,
\, \frac{\partial (B_{:k}e_j^\top)}{\partial B_{ik'}}
\right\rangle_F .
\end{align}
Since
\begin{equation}
\frac{\partial W}{\partial B_{ik'}} = e_iA_{k':},
\end{equation}
the first term can be written as
\begin{equation}
\left\langle
\frac{\partial \mathbf G_W}{\partial B_{ik'}},
\, B_{:k}e_j^\top
\right\rangle_F
=
\left\langle
\mathbf H_W[e_iA_{k':}],
\, B_{:k}e_j^\top
\right\rangle_F
=
\left\langle
\mathbf H_W[B_{:k}e_j^\top],
\, e_iA_{k':}
\right\rangle_F,
\end{equation}
where the last equality follows from the symmetry of the Hessian.

For the second term, we have
\begin{equation}
\frac{\partial (B_{:k}e_j^\top)}{\partial B_{ik'}}
=
\delta_{kk'} e_i e_j^\top.
\end{equation}
Therefore,
\begin{equation}
\left\langle
\mathbf G_W,
\, \frac{\partial (B_{:k}e_j^\top)}{\partial B_{ik'}}
\right\rangle_F
=
\delta_{kk'}\left\langle \mathbf G_W, e_i e_j^\top \right\rangle_F
=
\delta_{kk'}(\mathbf G_W)_{ij}.
\end{equation}
Combining the two terms gives
\begin{equation}
\frac{\partial^2 \mathcal L}{\partial B_{ik'}\partial A_{kj}}
=
\left\langle
\mathbf H_W[B_{:k}e_j^\top],\, e_iA_{k':}
\right\rangle_F
+
\delta_{kk'}(\mathbf G_W)_{ij},
\end{equation}
which proves Eq.~\eqref{eq:lora_cross_maintext}.

Combining the two terms gives Eq.~\eqref{eq:lora_cross_maintext}.
The term $\delta_{kk'}(\mathbf G_W)_{ij}$ arises from the second derivative of the bilinear map $BA$ with respect to $B_{ik'}$ and $A_{kj}$.
Therefore, whenever the right-hand side is nonzero for some $(i,j,k,k')$, the LoRA-parameter Hessian contains a nonzero cross-block entry between $A$ and $B$.
\end{proof}

\subsection{Proof of Theorem~\ref{thm:subspace_consistency}}\label{proof:jtj_hessian}

\noindent\textbf{Notation and definitions.}
A symmetric matrix $\mathbf M\in\mathbb R^{d\times d}$ is \emph{positive semidefinite (PSD)} if
$\mathbf x^\top \mathbf M \mathbf x\ge 0$ for all $\mathbf x\in\mathbb R^d$.
For two $r$-dimensional subspaces $\mathcal U,\mathcal V\subset\mathbb R^d$ with orthonormal bases
$\mathbf U,\mathbf V\in\mathbb R^{d\times r}$, the principal angles
$\Theta(\mathcal U,\mathcal V)=\mathrm{diag}(\theta_1,\ldots,\theta_r)$ are defined by
$\cos\theta_i=\sigma_i(\mathbf U^\top\mathbf V)$, where $\sigma_i(\cdot)$ denotes singular values.
We use $\|\sin\Theta(\mathcal U,\mathcal V)\|_2=\sin\theta_{\max}$ as the subspace distance.

\noindent\textbf{Setup.}
Let the validation objective be the average negative log-likelihood (NLL)
\begin{equation}
    \mathcal L_{\mathrm{val}}(\boldsymbol\theta)
=\frac{1}{n_{\mathrm{val}}}\sum_{i=1}^{n_{\mathrm{val}}}
\ell_i(\boldsymbol\theta),
\qquad
\ell_i(\boldsymbol\theta)=-\log p_{\boldsymbol\theta}(y_i\mid x_i).
\end{equation}

Denote the validation Hessian
$\mathbf H_{\mathrm{val},t}=\nabla_{\boldsymbol\theta}^2\mathcal L_{\mathrm{val}}(\boldsymbol\theta_t)$.
Let $\mathbf g_i(\boldsymbol\theta_t)=\nabla_{\boldsymbol\theta}\ell_i(\boldsymbol\theta_t)$ and
$\mathbf G_{\mathrm{val},t}\in\mathbb R^{d\times n_{\mathrm{val}}}$ stack $\mathbf g_i(\boldsymbol\theta_t)$ as columns.
We define the empirical Fisher-type proxy
\begin{equation}
\widehat{\mathbf F}_{\mathrm{val},t}
\triangleq
\frac{1}{n_{\mathrm{val}}}\mathbf G_{\mathrm{val},t}\mathbf G_{\mathrm{val},t}^\top.
\end{equation}
Note that scaling by $1/n_{\mathrm{val}}$ does not affect eigenspaces.

\begin{assumption}[Eigenspace separation]
\label{ass:eigengap}
Let $\lambda_1(\widehat{\mathbf F}_{\mathrm{val},t})\ge\cdots\ge\lambda_d(\widehat{\mathbf F}_{\mathrm{val},t})\ge 0$.
Assume the top-$r$ eigenspace of $\widehat{\mathbf F}_{\mathrm{val},t}$ is separated by an eigengap
\begin{equation}
\gamma_t \triangleq \lambda_r(\widehat{\mathbf F}_{\mathrm{val},t})
-\lambda_{r+1}(\widehat{\mathbf F}_{\mathrm{val},t})
>0 .
\end{equation}
\end{assumption}

\begin{lemma}[Gauss--Newton/Fisher decomposition for NLL \citep{papyan2020traces}]
\label{lem:gn_decomp}
Consider an NLL objective $\ell(\boldsymbol\theta) = -\log p_{\boldsymbol\theta}(y\mid x)$.
Under standard regularity assumptions (twice differentiability and interchange of derivatives),
the Hessian admits the decomposition
\begin{equation}
\nabla_{\boldsymbol\theta}^2 \ell(\boldsymbol\theta)
=
\mathbf F(\boldsymbol\theta) + \mathbf R(\boldsymbol\theta),
\end{equation}
where $\mathbf F(\boldsymbol\theta)$ is a Fisher/Gauss--Newton-type PSD curvature term,
and $\mathbf R(\boldsymbol\theta)$ collects residual curvature terms involving second derivatives of the model map
(e.g., the ``non-Gauss--Newton'' component).
Consequently, for the validation objective,
\begin{equation}
\mathbf H_{\mathrm{val},t} = \mathbf F_{\mathrm{val},t} + \mathbf R_t.
\end{equation}
\end{lemma}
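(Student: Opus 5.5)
We prove the lemma by factoring the NLL through the model's logits and applying the second-order chain rule. The output Hessian of cross-entropy is PSD, which gives the Gauss--Newton term $\mathbf F$. The remaining term carries the second derivatives of the logit map, which gives $\mathbf R$.

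\textbf{Single token.} Write $p_{\boldsymbol\theta}(y\mid x)=\mathrm{softmax}(\mathbf f_{\boldsymbol\theta}(x))_y$, where $\mathbf f_{\boldsymbol\theta}(x)\in\mathbb R^{V}$ are the logits. Then $\ell(\boldsymbol\theta)=\phi(\mathbf f_{\boldsymbol\theta}(x))$ with $\phi(\mathbf f)=-f_y+\log\sum_c e^{f_c}$. Let $\mathbf J=\partial\mathbf f/\partial\boldsymbol\theta\in\mathbb R^{V\times d}$.
\begin{itemize}
\item The first chain rule gives $\nabla_{\boldsymbol\theta}\ell=\mathbf J^\top(\mathbf p-\mathbf e_y)$, where $\mathbf p=\mathrm{softmax}(\mathbf f)$.
\item Differentiating once more, using twice differentiability of $\mathbf f_{\boldsymbol\theta}$ to interchange derivatives, gives
\begin{equation*}
\nabla^2_{\boldsymbol\theta}\ell=\mathbf J^\top\big(\mathrm{diag}(\mathbf p)-\mathbf p\mathbf p^\top\big)\mathbf J+\sum_{c=1}^{V}(p_c-\delta_{cy})\,\nabla^2_{\boldsymbol\theta} f_c .
\end{equation*}
\item Define $\mathbf F(\boldsymbol\theta)$ as the first term and $\mathbf R(\boldsymbol\theta)$ as the second. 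By construction, $\mathbf R$ involves only second derivatives of the model map, as the lemma requires.
\end{itemize}

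\textbf{PSD property and Fisher interpretation.} For any $\mathbf u$, $\mathbf u^\top(\mathrm{diag}(\mathbf p)-\mathbf p\mathbf p^\top)\mathbf u=\mathrm{Var}_{c\sim\mathbf p}(u_c)\ge 0$. Hence $\mathbf F=\mathbf J^\top(\cdot)\mathbf J$ is PSD. A direct computation also gives
\begin{equation*}
\mathbf F(\boldsymbol\theta)=\mathbb E_{\tilde y\sim p_{\boldsymbol\theta}(\cdot\mid x)}\big[\nabla\ell_{\tilde y}\nabla\ell_{\tilde y}^\top\big],
\end{equation*}
so $\mathbf F$ is exactly the Fisher information. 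This is what later links it to the empirical proxy $\widehat{\mathbf F}_{\mathrm{val},t}$. A model-agnostic alternative is the identity $\nabla^2(-\log p)=\nabla\log p\,\nabla\log p^\top-\nabla^2 p/p$. We prefer the logit factorization because it makes the PSD part explicit.

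\textbf{Sequences and the validation average.} For autoregressive NLL the loss is a sum (or mean) of token losses, $\ell=\sum_s\phi_s(\mathbf f^{(s)}_{\boldsymbol\theta})$. The token-level decomposition is linear, so summing per-token $\mathbf F^{(s)}$ and $\mathbf R^{(s)}$ preserves both properties, since a sum of PSD matrices is PSD. Averaging over validation examples by linearity of the Hessian then gives $\mathbf H_{\mathrm{val},t}=\mathbf F_{\mathrm{val},t}+\mathbf R_t$, with $\mathbf F_{\mathrm{val},t}$ PSD.

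\textbf{Main obstacle.} The calculus is routine. The only delicate point is bookkeeping in the sequence case: later tokens' logits depend on $\boldsymbol\theta$ through the teacher-forced prefix, which is fixed data. The per-token chain rule therefore applies without cross-token terms, and we will state this explicitly.
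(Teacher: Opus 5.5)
The paper gives no proof of this lemma beyond citing Papyan (2020), and your derivation is exactly the standard logit-level Gauss--Newton split $\nabla^2_{\boldsymbol\theta}\ell=\mathbf J^\top(\mathrm{diag}(\mathbf p)-\mathbf p\mathbf p^\top)\mathbf J+\sum_c(p_c-\delta_{cy})\nabla^2_{\boldsymbol\theta}f_c$ that the citation supplies, so your approach is the same. Your remaining steps are all correct: the variance argument for PSD, the identification of $\mathbf F$ with the true Fisher in the single-token case, and the extension by linearity over teacher-forced tokens and validation examples.
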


\begin{lemma}[Davis--Kahan Theorem \citep{davis1970rotation}]
\label{lem:davis_kahan}
Let $\mathbf A,\mathbf B\in\mathbb R^{d\times d}$ be symmetric, and let
$\mathcal S_r(\mathbf A)$ and $\mathcal S_r(\mathbf B)$ denote their top-$r$ eigenspaces.
Assume $\mathbf B$ has an eigengap $\gamma=\lambda_r(\mathbf B)-\lambda_{r+1}(\mathbf B)>0$.
Then
\begin{equation}
\big\|\sin\Theta\big(\mathcal S_r(\mathbf A),\mathcal S_r(\mathbf B)\big)\big\|_2
\ \le\
\frac{\|\mathbf A-\mathbf B\|_2}{\gamma}.
\end{equation}
\end{lemma}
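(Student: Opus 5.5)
The plan is the classical Sylvester-equation argument, combined with Weyl's inequality to move the eigengap from a mixed quantity onto $\mathbf B$ alone.

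\textbf{Setup.} Write $\mathbf A=\mathbf B+\mathbf E$. Let $\mathbf U\in\mathbb R^{d\times r}$ be an orthonormal basis of $\mathcal S_r(\mathbf A)$, with $\mathbf A\mathbf U=\mathbf U\boldsymbol\Lambda_1$ and $\boldsymbol\Lambda_1=\mathrm{diag}(\lambda_1(\mathbf A),\ldots,\lambda_r(\mathbf A))$. Let $\mathbf V_\perp\in\mathbb R^{d\times(d-r)}$ be an orthonormal basis of the orthogonal complement of $\mathcal S_r(\mathbf B)$, with $\mathbf V_\perp^\top\mathbf B=\boldsymbol\Lambda_2\mathbf V_\perp^\top$ and $\boldsymbol\Lambda_2=\mathrm{diag}(\lambda_{r+1}(\mathbf B),\ldots,\lambda_d(\mathbf B))$. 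By the CS decomposition, $\|\sin\Theta(\mathcal S_r(\mathbf A),\mathcal S_r(\mathbf B))\|_2=\|\mathbf V_\perp^\top\mathbf U\|_2$. So it suffices to bound the spectral norm of $\mathbf X\triangleq\mathbf V_\perp^\top\mathbf U$.

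\textbf{Step 1: Sylvester equation.} Multiply $\mathbf A\mathbf U=\mathbf U\boldsymbol\Lambda_1$ on the left by $\mathbf V_\perp^\top$ and substitute $\mathbf A=\mathbf B+\mathbf E$. This gives
\begin{equation}
\mathbf X\boldsymbol\Lambda_1-\boldsymbol\Lambda_2\mathbf X=\mathbf V_\perp^\top\mathbf E\mathbf U .
\end{equation}
Define $\delta\triangleq\lambda_r(\mathbf A)-\lambda_{r+1}(\mathbf B)$ and suppose $\delta>0$. Every eigenvalue in $\boldsymbol\Lambda_1$ is at least $\lambda_r(\mathbf A)$, and every eigenvalue in $\boldsymbol\Lambda_2$ is at most $\lambda_{r+1}(\mathbf B)$. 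The standard Bhatia--Davis--McIntosh estimate for Sylvester equations with spectra separated by an interval then gives
\begin{equation}
\|\mathbf X\|_2\le\|\mathbf E\|_2/\delta .
\end{equation}
To keep things elementary, I would first run this argument in the Frobenius norm, where the equation decouples entrywise: $X_{ij}\,(\lambda_j(\mathbf A)-\lambda_{r+i}(\mathbf B))=(\mathbf V_\perp^\top\mathbf E\mathbf U)_{ij}$. For the operator norm I would take a top singular-vector pair $(\mathbf p,\mathbf q)$ of $\mathbf X$ and test the equation against $\mathbf p^\top(\cdot)\mathbf q$. This step is where I expect the main work.

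\textbf{Step 2: Weyl's inequality.} The lemma only assumes a gap $\gamma$ for $\mathbf B$. Weyl gives $\lambda_r(\mathbf A)\ge\lambda_r(\mathbf B)-\|\mathbf E\|_2$, hence $\delta\ge\gamma-\|\mathbf E\|_2$. Split into two cases.
\begin{itemize}
\item If $\|\mathbf E\|_2\le\gamma/2$, then $\delta\ge\gamma/2$, and Step 1 yields $\|\sin\Theta\|_2\le 2\|\mathbf E\|_2/\gamma$.
\item If $\|\mathbf E\|_2>\gamma/2$, then trivially $\|\sin\Theta\|_2\le 1<2\|\mathbf E\|_2/\gamma$.
\end{itemize}

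\textbf{Caveat on the constant.} This route proves the statement with constant $2$, as in the Yu--Wang--Samworth variant. The constant $1$ cannot be obtained from a gap on $\mathbf B$ alone. Take $\mathbf B=\mathrm{diag}(1,0)$, so $\gamma=1$, and $\mathbf A=\begin{bmatrix}1/2&e\\ e&1/2\end{bmatrix}$ with $e>0$ small. Then $\|\mathbf A-\mathbf B\|_2\approx 1/2$, while the top eigenvector of $\mathbf A$ is at $45^\circ$, so $\sin\theta\approx 0.707$.

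The constant-$1$ bound is exactly what Step 1 gives with the classical Davis--Kahan separation $\delta$ in place of $\gamma$. It is also valid whenever $\gamma$ is understood as $\lambda_r(\mathbf A)-\lambda_{r+1}(\mathbf B)$. I would therefore state the proved result in either of two forms: with the mixed gap $\delta$ and constant $1$, or with $\gamma$ and constant $2$. Either form suffices for Theorem~\ref{thm:subspace_consistency}, since it only needs a bound of the form $C\,\varepsilon_t$.
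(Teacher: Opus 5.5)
Your proposal is correct, and so is your diagnosis of the statement. The paper gives no argument for this lemma beyond the citation to Davis and Kahan. The classical $\sin\Theta$ theorem assumes a \emph{mixed} separation: between the eigenvalues of $\mathbf A$ on the retained subspace and those of $\mathbf B$ on the complement. It does not assume a gap of $\mathbf B$ alone, and with a gap on $\mathbf B$ alone the constant $1$ is false.

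Your $2\times 2$ counterexample checks out: $\gamma=1$, $\|\mathbf A-\mathbf B\|_2=\sqrt{1/4+e^2}$ and $\sin\theta=1/\sqrt2$, so the bound fails for every $0<e<1/2$. An even sharper instance is $\mathbf B=\mathrm{diag}(1,0)$, $\mathbf A=\mathrm{diag}(\tfrac12-e,\tfrac12+e)$. The top eigenvectors swap, so $\sin\theta=1$, while $\|\mathbf A-\mathbf B\|_2/\gamma=\tfrac12+e$. The ratio $1/(\tfrac12+e)$ tends to $2$ as $e\to 0$, so your constant $2$ is in fact optimal when the gap is measured on $\mathbf B$ only.

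Your Step 1 is the Davis--Kahan argument in its original mixed-gap form, and it takes less work than you expect. Let $s=\|\mathbf X\|_2$ with $\mathbf X\mathbf q=s\mathbf p$ and $\mathbf X^\top\mathbf p=s\mathbf q$. Testing the Sylvester equation against $\mathbf p^\top(\cdot)\mathbf q$ gives $s\,\mathbf q^\top\boldsymbol\Lambda_1\mathbf q-s\,\mathbf p^\top\boldsymbol\Lambda_2\mathbf p=\mathbf p^\top\mathbf V_\perp^\top\mathbf E\mathbf U\mathbf q$. The left side is at least $s\delta$ and the right side is at most $\|\mathbf E\|_2$, which finishes the step. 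No external Sylvester-equation estimate and no Frobenius detour are needed; the one-sided separation of the two Hermitian spectra is what delivers constant $1$.

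Step 2 (Weyl plus the case split) is correct. Equivalently, one may state the bound $\|\mathbf E\|_2/(\gamma-\|\mathbf E\|_2)$ when $\|\mathbf E\|_2<\gamma$.

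Downstream, the appendix version of Theorem~\ref{thm:subspace_consistency} must be corrected. It places the gap on $\widehat{\mathbf F}_{\mathrm{val},t}$, which plays the role of $\mathbf B$, and concludes $\varepsilon_t/\gamma_t$; this should be $2\varepsilon_t/\gamma_t$. The main-text form, with an unspecified constant $C$, survives with $C=2/\gamma_t$, as you note.
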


\noindent\textbf{Theorem~\ref{thm:subspace_consistency}} (Eigenspace stability of the proxy). \textit{Let $\mathcal S_r(\mathbf M)$ denote the top-$r$ eigenspace of a PSD matrix $\mathbf M$.
Assume Assumption~\ref{ass:eigengap}.
For NLL objectives, let $\mathbf H_{\mathrm{val},t}$ be the validation Hessian and
$\widehat{\mathbf F}_{\mathrm{val},t}=\frac{1}{n_{\mathrm{val}}}\mathbf G_{\mathrm{val},t}\mathbf G_{\mathrm{val},t}^\top$.
Let $\mathbf F_{\mathrm{val},t}$ be a Fisher/Gauss--Newton-type curvature term from Lemma~\ref{lem:gn_decomp},
so that $\mathbf H_{\mathrm{val},t}=\mathbf F_{\mathrm{val},t}+\mathbf R_t$.
Define
\begin{equation}
\varepsilon_t
\triangleq
\|\mathbf R_t\|_2 + \|\mathbf F_{\mathrm{val},t}-\widehat{\mathbf F}_{\mathrm{val},t}\|_2 .
\end{equation}
Then the dominant subspaces are close:
\begin{equation}
\big\|\sin\Theta\big(\mathcal S_r(\mathbf H_{\mathrm{val},t}),\ \mathcal S_r(\widehat{\mathbf F}_{\mathrm{val},t})\big)\big\|_2
\ \le\ \frac{\varepsilon_t}{\gamma_t}.
\end{equation}}

\begin{proof}
By Lemma~\ref{lem:gn_decomp}, we have the decomposition
\begin{equation}
\mathbf H_{\mathrm{val},t}=\mathbf F_{\mathrm{val},t}+\mathbf R_t.
\end{equation}
Hence
\begin{equation}
\mathbf H_{\mathrm{val},t}-\widehat{\mathbf F}_{\mathrm{val},t}
=
(\mathbf F_{\mathrm{val},t}-\widehat{\mathbf F}_{\mathrm{val},t})+\mathbf R_t.
\end{equation}
Taking operator norms and applying the triangle inequality yields
\begin{equation}\label{eq:err_decop}
\|\mathbf H_{\mathrm{val},t}-\widehat{\mathbf F}_{\mathrm{val},t}\|_2
\le
\|\mathbf F_{\mathrm{val},t}-\widehat{\mathbf F}_{\mathrm{val},t}\|_2
+
\|\mathbf R_t\|_2
=
\varepsilon_t.
\end{equation}
Under Assumption~\ref{ass:eigengap}, $\widehat{\mathbf F}_{\mathrm{val},t}$ has an eigengap $\gamma_t>0$ at rank $r$.
Applying Lemma~\ref{lem:davis_kahan} with
$\mathbf A=\mathbf H_{\mathrm{val},t}$ and $\mathbf B=\widehat{\mathbf F}_{\mathrm{val},t}$ gives
\begin{equation}
\big\|\sin\Theta\big(\mathcal S_r(\mathbf H_{\mathrm{val},t}),\ \mathcal S_r(\widehat{\mathbf F}_{\mathrm{val},t})\big)\big\|_2
\le
\frac{\|\mathbf H_{\mathrm{val},t}-\widehat{\mathbf F}_{\mathrm{val},t}\|_2}{\gamma_t}
\le
\frac{\varepsilon_t}{\gamma_t},
\end{equation}
which proves the claim.
\end{proof}

\begin{remark}[Theoretical Necessity of Warmup]
\label{rem:warmup_theory}
Theorem~\ref{thm:subspace_consistency} establishes that the fidelity of our subspace recovery is bounded by the ratio $\varepsilon_t/\gamma_t$. At initialization ($t=0$), the high negative log-likelihood implies a large non-Gauss-Newton residual term $\|\mathbf{R}_t\|_2$ (inflating $\varepsilon_t$) and a chaotic gradient spectrum with a negligible eigengap $\gamma_t$.
Therefore, a lightweight warmup is not merely a heuristic but a theoretical prerequisite. It drives the optimization trajectory into a local basin where the residual curvature decays ($\|\mathbf{R}_t\|_2 \to 0$, validating the Gauss-Newton approximation $\mathbf{H} \approx \mathbf{F}$) and the intrinsic task-specific structure emerges (maximizing $\gamma_t$). This ensures \texttt{GIST} operates in a regime where the empirical proxy is mathematically guaranteed to align with the true Hessian geometry.
\end{remark}

\section{Baseline Details}\label{app:baseline_details}

In this section, we provide detailed implementation specifications for all baseline methods compared in our experiments. We categorize these methods into four groups: Random Selection, Hardness-aware Heuristics, Similarity-based Methods, and Optimizer-based Methods.
For baseline approaches that involve stochasticity, we perform three runs with different random seeds and report the average performance and standard deviation.

\subsection{Random Selection}
We employ \textbf{Random Selection} as a standard baseline, which uniformly samples data points from the candidate pool $\mathcal{D}$ until the target budget $k$ is met. Despite its simplicity, random selection often serves as a strong baseline in instruction tuning.

\noindent\textbf{Random.}
We uniformly sample data points from the entire candidate pool $\mathcal{D}$ until the budget is met.

\subsection{Hard Example Mining}
These methods select data based on intrinsic properties of the examples (e.g., difficulty or length), independent of the specific target task. In our geometric view (Section~\ref{subsec:unified_view}), these methods typically prioritize gradient magnitude over direction.

\noindent\textbf{Length.}
We sort examples by length (in tokens) and take the longest samples. This has been shown to be a strong baseline by \citet{zhao2024long}, and is computationally cheap, only requiring computing the length of each sample in our data pool.

\noindent\textbf{Perplexity (PPL).}
We compute the loss (perplexity) of each sample $d \in \mathcal{D}$ using the pre-trained base model, following prior work \citep{yincompute,antonello2021selecting,marion2023less,anknerperplexed}. 
Consistent with \citet{yincompute}, we select samples with the highest loss values (hardest samples).

\subsection{Similarity-based Methods}
These methods select training samples $d \in \mathcal{D}$ that are semantically close to the target validation examples $v \in \mathcal{D}_{\text{val}}$.

\noindent\textbf{Embedding.} 
Following standard practices in \citet{ivison2025large}, we employ the \textbf{GTR-Base} model \citep{ni2022large} as the external encoder. 
For every candidate sample $d$ and validation sample $v$, we compute their embeddings using GTR-Base and calculate the cosine similarity score. The final score for a candidate $d$ is its maximum similarity to any example in the validation set.

\noindent\textbf{RDS+.} 
We adopt the \textbf{RDS+} method proposed by \citet{ivison2025large}. 
Unlike standard embedding approaches, RDS+ utilizes the base model's own internal representations to measure similarity. 
Following \citet{ivison2025large}, we extract the hidden states from the last layer and apply a position-weighted mean pooling to derive the representation for each sequence. This weighted pooling strategy is designed to better capture the instruction-following semantics compared to standard mean pooling.
We use the official codebase provided by the authors\footnote{\url{https://github.com/hamishivi/automated-instruction-selection}} for implementation.

\subsection{Optimizer-based Methods}
These methods estimate the gradient-based influence of training samples derived from optimizer statistics (e.g., Adam’s
second-moment estimates on the validation loss.

\noindent\textbf{LESS.}
We follow the procedure outlined in \citet{xia2024less}. We first train an auxiliary LoRA model on a random subset of the data. Then, we compute the influence score for each pair $(v, d)$ using gradient projections.
Crucially, LESS utilizes the optimizer states (Adam's moments) to re-scale gradients. 
We use the official codebase provided by the authors\footnote{\url{https://github.com/princeton-nlp/LESS}} for implementation.
We follow the paper’s default setup, using 4 epochs and setting the random projection dimension to 8192. Also we adopt the reported performance of LESS on Llama2-7B from the original paper.

\section{Training}\label{app:training}
\subsection{Training Datasets}
\label{app:train_data}
We use the same four preprocessed training datasets as in \citet{wang2023far}. All are human-written or human-annotated; details are provided in \Cref{tab:train_data}. \textsc{flan v2} and \textsc{cot} are derived from existing NLP benchmarks, whereas \textsc{dolly} and \textsc{open assistant 1} contain open-ended generation examples with human-written responses. These datasets differ substantially in format, length, and task type, highlighting the diversity of instruction-tuning data. Following \citet{wang2023far}, we standardize them using the same “Tulu” format.

\begin{tcolorbox}
\textbf{\textless\textbar user\textbar\textgreater} \\
Why can camels survive for long without water?
\\\\
\textbf{\textless\textbar assistant\textbar\textgreater} \\
Camels use the fat in their humps to keep them filled with energy and hydration for long periods of time.
\end{tcolorbox}

\begin{table}[h]
    \caption{Datails of training dataset from \citet{wang2023far}. Len. is short 
    for token length. }
    \centering
    \resizebox{0.96\textwidth}{!}{%
    \begin{tabular}{lrlccc}
        \toprule
        
    \textbf{Dataset}                & \textbf{\# Instance} & \textbf{Sourced from}                              & \textbf{\# Rounds} & \textbf{Prompt Len.} & \textbf{Completion Len.} \\ \cmidrule(r){1-6}
    \textsc{flan v2} & 100,000                         & NLP datasets and human-written instructions & 1                             & 355.7                          & 31.2                               \\
    \textsc{cot}   & 100,000  & NLP datasets and human-written CoTs         & 1  & 266   & 53.2   \\
    \textsc{dolly}  & 15,011                          & Human-written from scratch                & 1                             & 118.1                          & 91.3                               \\ 
    \textsc{open assistant 1} & 55,668                          & Human-written from scratch                & 1.6                           & 34.8                           & 212.5     \\ \bottomrule                        
    \end{tabular}}
    \label{tab:train_data}
\end{table}

\subsection{Infrastructure and Implementation. }
All experiments were conducted on a system equipped with NVIDIA A100 GPU and AMD EPYC 7473X 24-core processor (48 threads).

\subsection{Training Details}
\label{app:training_setup}
We adopt the training configuration from \citet{xia2024less}. We implemented parameter-efficient fine-tuning across all experiments using LoRA~\citep{hu2022lora}. Training was performed for 4 epochs with a batch size of 128, utilizing a learning rate scheduler that combines linear warm-up with cosine decay, peaking at $2\times10^{-5}$.For the LoRA configuration, we targeted all attention matrices with a rank of 128, $\alpha=512$, and a dropout rate of 0.1. This setup resulted in 135M trainable parameters ($1.95\%$) for Llama2-7B, 73M ($2.23\%$) for Llama3.2-3B, and 34M ($2.21\%$) for Qwen2.5-1.5B.To ensure robustness, each experiment was repeated over three trials using distinct random seeds. For random selection baselines, this involved sampling three unique subsets from the training data. For our proposed method, distinct subsets were selected from separate models that had undergone warmup training on varied data partitions. Optimization seeds remained consistent throughout all experiments.

\section{Evaluation}\label{app:eval}
We follow \citet{wang2023far} to evaluate the performance of the models on the target tasks. For \textsc{MMLU}, we measure the 5-shot accuracy of the test set averaged across 57 subtasks. For \textsc{TydiQA}, we measure the 1-shot macro-averaged F1 score across all 11 languages. We adopt the gold-passage setup where one passage containing the reference answer is provided to the model. For \textsc{BBH}, we report the average 3-shot exact match score across all tasks. Chain-of-thought reasoning is provided in each in-context learning example to prompt the model to generate chain-of-thought reasoning traces for test examples. We evaluate on the validation set $\mathcal{D}_{\text{val}}$ (the same reference set used for data selection) at the end of each epoch and select the best checkpoint to evaluate on the final test set for each experiment. Note that this procedure might introduce some bias to the final test set, given that the validation set is relatively small (e.g., \textsc{TydiQA} only has 9 validation examples in total). According to \citet{xia2024less}, this bias doesn't affect the comparisons between different methods.

\section{More Experiment Results}

\subsection{Gradient Spectrum Analysis}\label{app:eigen}
\begin{figure}[h]
    \centering
    \begin{subfigure}[b]{0.32\textwidth}
        \centering
        \includegraphics[width=\linewidth]{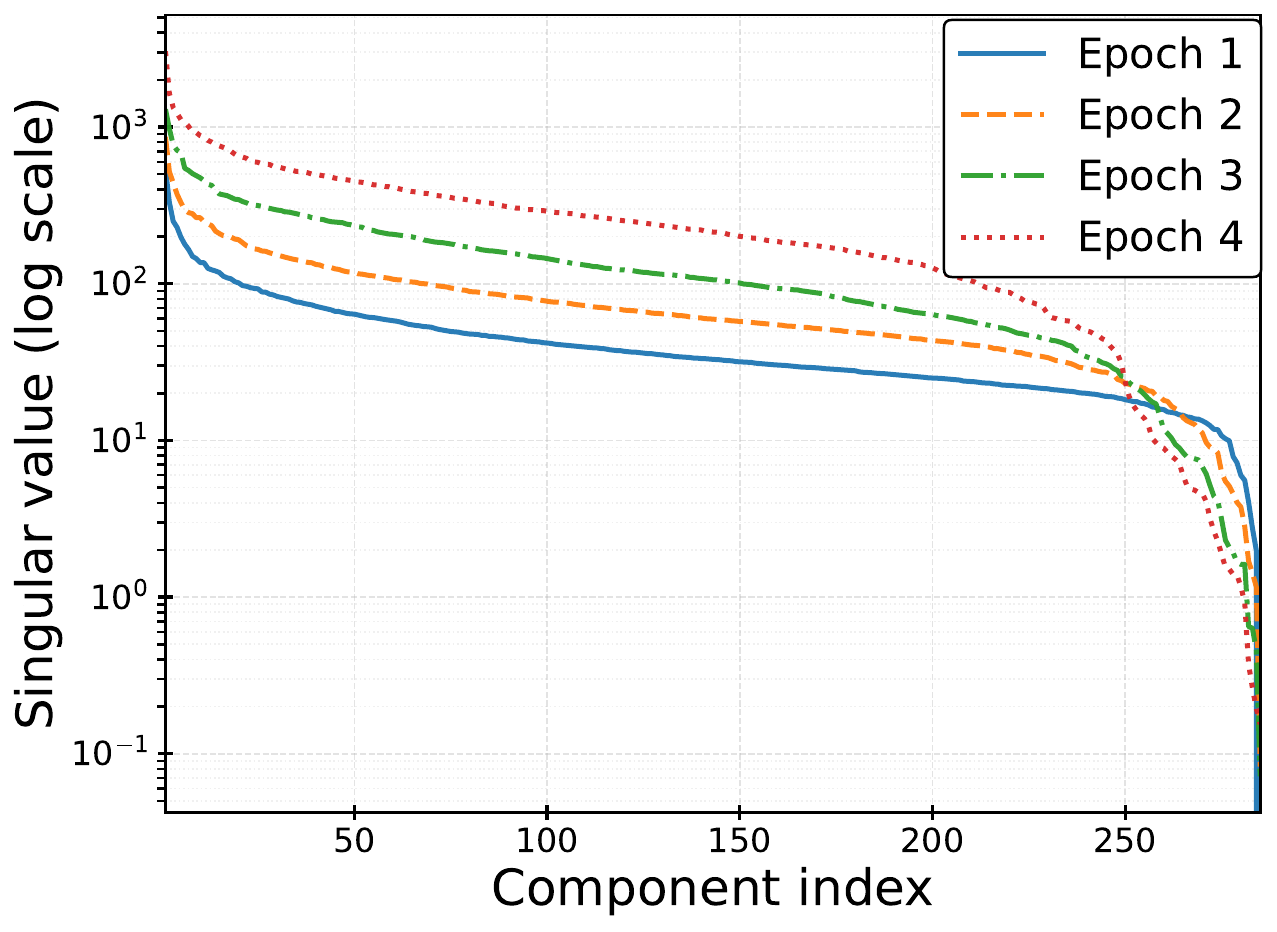}
        \caption{Llama2-7B (Singular Values)}
        \label{fig:llama2_sv}
    \end{subfigure}
    \hfill
    \begin{subfigure}[b]{0.32\textwidth}
        \centering
        \includegraphics[width=\linewidth]{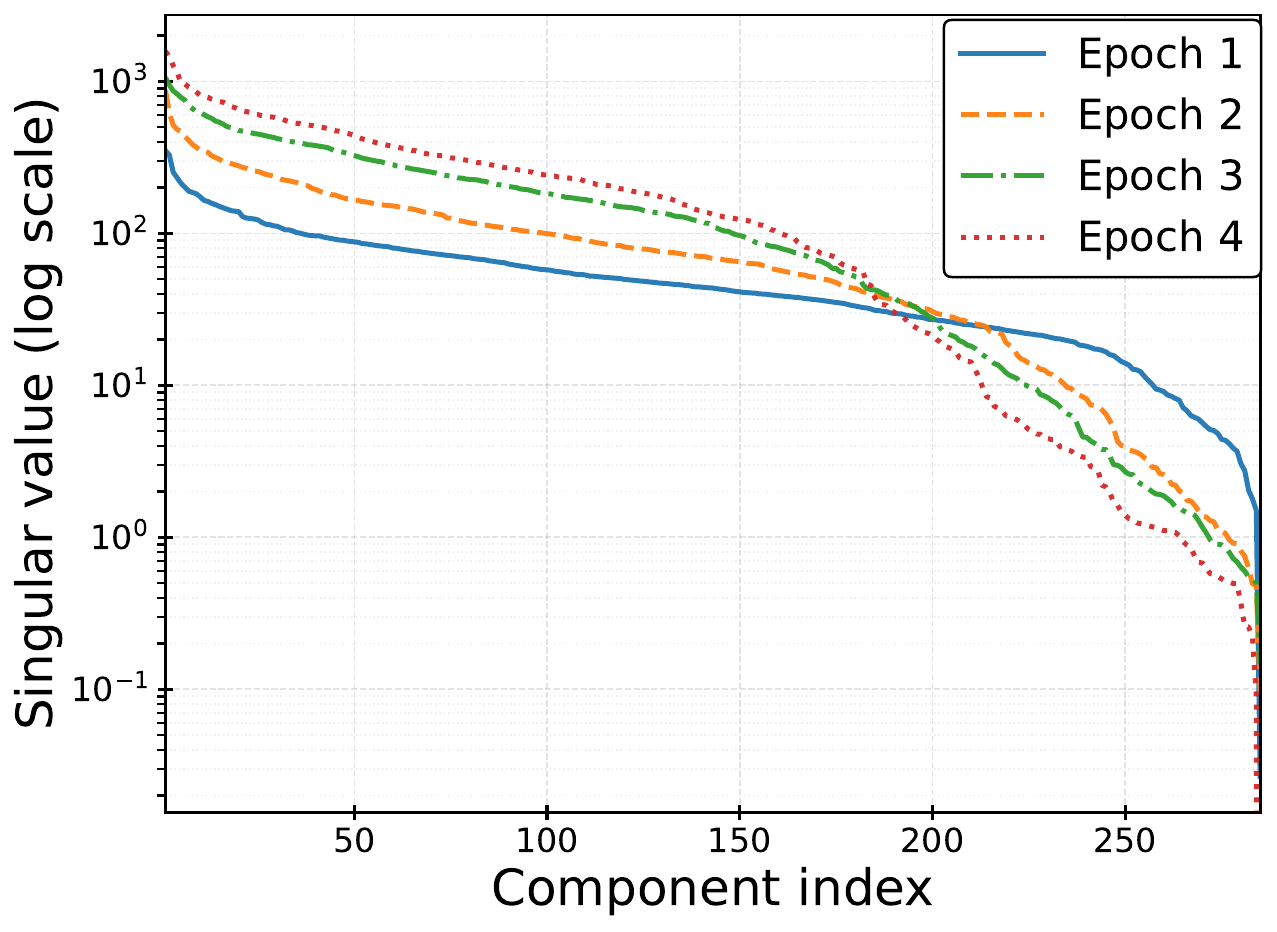}
        \caption{Llama3.2-3B (Singular Values)}
        \label{fig:llama3_sv}
    \end{subfigure}
    \hfill
    \begin{subfigure}[b]{0.32\textwidth}
        \centering
        \includegraphics[width=\linewidth]{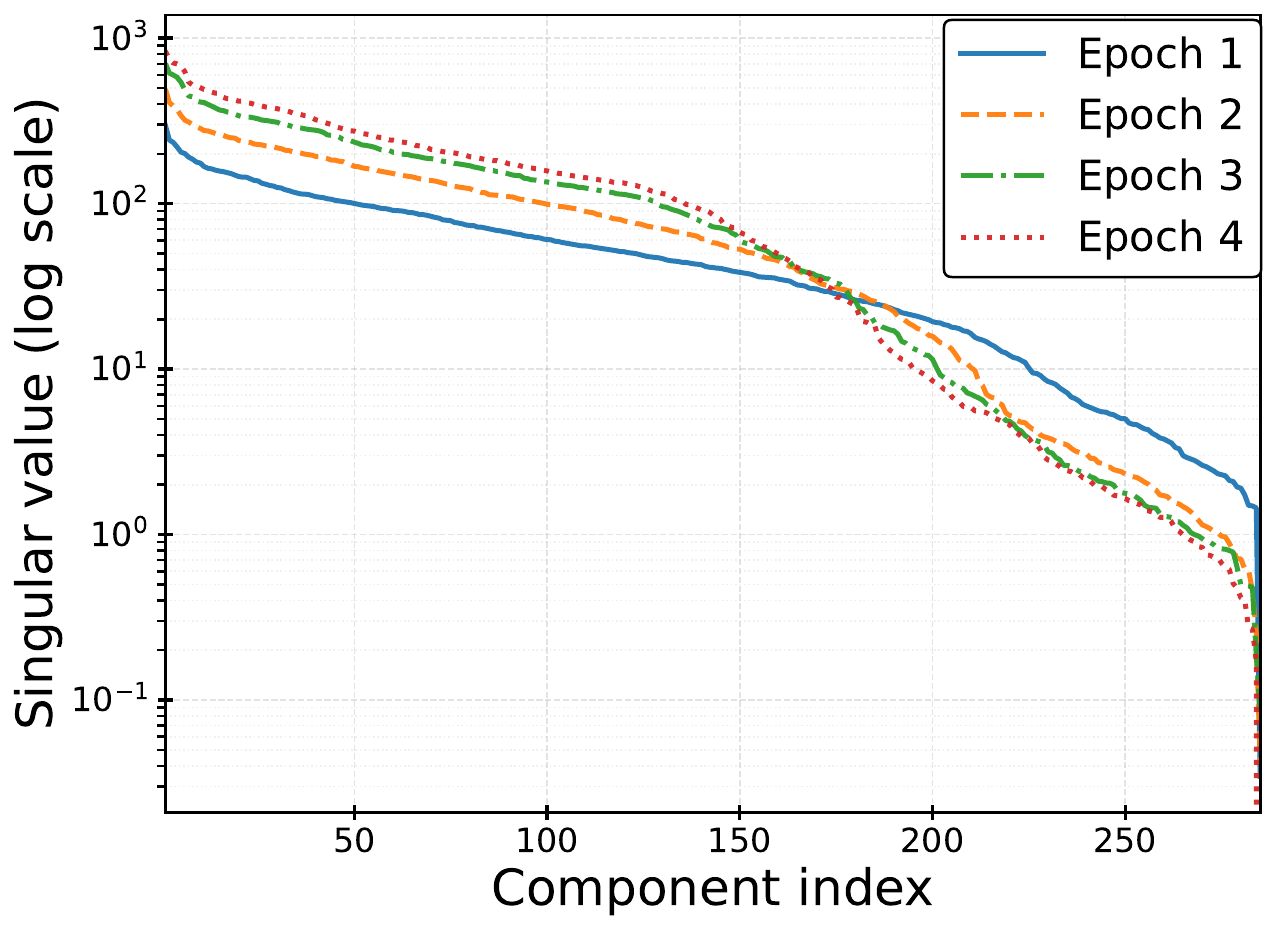}
        \caption{Qwen2.5-1.5B (Singular Values)}
        \label{fig:qwen_sv}
    \end{subfigure}
    
    \vspace{1em} 
    
    \begin{subfigure}[b]{0.32\textwidth}
        \centering
        \includegraphics[width=\linewidth]{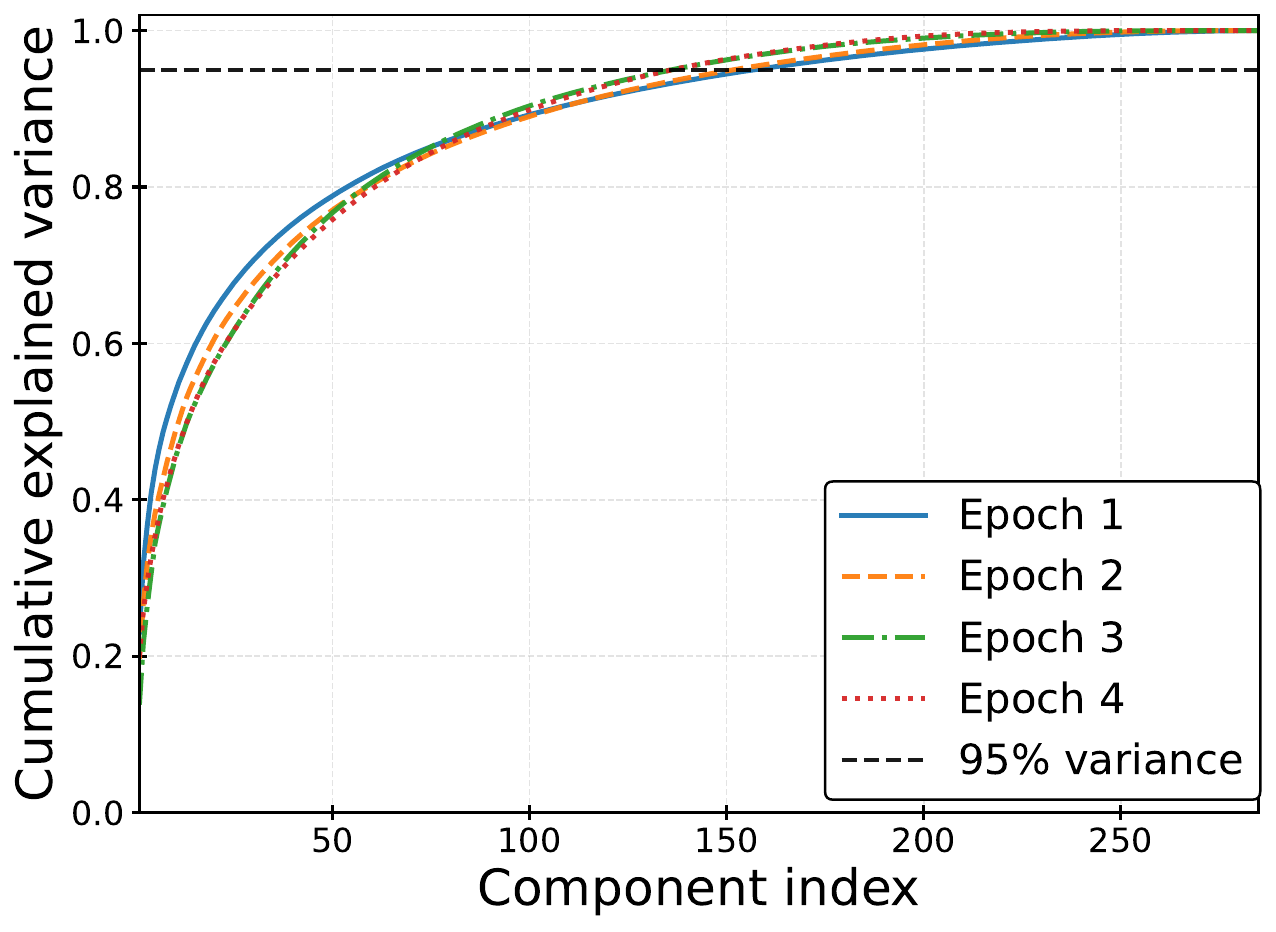}
        \caption{Llama2-7B (Explained Variance)}
        \label{fig:llama2_var}
    \end{subfigure}
    \hfill
    \begin{subfigure}[b]{0.32\textwidth}
        \centering
        \includegraphics[width=\linewidth]{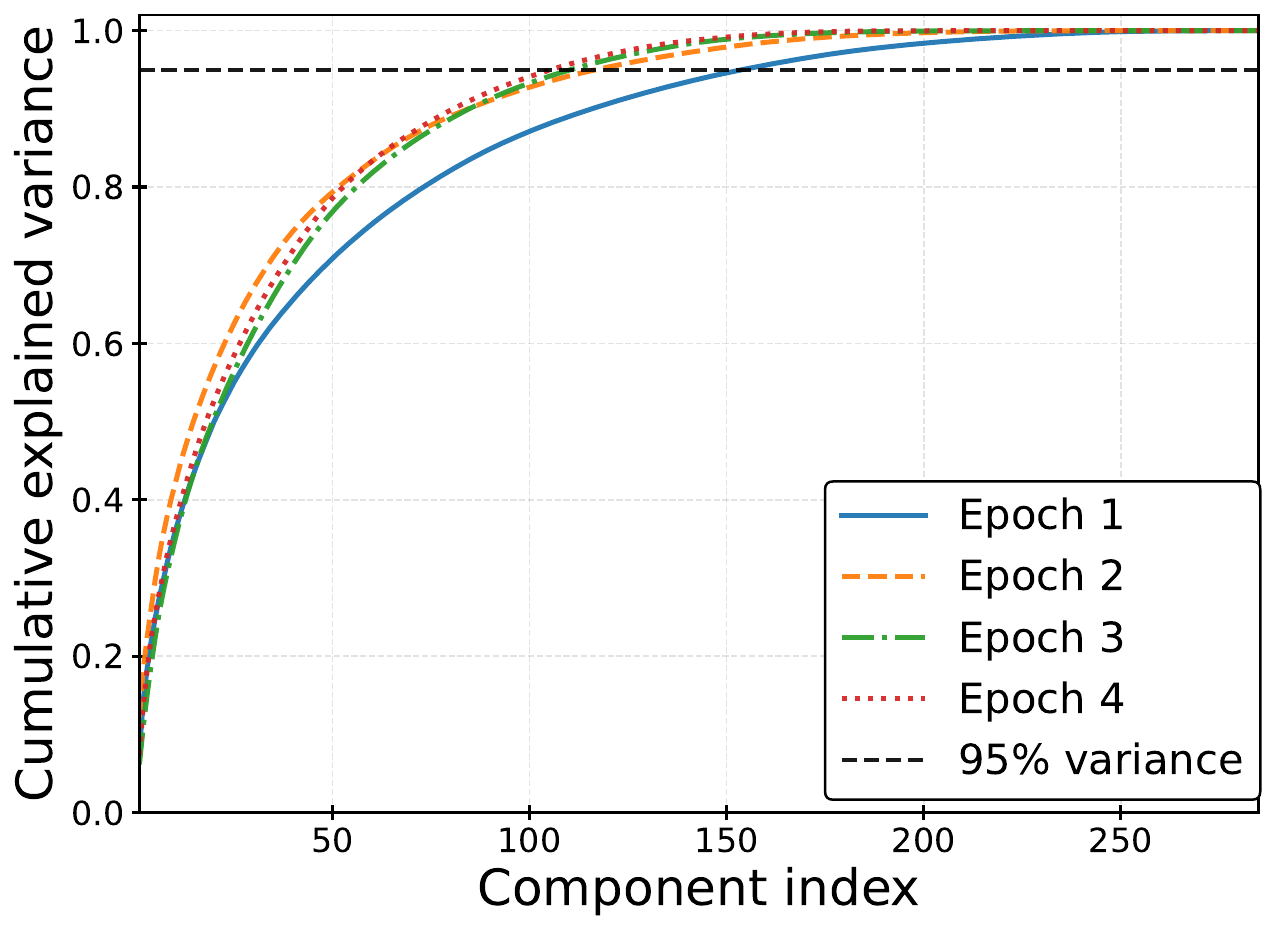}
        \caption{Llama3.2-3B (Explained Variance)}
        \label{fig:llama3_var}
    \end{subfigure}
    \hfill
    \begin{subfigure}[b]{0.32\textwidth}
        \centering
        \includegraphics[width=\linewidth]{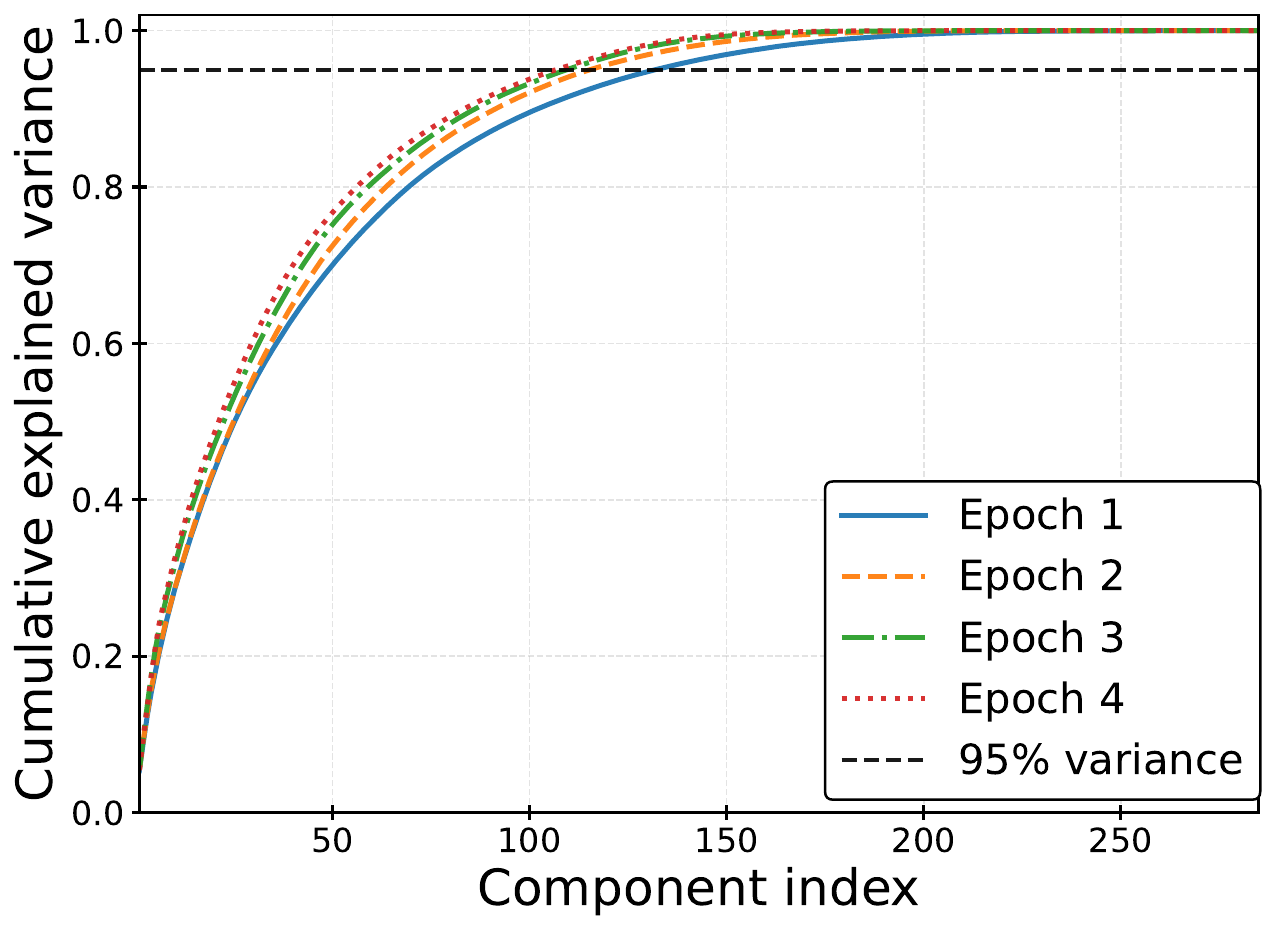}
        \caption{Qwen2.5-1.5B (Explained Variance)}
        \label{fig:qwen_var}
    \end{subfigure}
    
    \caption{\textbf{Spectral Analysis of Gradient Subspaces across Models and Epochs.} 
    \textbf{Top Row:} Singular value spectra (log scale) of the gradient covariance matrix. Early epochs (blue solid lines) show slower decay, indicating a higher-dimensional optimization landscape. 
    \textbf{Bottom Row:} Cumulative explained variance. Later epochs (dotted red lines) reach 95\% variance with fewer components, signaling dimensional shrinkage.
    Notably, larger models (e.g., Llama2-7B) exhibit a slower spectral decay compared to smaller counterparts, reflecting a higher intrinsic dimensionality for task adaptation.}
    \label{fig:spectral_analysis_multi}
\end{figure}

\noindent\textbf{Early training stages preserve a higher effective dimension, while larger models exhibit richer intrinsic structures.} Figure~\ref{fig:spectral_analysis_multi} visualizes the spectral properties of the target gradient covariance matrices across different training epochs and model architectures. First, we observe that earlier epochs (e.g., Epoch 1) consistently exhibit a ``heavier tail'' in their singular value distribution compared to later epochs. As training progresses, the spectrum decays more rapidly, and the cumulative variance rises sharper, indicating a collapse of the optimization trajectory into a lower-dimensional subspace. This suggests that the ``warm-up'' stage contains diverse and exploratory gradient signals crucial for robust data selection, whereas later stages become over-specialized. 
Regarding model scale, models with larger parameter counts demonstrate a higher intrinsic dimension. By comparing the singular value decay rates, larger models maintain significant singular values across a wider range of components, whereas smaller models show a faster spectral drop-off. This implies that larger models rely on a more complex and high-dimensional feature subspace for task adaptation, necessitating a rank-adaptive selection strategy like \texttt{GIST}.

\begin{figure}[t]
    \centering
    \begin{subfigure}[b]{0.32\textwidth}
        \centering
        \includegraphics[width=\linewidth]{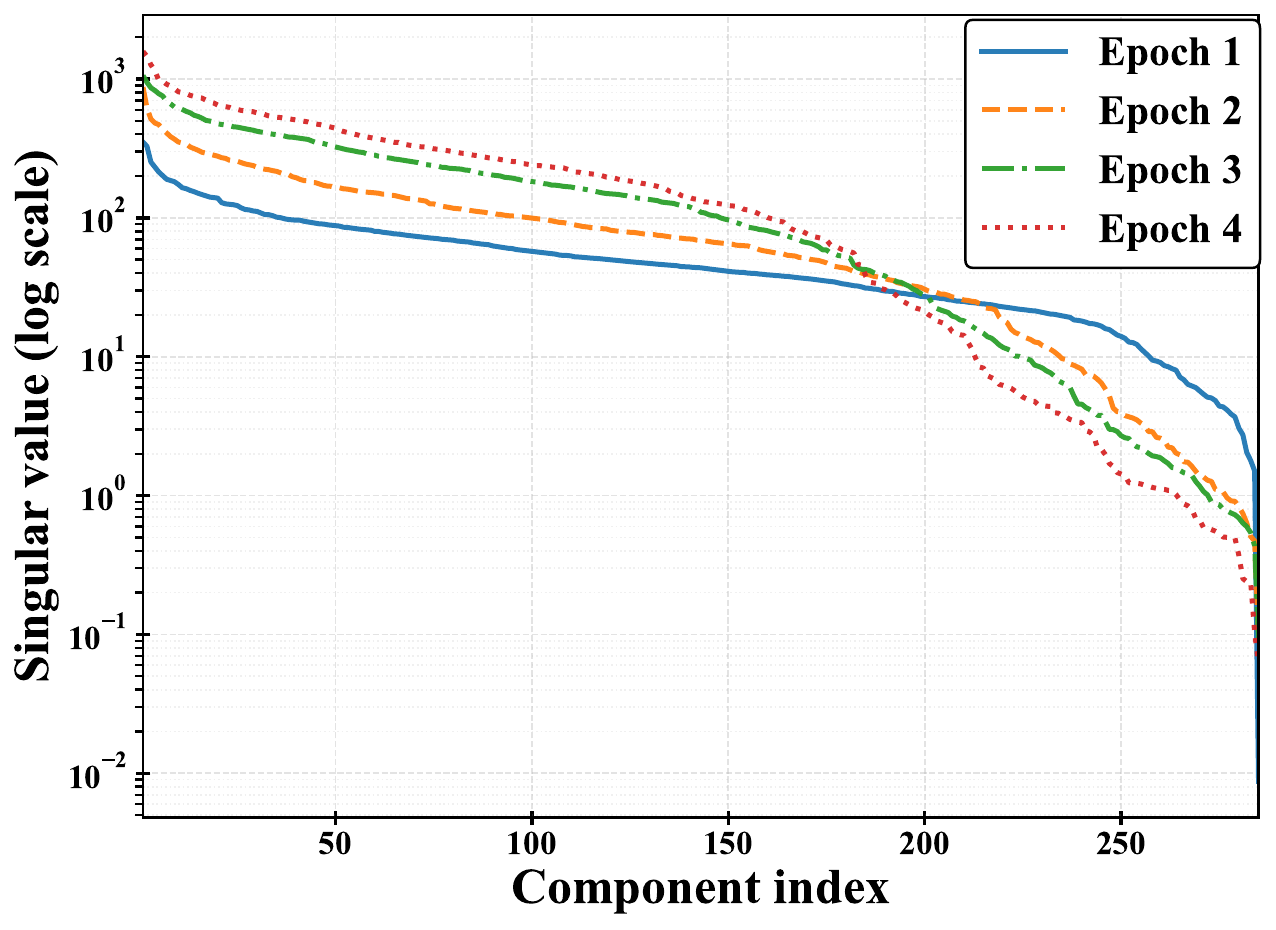}
        \caption{\textsc{MMLU} ($|\mathcal{D}_{\text{val}}|=285$)}
        \label{fig:MMLU_sv}
    \end{subfigure}
    \hfill
    \begin{subfigure}[b]{0.32\textwidth}
        \centering
        \includegraphics[width=\linewidth]{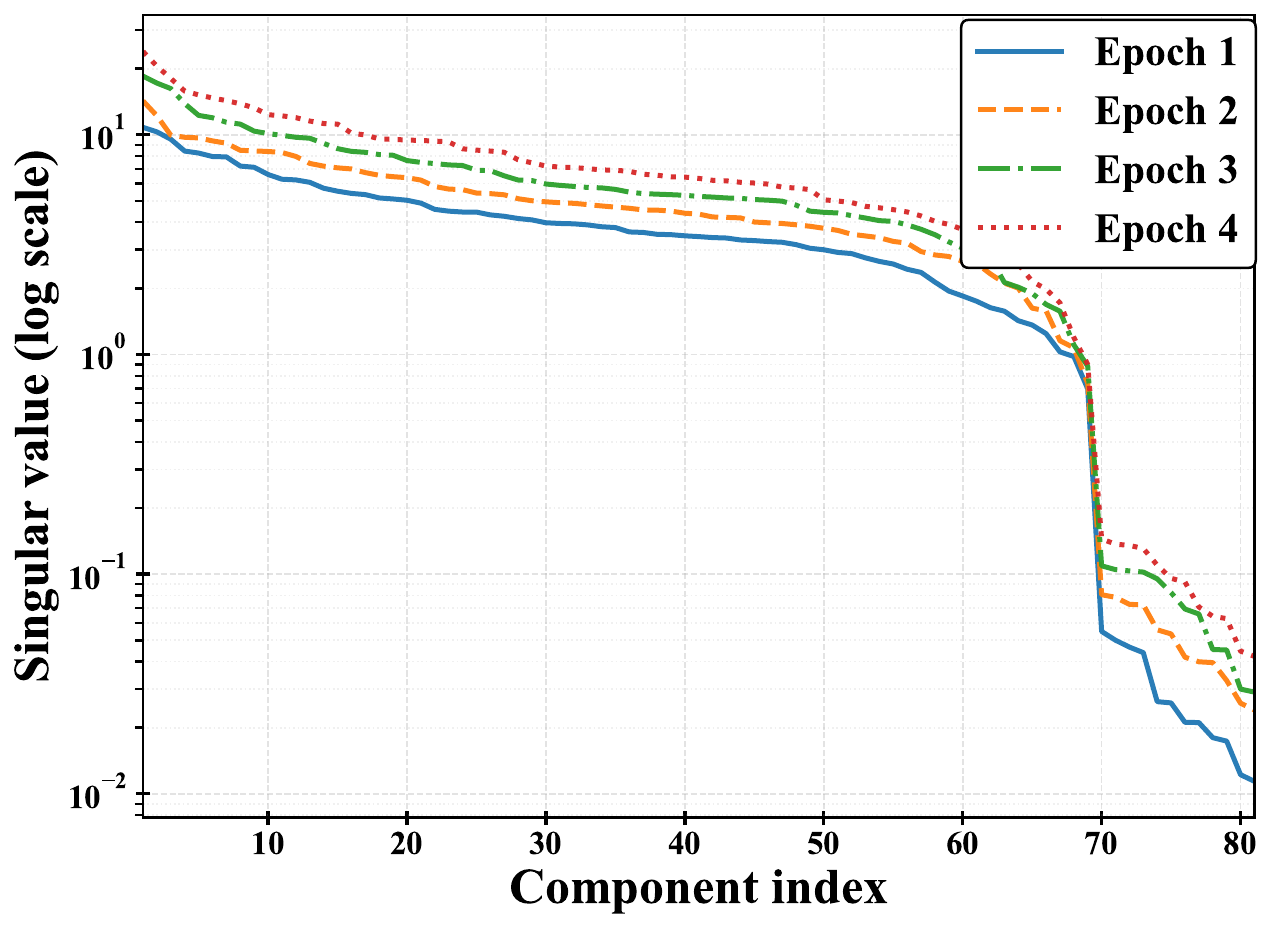}
        \caption{\textsc{BBH} ($|\mathcal{D}_{\text{val}}|=81$)}
        \label{fig:BBH_sv}
    \end{subfigure}
    \hfill
    \begin{subfigure}[b]{0.32\textwidth}
        \centering
        \includegraphics[width=\linewidth]{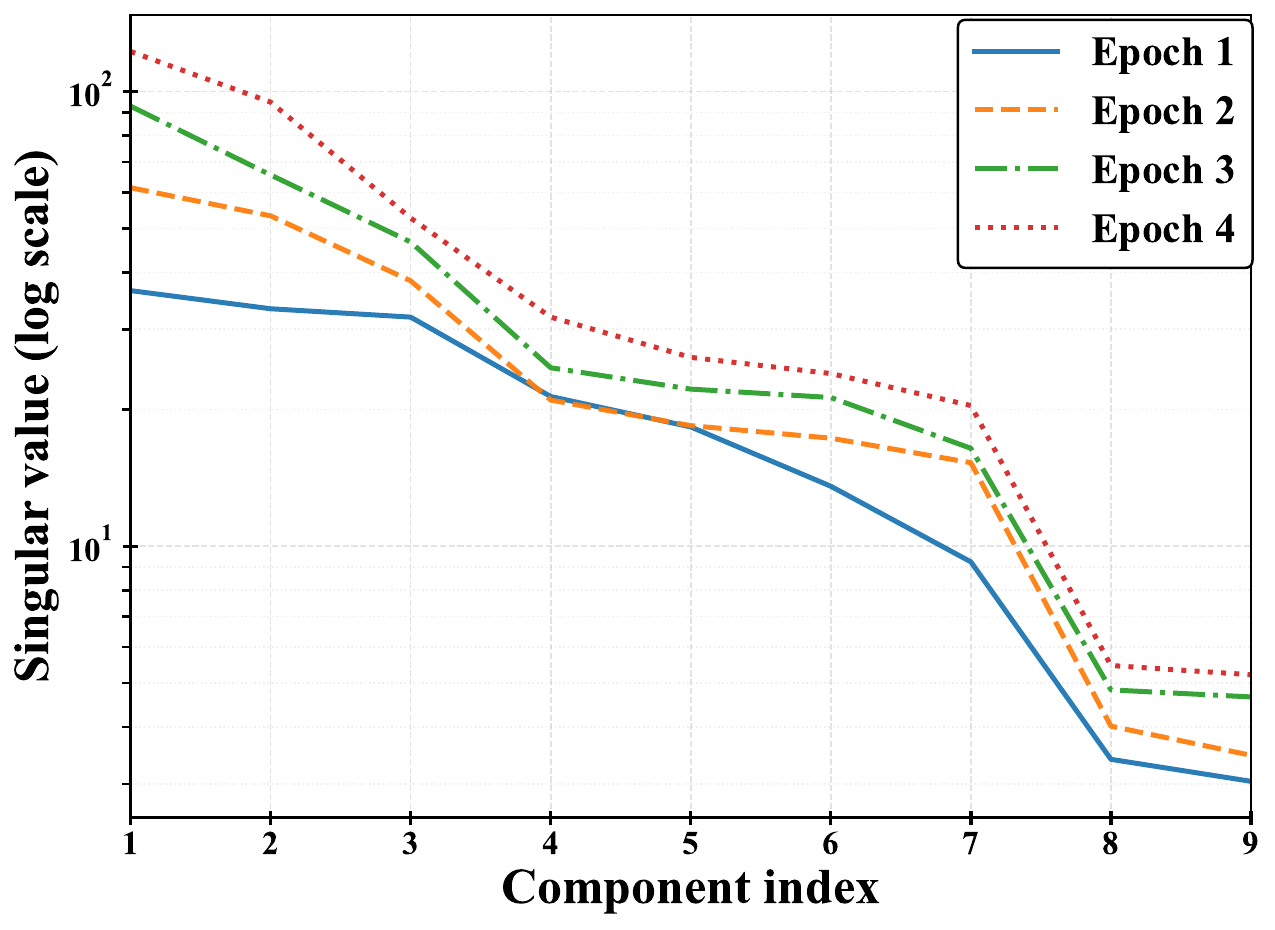}
        \caption{\textsc{TydiQA} ($|\mathcal{D}_{\text{val}}|=9$)}
        \label{fig:TydiQA_sv}
    \end{subfigure}
    
    \vspace{0.5em} 
    
    \begin{subfigure}[b]{0.32\textwidth}
        \centering
        \includegraphics[width=\linewidth]{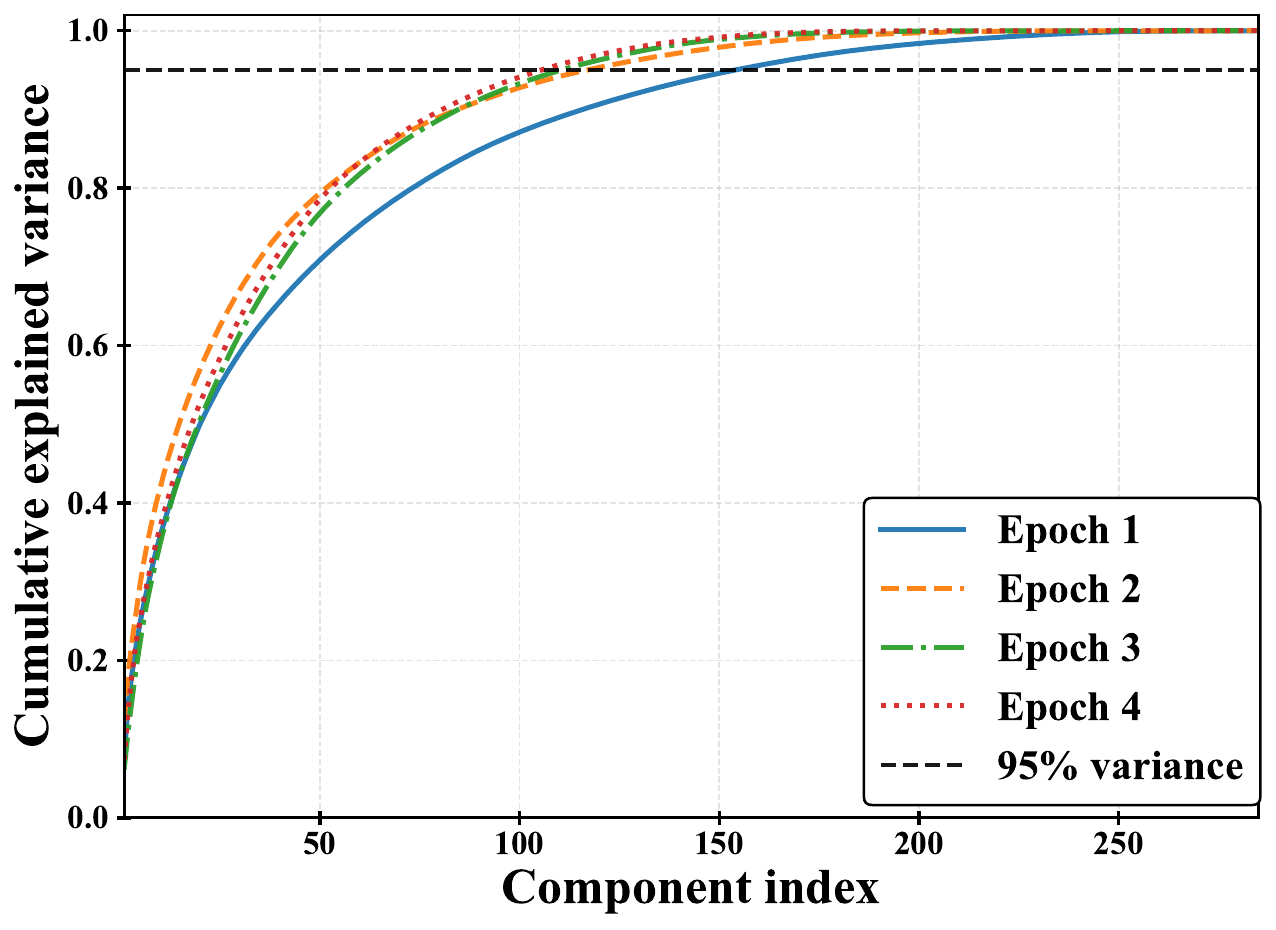}
        \caption{\textsc{MMLU} (Explained Variance)}
        \label{fig:MMLU_var}
    \end{subfigure}
    \hfill
    \begin{subfigure}[b]{0.32\textwidth}
        \centering
        \includegraphics[width=\linewidth]{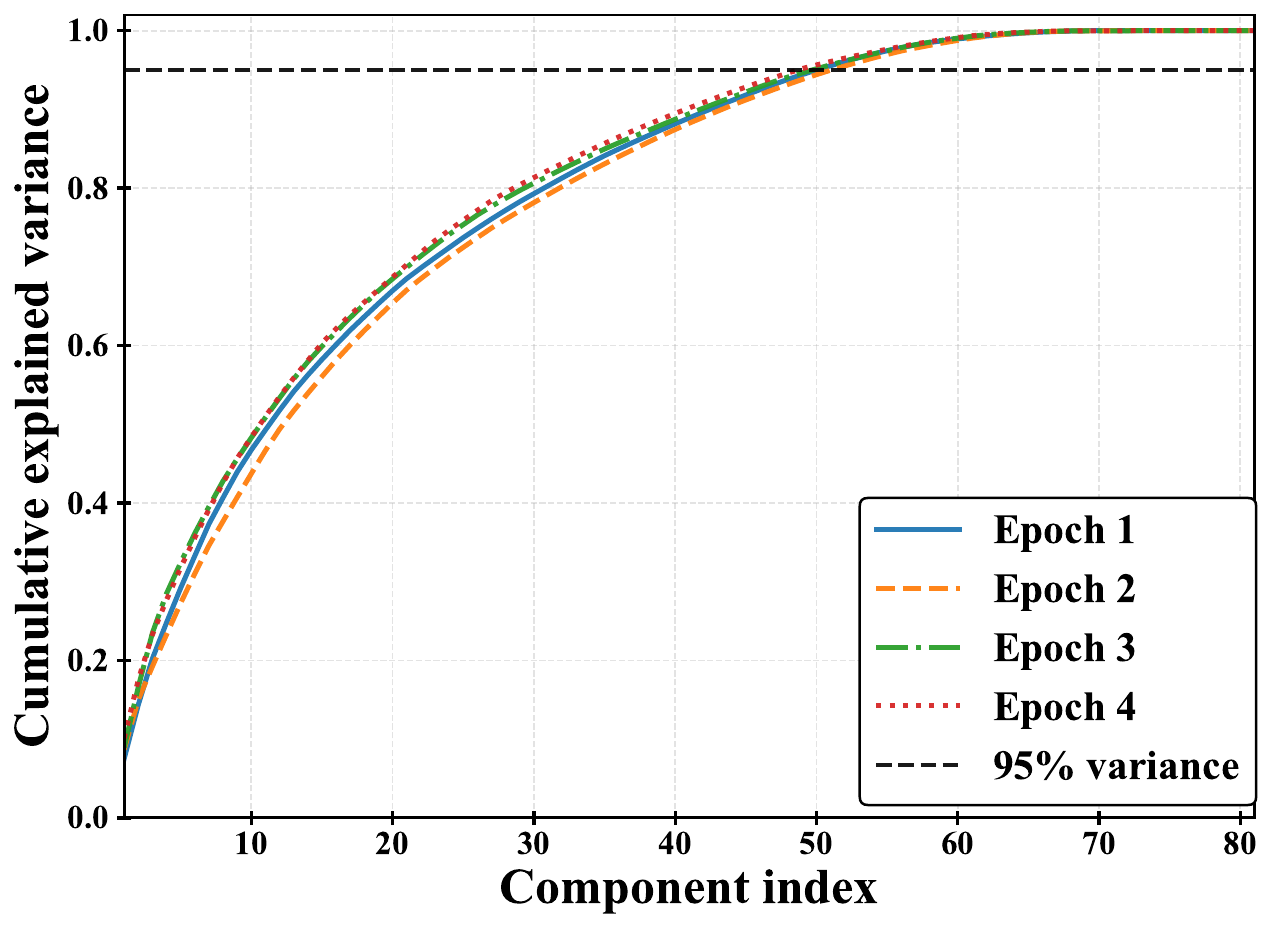}
        \caption{\textsc{BBH} (Explained Variance)}
        \label{fig:BBH_var}
    \end{subfigure}
    \hfill
    \begin{subfigure}[b]{0.32\textwidth}
        \centering
        \includegraphics[width=\linewidth]{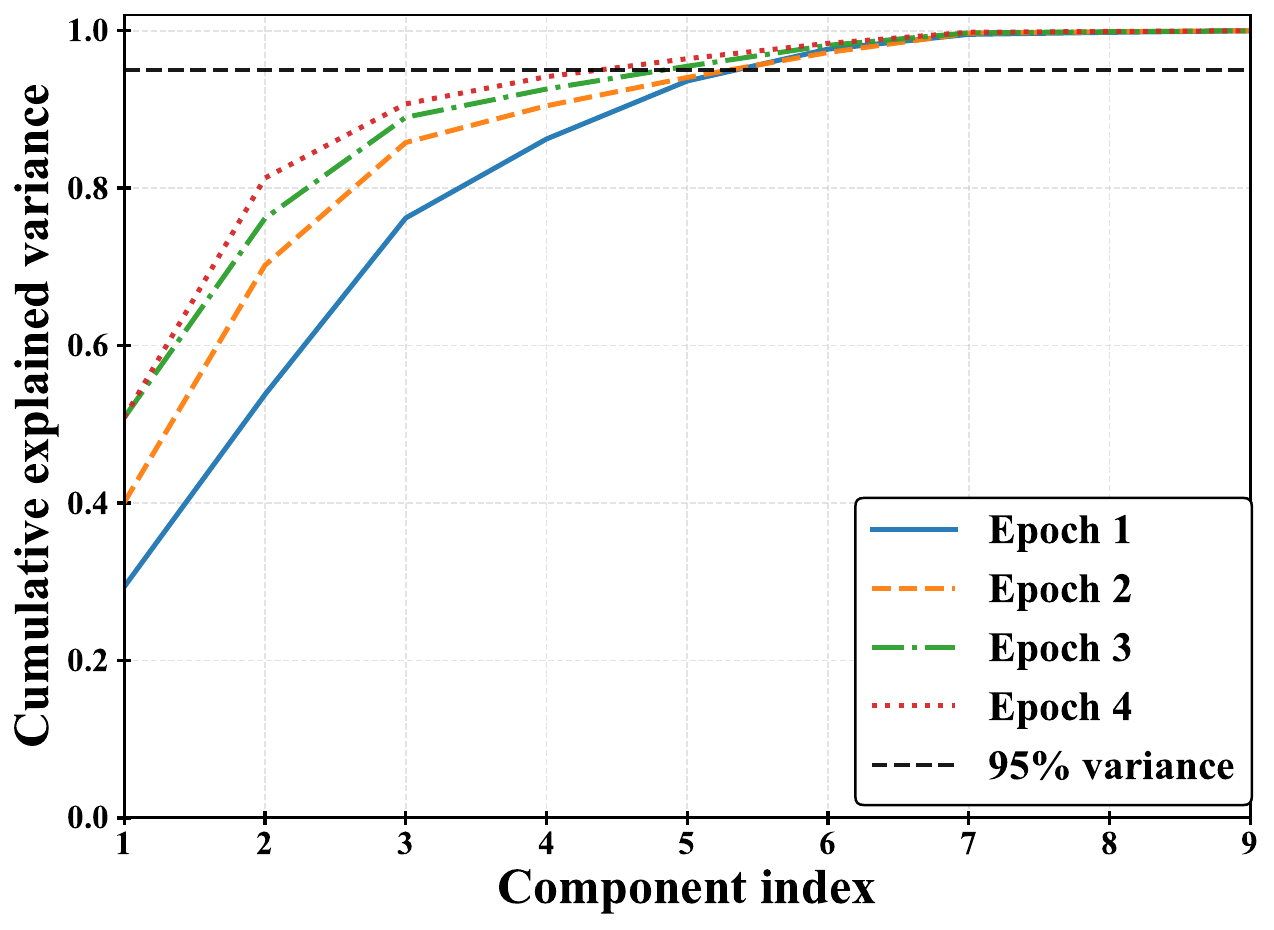}
        \caption{\textsc{TydiQA} (Explained Variance)}
        \label{fig:TydiQA_var}
    \end{subfigure}
    
    \caption{\textbf{Impact of Dataset Scale and Task Type on Gradient Geometry.} 
    We analyze the spectral properties of Llama3.2-3B gradients across three datasets with varying sizes.
    \textbf{Top Row:} Singular value spectra show that intrinsic dimension scales with data size. \textsc{MMLU} exhibits a smooth, heavy-tailed decay, whereas \textsc{TydiQA} suffers from extreme spectral sparsity due to data scarcity.
    \textbf{Bottom Row:} Cumulative explained variance. Note that across all scales, \textbf{Epoch 1 (blue solid lines) consistently preserves a higher effective dimension} (slower variance saturation) compared to later epochs (red dotted lines). This confirms that early-stage gradients are essential for preventing subspace collapse, especially in low-resource regimes like \textsc{TydiQA}.}
    \label{fig:dataset_spectral_analysis}
\end{figure}

\noindent\textbf{Spectral filtering is essential to counteract subspace underspecification in low-resource regimes.}
Figure~\ref{fig:dataset_spectral_analysis} reveals a critical disparity in gradient geometry across data scales.
While the data-rich \textsc{MMLU} exhibits a smooth, heavy-tailed singular value distribution (indicating a well-covered feature space), \textsc{BBH} and \textsc{TydiQA} display marked spectral sparsity.
Specifically, for these smaller datasets, the limited sample size is insufficient to span the effective optimization subspace.
Unlike \textsc{MMLU}, the eigenvalues of \textsc{BBH} and \textsc{TydiQA} decrease precipitously, indicating that the empirical geometry is severely rank-deficient.
Consequently, the vast majority of the parameter space collapses into the null space of the empirical estimator.
In this underspecified regime, any full-rank approximation (like diagonal inversion) would be ill-posed: the variance of the preconditioner would be erroneously dominated by these null-space directions (where curvature is near-zero and inversion explodes), amplifying noise rather than signal.
This observation fundamentally justifies \texttt{GIST}'s \textbf{spectral filtering}:
we must rigorously filter for the dominant principal components to recover the valid subspace, explicitly discarding the noisy null space to prevent subspace collapse.

\noindent\textbf{Adaptive Rank Selection Strategy.}
To determine the optimal subspace rank $r$ across diverse tasks, we adopt a \textbf{spectral-thresholding strategy with a few-shot safety constraint}.
Generally, we define the rank $r$ as the minimum number of components required to capture 95\% of the spectral variance of the gradient covariance matrix (calibrated on Llama2-7B). However, for extremely low-resource tasks, we introduce a full-rank retention rule to prevent information loss due to estimation variance.
Specifically:
\begin{itemize}[leftmargin=0.25cm, nosep]
    \item \textbf{\textsc{MMLU} ($r=150$):} The validation gradients exhibit a heavy-tailed distribution, requiring $\approx 53\%$ of the components (150 out of 285) to reach the 95\% variance threshold, confirming the high intrinsic dimensionality of the task.
    \item \textbf{\textsc{BBH} ($r=50$):} The spectrum saturates more rapidly. A rank of 50 captures the dominant 95\% of signals (out of 81), effectively filtering out the tail components associated with optimization noise.
    \item \textbf{\textsc{TydiQA} ($r=9$):} Although the 95\% variance threshold is met at $r \approx 5$, the absolute sample size ($|\mathcal{D}_{\text{val}}|=9$) is critically small. In such \textbf{extreme few-shot regimes}, the statistical stability of spectral estimation is limited, and the computational cost of full-rank retention is negligible. Therefore, we override the threshold and retain the full rank ($r=|\mathcal{D}_{\text{val}}|=9$) to ensure zero information loss.
\end{itemize}
This hybrid strategy balances theoretical rigor with practical robustness for data-scarce scenarios.

\subsection{Warmup and Training Dynamics of LoRA Finetuning}\label{app:training_analysis}

\begin{figure*}[h]
    \centering
    \begin{subfigure}{0.24\textwidth}
        \centering
        \includegraphics[width=\linewidth]{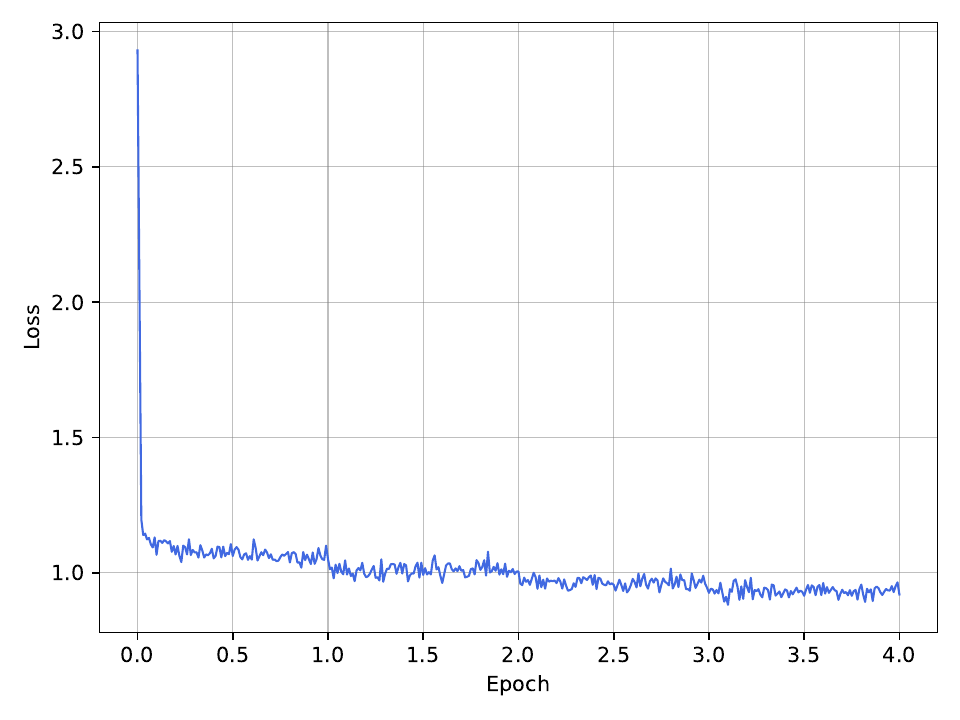}
        \caption{Qwen2.5-1.5B (270k)}
        \label{fig:loss_qwen_270k}
    \end{subfigure}
    \hfill
    \begin{subfigure}{0.24\textwidth}
        \centering
        \includegraphics[width=\linewidth]{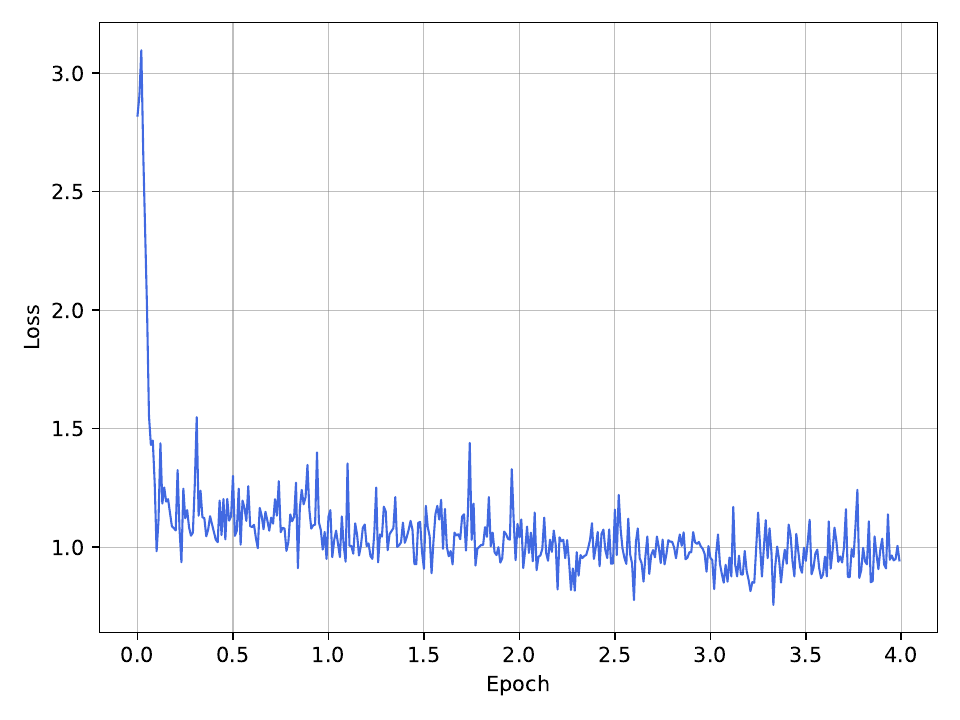}
        \caption{Qwen2.5-1.5B (13.5k)}
        \label{fig:loss_qwen_13k}
    \end{subfigure}
    \hfill
    \begin{subfigure}{0.24\textwidth}
        \centering
        \includegraphics[width=\linewidth]{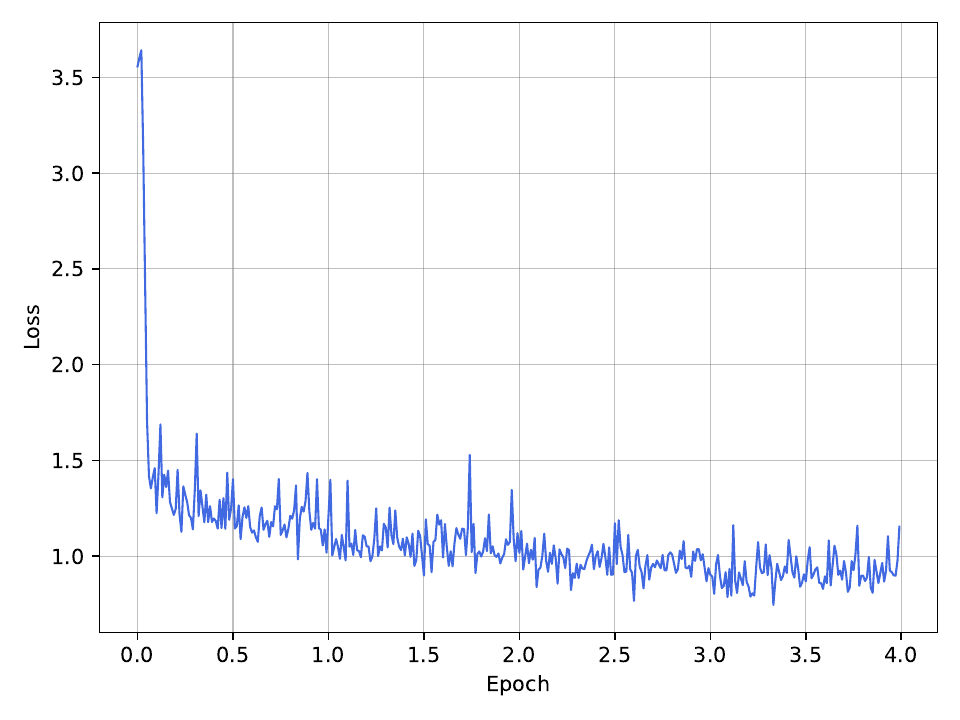}
        \caption{Llama3.2-3B (13.5k)}
        \label{fig:loss_llama3_13k}
    \end{subfigure}
    \hfill
    \begin{subfigure}{0.24\textwidth}
        \centering
        \includegraphics[width=\linewidth]{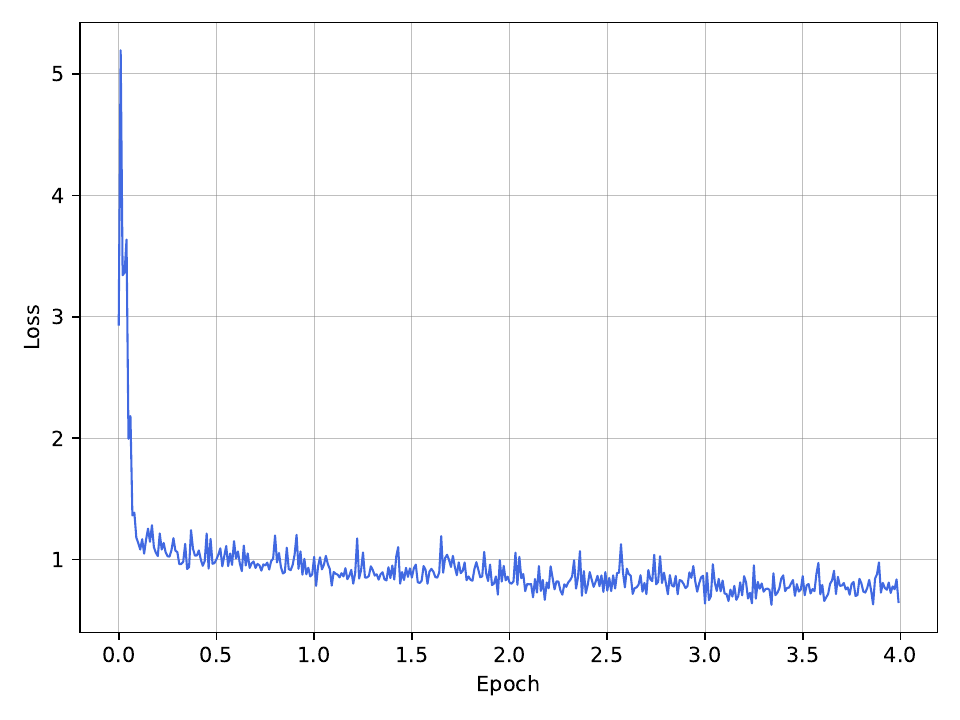}
        \caption{Llama2-7B (13.5k)}
        \label{fig:loss_llama2_13k}
    \end{subfigure}
    
\caption{\textbf{Rapid loss convergence in the first epoch creates a stable geometric basin.} We observe consistent training dynamics across varying data scales (13.5k vs.\ 270k) and model architectures, where the loss drops precipitously within the initial phase ($<0.5$ epoch) before entering a bounded oscillatory regime.}
    \label{fig:loss_curves}
\end{figure*}

To verify the assumptions underlying Theorem~\ref{thm:subspace_consistency}, we analyze the training dynamics of multiple instruction-tuned models (Qwen2.5, Llama3.2, Llama2) across different data scales. As illustrated in Figure~\ref{fig:loss_curves}, we consistently observe a distinct two-phase phenomenon: a precipitous drop in loss within the first epoch, followed by a stable oscillatory regime. This behavior is critical for the validity of our spectral approximation. The rapid initial descent effectively minimizes the non-Gauss-Newton residual term $\|\mathbf{R}_t\|_2$ (Eq.~\eqref{eq:err_decop}), fulfilling the prerequisite for the Fisher information to dominate the Hessian geometry. Furthermore, this transition aligns with the characterization of the Adam optimizer as an "oscillator" in the later stages of training~\citep{cohen2021gradient, smith2018don}. Once the adaptive second-moment estimates $\hat{\mathbf{v}}_t$ stabilize after the initial shock, the optimization trajectory becomes confined to a low-dimensional basin. In this regime, while the parameters may oscillate locally, the principal subspace of the gradients remains directionally consistent, thereby justifying our strategy of extracting geometric signals from the early-stabilized warmup checkpoints.


\subsection{Cross-Model Transferability}

In this section, we ask \textit{whether a subset selected by one model can also benefit other models}.
We denote by \texttt{GIST-T} the transfer setting where we select data using Qwen2.5-1.5B and then fine-tune Llama2-7B and Llama3.2-3B on the selected subset; results are reported in Table~\ref{tab:gist_vs_gistt_two_models}.
Overall, transfer remains effective and consistently improves over the base model, but it is generally weaker than running \texttt{GIST} with the target model.
Interestingly, unlike \textsc{TydiQA} and \textsc{MMLU}, the performance drop on \textsc{BBH} is negligible under transfer.

\begin{table}[h]
    \centering
    \caption{Main results with \texttt{GIST} and transfer \texttt{GIST-T}.
\texttt{GIST} selects 5\% of training data using the target model itself.
\texttt{GIST-T} selects 5\% data \emph{once} using Qwen2.5-1.5B and then reuses the same selected subset to fine-tune other models.}
    \resizebox{0.68\textwidth}{!}{
    \begin{tabular}{lccc|lccc}
        \toprule
        & \multicolumn{3}{c|}{\textbf{Llama2-7B}} & & \multicolumn{3}{c}{\textbf{Llama3.2-3B}} \\
        \cmidrule(lr){2-4}\cmidrule(lr){6-8}
        & \textbf{Base} & \textbf{\texttt{GIST}} & \textbf{\texttt{GIST-T}} & & \textbf{Base} & \textbf{\texttt{GIST}} & \textbf{\texttt{GIST-T}} \\
        \midrule
        \textsc{MMLU}   & 45.6 & 51.2\textcolor{gray}{\scriptsize$\pm$0.7} & 47.0\textcolor{gray}{\scriptsize$\pm$0.4}
                       & \textsc{MMLU}   & 53.9 & 56.1\textcolor{gray}{\scriptsize$\pm$0.4} & 54.5\textcolor{gray}{\scriptsize$\pm$0.1} \\
        \textsc{TydiQA} & 46.4 & 55.8\textcolor{gray}{\scriptsize$\pm$0.6} & 52.7\textcolor{gray}{\scriptsize$\pm$0.7}
                       & \textsc{TydiQA} & 60.4 & 69.2\textcolor{gray}{\scriptsize$\pm$0.3} & 65.0\textcolor{gray}{\scriptsize$\pm$1.4} \\
        \textsc{BBH}    & 38.3 & 41.8\textcolor{gray}{\scriptsize$\pm$0.8} & 41.2\textcolor{gray}{\scriptsize$\pm$0.3}
                       & \textsc{BBH}    & 45.5 & 48.0\textcolor{gray}{\scriptsize$\pm$0.5} & 46.8\textcolor{gray}{\scriptsize$\pm$0.6} \\
        \bottomrule
    \end{tabular}}
    \label{tab:gist_vs_gistt_two_models}
\end{table}



\subsection{Ablation of Warmup Stage}

Because our analysis in Theorem~\ref{thm:subspace_consistency} relies on a Newton–Gauss decomposition of the Hessian, we clarify that a brief warmup phase is necessary to drive the loss to a sufficiently small regime. In this section, we empirically validate the importance of warmup. As shown in Table~\ref{tab:warmup_ablation}, \texttt{GIST} with a one-epoch warmup consistently outperforms its no-warmup variant across \textsc{MMLU}, \textsc{TydiQA}, and \textsc{BBH}.

\begin{table}[h]
\centering
\caption{Warmup ablation on Llama3.2-3B under the same 5\% selection budget. Removing warmup consistently degrades performance across tasks.}
\label{tab:warmup_ablation}
\scalebox{0.95}{
\begin{tabular}{l|c c c}
\toprule
\textbf{Task} &
\textbf{Base} &
\textbf{\texttt{GIST w/o} warmup} &
\textbf{\texttt{GIST}} \\
\midrule
\textsc{MMLU}   & 53.9 & 53.0\textcolor{gray}{\scriptsize$\pm$0.2} & \textbf{56.1}\textcolor{gray}{\scriptsize$\pm$0.4} \\
\textsc{TydiQA} & 60.4 & 64.5\textcolor{gray}{\scriptsize$\pm$1.7} & \textbf{69.2}\textcolor{gray}{\scriptsize$\pm$0.3} \\
\textsc{BBH}    & 45.5 & 46.2\textcolor{gray}{\scriptsize$\pm$0.4} & \textbf{48.0}\textcolor{gray}{\scriptsize$\pm$0.5} \\
\bottomrule
\end{tabular}
}
\vspace{-2mm}
\end{table}


\subsection{Influence of LoRA Rank}
Table~\ref{tab:lora_rank_ablation_llama32_3b} reveals that \texttt{GIST} remains exceptionally robust under a tighter PEFT bottleneck (LoRA rank $r{=}8$ with 4.6M trainable parameters, accounting 0.14\% for Llama3.2-3B), while the baseline LESS degrades noticeably. On \textsc{TydiQA}, reducing the rank causes LESS to drop from 67.1\% to 65.0\%, whereas \texttt{GIST} effectively retains its performance (67.5\%), outperforming LESS by a clear margin. Remarkably, on \textsc{BBH}, \texttt{GIST} at $r{=}8$ (48.4\%) even slightly exceeds its own $r{=}128$ result (48.0\%), confirming that our selected subsets are highly data-efficient.

\begin{table*}[h]
\centering
\renewcommand{\arraystretch}{1.2} 
\setlength{\tabcolsep}{8pt}       
\caption{LoRA rank ablation on Llama3.2-3B (5\% selection budget).
We compare the robustness of \texttt{GIST} against LESS under different LoRA rank constraints.
Our method (\texttt{GIST}) maintains significant gains even in the low-rank setting ($r{=}8$).
Gray scripts denote standard deviations across 3 runs.}
\label{tab:lora_rank_ablation_llama32_3b}

\scalebox{0.95}{
\begin{tabular}{l c c cc c cc}
\toprule
\multirow{2.5}{*}{\textbf{Dataset}} &
\multirow{2.5}{*}{\textbf{Base}} &
\multirow{2.5}{*}{\textbf{Random}} &
\multicolumn{2}{c}{\textbf{Default Setting}} & &
\multicolumn{2}{c}{\textbf{Low-rank Ablation}} \\
& & & \multicolumn{2}{c}{\small (LoRA $r{=}128$)} & & \multicolumn{2}{c}{\small (LoRA $r{=}8$)} \\
\cmidrule(lr){4-5} \cmidrule(lr){7-8}
& & &
\textbf{LESS} & \textbf{\texttt{GIST}  } & &
\textbf{LESS} & \textbf{\texttt{GIST}  } \\
\midrule

\textsc{MMLU} &
\textit{53.9} & 
53.2\textcolor{gray}{\scriptsize{$\pm$0.6}} &
\textbf{56.5}\textcolor{gray}{\scriptsize{$\pm$0.6}} & 56.1\textcolor{gray}{\scriptsize{$\pm$0.4}} & & 
\textbf{56.4}\textcolor{gray}{\scriptsize{$\pm$0.2}} & 55.2\textcolor{gray}{\scriptsize{$\pm$0.3}} \\

\textsc{TydiQA} &
\textit{60.4} &
64.1\textcolor{gray}{\scriptsize{$\pm$0.4}} &
67.1\textcolor{gray}{\scriptsize{$\pm$0.8}} & \textbf{69.2}\textcolor{gray}{\scriptsize{$\pm$0.3}} & &
65.0\textcolor{gray}{\scriptsize{$\pm$0.9}} & \textbf{67.5}\textcolor{gray}{\scriptsize{$\pm$0.6}} \\

\textsc{BBH} &
\textit{45.5} &
45.1\textcolor{gray}{\scriptsize{$\pm$0.2}} &
46.1\textcolor{gray}{\scriptsize{$\pm$0.7}} & \textbf{48.0}\textcolor{gray}{\scriptsize{$\pm$0.5}} & &
46.1\textcolor{gray}{\scriptsize{$\pm$0.6}} & \textbf{48.4}\textcolor{gray}{\scriptsize{$\pm$0.3}} \\

\midrule
\textit{Average} &
\textit{53.3} &
54.1 &
56.6 & \textbf{57.8} & &
55.8 & \textbf{57.0} \\
\bottomrule
\end{tabular}
}
\vspace{-2mm}
\end{table*}

We hypothesize that \textbf{reducing LoRA rank amplifies parameter coupling}: updates are constrained to a narrower bilinear subspace, making the optimization geometry more “rotated” and less axis-aligned. Methods relying on diagonal surrogates (like LESS) struggle to represent this coupled geometry and thus become sensitive to rank reduction. In contrast, \texttt{GIST} scores examples via alignment within a task-specific subspace extracted from validation gradients. This approach inherently preserves cross-parameter structures, allowing \texttt{GIST} to identify high-value training signals that remain effective even when the trainable degrees of freedom are heavily restricted.

\subsection{Influence of Tail Principle Directions}
Table~\ref{tab:tail_ablation} compares \texttt{GIST} against a tail-only variant (\texttt{GIST-tail}) under the same 5\% selection budget.
While, in optimization view, directions associated with the smallest eigenvalues (tail components) can in principle yield the largest instantaneous decrease in the validation loss, they are also the most vulnerable to overfitting the validation signal and to stochastic gradient noise.

\begin{table}[h]
\centering
\caption{Tail ablation on Llama3.2-3B under the same 5\% selection budget. While \texttt{GIST} consistently improves over \textit{Base} and \textit{Random}, selecting only the smallest principal direction (\texttt{GIST-tail}) can be unstable and may hurt task performance.}
\label{tab:tail_ablation}
\scalebox{0.95}{
\begin{tabular}{l|c c c c}
\toprule
\textbf{Task} &
\textbf{Base} &
\textbf{Random} &
\textbf{\texttt{GIST}} &
\textbf{\texttt{GIST-tail}} \\
\midrule
\textsc{MMLU}   & 53.9 & 53.2\textcolor{gray}{\scriptsize$\pm$0.6} & 56.1\textcolor{gray}{\scriptsize$\pm$0.4} & \textbf{56.4}\textcolor{gray}{\scriptsize$\pm$0.2} \\
\textsc{TydiQA} & 60.4 & 64.1\textcolor{gray}{\scriptsize$\pm$0.4} & \textbf{69.2}\textcolor{gray}{\scriptsize$\pm$0.3} & 63.1\textcolor{gray}{\scriptsize$\pm$1.5} \\
\textsc{BBH}    & 45.5 & 45.1\textcolor{gray}{\scriptsize$\pm$0.2} & \textbf{48.0}\textcolor{gray}{\scriptsize$\pm$0.5} & 38.5\textcolor{gray}{\scriptsize$\pm$1.8} \\
\bottomrule
\end{tabular}
}
\vspace{-2mm}
\end{table}

Empirically, \texttt{GIST} is consistently strong across tasks, whereas \texttt{GIST-tail} exhibits highly task-dependent behavior: it is competitive on \textsc{MMLU} (56.4 vs.\ 56.1), but drops substantially on \textsc{TydiQA} (63.1 vs.\ 69.2) and collapses on \textsc{BBH} (38.5 vs.\ 48.0).
These results suggest that relying only on the decrease of validation loss is brittle, and motivate \emph{automatically} selecting (or weighting) principal directions in a task-adaptive manner, which we leave as an important direction for future work.

\section{Target Set Subspace Analysis}\label{app:principal_direction}

Since the task projector for \textsc{TydiQA} in our experiments maps original gradients to a low-dimensional subspace (specifically, a 9-dimensional space), we employ \textsc{TydiQA} on Qwen2.5-1.5B as a tractable case study to analyze the semantic distribution of subsets selected along each principal direction.

Specifically, utilizing the compact SVD from Eq.~\eqref{eq:compact_svd}, we decompose the Gram matrix as $\mathbf{G}_{\mathrm{val},t}^\top \mathbf{G}_{\mathrm{val},t} = V_r \Sigma_r^2 V_r^{\top}$. As established in Section~\ref{subsec:spectral_optimality}, we adopt this matrix as a first-order surrogate for the Hessian, implying $H_{\text{val},t} \approx V_r \Lambda V_r^{\top}$ (where $\Lambda \approx \Sigma_r^2$). Consequently, the pseudoinverse $H_{\text{val},t}^{\dagger}$ within the influence function (Eq.~\eqref{eq:single_level}) is approximated by $V_r \Lambda^{-1} V_r^{\top}$.

From an optimization perspective, this spectral analysis highlights a \textbf{critical trade-off} between theoretical acceleration and empirical robustness. The leading principal directions (associated with larger singular values of $\mathbf{G}_{\mathrm{val},t}$) correspond to \textbf{dominant consensus patterns} with a high signal-to-noise ratio. While these directions ensure reliable descent, they typically contribute to smaller step updates due to high curvature constraints. Conversely, the trailing directions theoretically offer the potential for larger loss reductions (attributable to the inverse scaling effect of $\Lambda^{-1}$ in the influence approximation). However, in practice, these directions are often dominated by \textbf{stochastic gradient noise} and estimation errors, rendering them unreliable for generalization despite their theoretical efficiency.

Table~\ref{tab:pc_examples_long} illustrates the most influential training examples retrieved along distinct principal directions for a sample Telugu QA instance from the validation set. The English translation of the Telugu query is provided below:

\begin{tcolorbox}
\textbf{\textless\textbar user\textbar\textgreater} \\
Context: Pātapāḍēru is a village in Paderu Mandal, Visakhapatnam district. It is 0 km from the mandal headquarters Paderu and 76 km from the nearest town Anakapalli. According to the 2011 Indian Census, the village has 830 households, a population of 3,687, and covers 268 hectares. The number of males is 1,398 and the number of females is 2,289. The Scheduled Castes population is 26, and the Scheduled Tribes population is 2,944. The census location code is 584655. PIN code: 531077.\\
Question: How many females were there in Pātapāḍēru village in 2011?
\\\\
\textbf{\textless\textbar assistant\textbar\textgreater} \\
2,289
\end{tcolorbox}

Surprisingly, Table~\ref{tab:pc_examples_long} reveals that the samples selected by individual principal directions of $\mathbf{G}_{\mathrm{val},t}$ exhibit a distinct hierarchy of task levels. 
As previously discussed, leading principal directions represent high-confidence descent directions that, while contributing to smaller loss reductions, align with fundamental and general-purpose language tasks. 
For instance, PC1 retrieves sentiment analysis reviews, while PC2 selects multi-step multiple-choice questions (MCQs), indicative of general reasoning capabilities.

In contrast, trailing directions target more granular and specific tasks, such as fact-checking (PC8) and non-English sentences (PC9). 
While incorporating data along these directions yields larger loss reductions on the validation set, these directions are also more prone to falling into the null space. 
Consequently, such aggressive loss reductions may stem from overfitting, potentially detrimental to generalization. 
We observe that these semantic patterns across different principal directions remain consistent among the top-scoring samples. 

Ultimately, when aggregating all principal directions for selection, the retrieved data semantically converges toward the validation example, heavily favoring Natural Language Inference (NLI) tasks. 
It also captures specific translation examples; notably, sample (4) \texttt{flan\_v2\_79833} explicitly selects an English-to-Telugu translation pair.

\newpage
\setlength{\tabcolsep}{5pt}
\renewcommand{\arraystretch}{1.12}
\newcommand{\NL}{\ifhmode\newline\else\leavevmode\newline\fi}
\newcommand{\chatuser}{\textbf{User:} }
\newcommand{\chatassistant}{\NL\noindent\textbf{Assistant:} }
\newcommand{\tasktitle}[1]{{\textit{\underline{#1}}}\vspace{1mm}\NL}
\newcommand{\heb}[1]{\cjRL{#1}}

\newcommand{\teluguqa}{%
\tasktitle{Multilingual QA in Telugu}%
\vspace{-3mm}\par
\centering
\includegraphics[width=\linewidth,page=1]{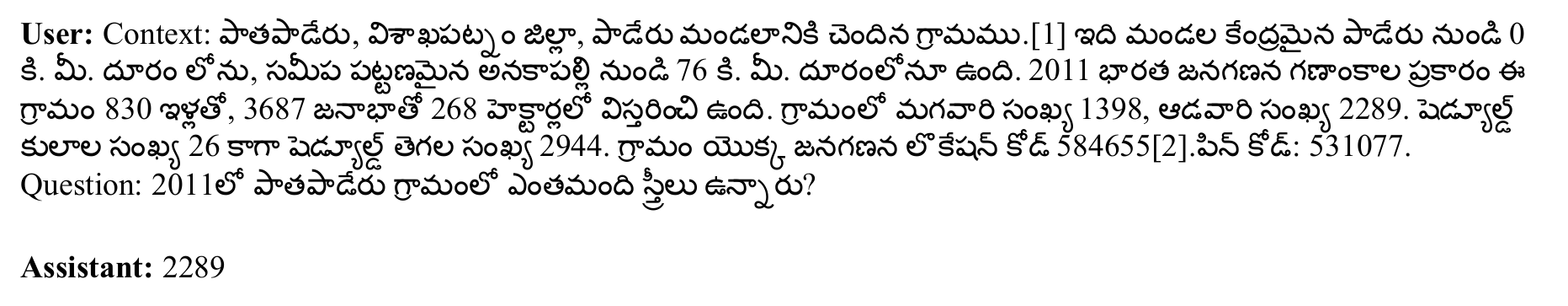}\par
}

\newcommand{\overallTopFive}{%
\leavevmode
\tasktitle{Overall: Top-5 Highest-Score Prompts}
\vspace{1.2mm}\noindent\textbf{(1) flan\_v2\_93448 — NLI}\NL
\chatuser
Test for natural language inference.
Premise: A group of young girls is running in a field of yellow flowers.
Hypothesis: Young girls are in a field of flowers.
Options: yes / no / it is not possible to tell.

\chatassistant
Girls who are running in a field of flowers by implication would have to be in a field of flowers. Therefore, the final answer is yes.

\vspace{2.2mm}\noindent\textbf{(2) cot\_28147 — NLI question reconstruction}\NL
\chatuser
User: The man playing music is not imply to being a slow jam. Is the reasoning for what question and answer pair?\NL
\chatassistant
Q: Given the sentence ``A man playing his music for a crowd.'' can we conclude ``A crowd gathers while a man plays a slow jam.''?
Options: yes / it is not possible to tell / no.
A: it is not possible to tell.

\vspace{2.2mm}\noindent\textbf{(3) flan\_v2\_78693 — False-belief story QA}\NL
\chatuser
You will be given a definition of a task first, then some input of the task.Given a story, answer the question about the story. The question is the last sentence in the input. These stories can be difficult due to their length and how each story has at least one of the three following scenarios: the first is when the individual's belief matches reality, the second is when the individual's belief does not match reality, and the third is when an individual has a false belief about another individual's beliefs. The question will ask about the location of an object in the story with respect to either none or one of the three scenarios. Note that there are distractor sentences in each story that are unrelated to the question and are designed to confuse the reader. Logan entered the living\_room. Ethan entered the living\_room. The banana is in the red\_crate. Logan moved the banana to the red\_cupboard. Logan entered the cellar. Avery entered the cellar. The pumpkin is in the red\_envelope. Logan moved the pumpkin to the blue\_suitcase. Phone rang. Abigail entered the kitchen. Ethan entered the kitchen. The cabbage is in the green\_cupboard. Abigail moved the cabbage to the red\_drawer. Logan is in the cellar. Abigail entered the cellar. Phone rang. The pumpkin is in the blue\_suitcase. Logan moved the pumpkin to the red\_envelope. Where does Logan think that Abigail searches for the pumpkin? Output:\NL
\chatassistant red\_envelope

\vspace{2.2mm}\noindent\textbf{(4) flan\_v2\_79833 — English $\rightarrow$ Telugu}\NL
\noindent\hspace*{-2mm}%
\includegraphics[width=\linewidth,page=1]{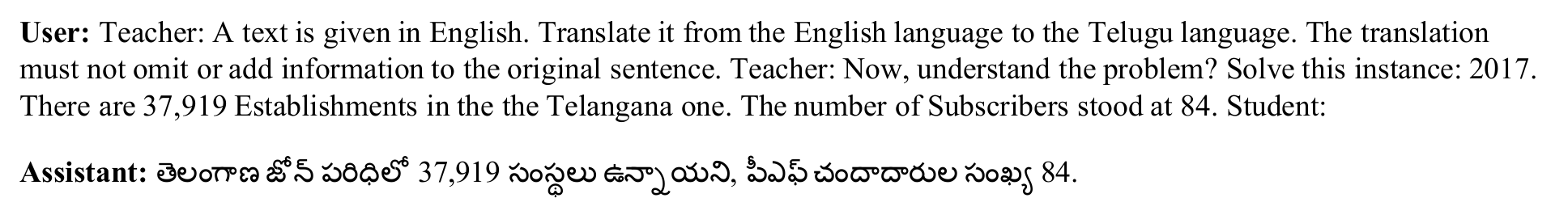}

\vspace{2.2mm}\noindent\textbf{(5) flan\_v2\_7393 — Persian $\rightarrow$ Hebrew}\NL
\chatuser
Translate the Persian sentence into Hebrew.\NL
Input (romanized Persian):\NL
H\=al\=a chetor mitav\=an\=im b\=a dar nazar gereftan-e raft\=arh\=a-ye sal\=amati be pas-and\=az beres\=im?\NL\vspace{1mm}
\chatassistant
\heb{k`t ky.sd ;nw ykwlyM l.h/swb `l hhtnhgwywt bry;wt kdy l.h/swb `l l.hskwN}\textnormal{?}
}

\newcommand{\pcOne}{%
\tasktitle{PC1: Review Sentiment Classification}
\chatuser
Title: It's a cute movie!!\NL
Review: \textit{It Takes Two} is a cute movie that is loosely based on \textit{The Parent Trap} but also seems loosely based on \textit{The Prince and the Pauper}. Mary-Kate and Ashley are very good and so are Kirstie Alley and Steve Guttenberg. This movie is a lot of fun to watch and worth buying or renting.\NL
Question: Is the review positive or negative?\NL\vspace{1mm}
\chatassistant Positive
}

\newcommand{\pcTwo}{%
\tasktitle{PC2: Multi-step Reasoning (MCQ)}
\chatuser
Given a multiple-choice question, reason step by step and output the selected option.\NL
Example:\NL
Question: What describes something that requires activation energy to get started?\NL
Options: (A) igniting fuel and oxidiser \ldots (G) chemical equations \ldots (H) Firecrackers\NL
Let's think first: All chemical reactions require activation energy to get started. Chemical equations describe chemical reactions. \ldots\NL
So the answer is [(G)].\NL
\ldots\ (additional questions omitted)\NL\vspace{1mm}
\chatassistant
\ldots\ So the answer is \textbf{[(H)]}.
}

\newcommand{\pcThree}{%
\tasktitle{PC3: Common-sense Contradiction}
\chatuser
See question below:Choose the sentence that goes against common sense.Options:- Sentence A: ``she's asleep, we should keep noisy''- Sentence B: ``she's asleep, we should keep quiet'' Reason slowly and give your answer.\NL\vspace{1mm}
\chatassistant
If you are loud when someone is asleep you will wake them up.The final answer: Sentence A.
}

\newcommand{\pcFour}{%
\tasktitle{PC4: Word Game (Scrabble)}
\chatuser
I'm playing Scrabble with my family. My seven letters are: A P N W Q V E.\NL
What words can I spell with those?\NL\vspace{1mm}
\chatassistant
APE, AWE, ANEW, PAW, PEW, PAN, PEN, PEA, PAVE, PAWN, PANE, PEAN, NAP, NEW, NAVE, NAPE, VAN, VAPE, EW
}

\newcommand{\pcFive}{%
\tasktitle{PC5: Multilingual Mixed Task (Spanish)}
\chatuser
¿Que sabes sobre el integrador de Mulesoft? ¿Es mejor plataforma de integración que SAP PI?\NL\vspace{1mm}
\chatassistant
MuleSoft is an integration software company that provides the Anypoint platform to connect applications, data, and devices. SAP PI is SAP's tool for centralizing information exchange between SAP and non-SAP systems. They have similar goals but different focuses; the choice depends on the client's needs and what they want to develop.
}

\newcommand{\pcSix}{%
\tasktitle{PC6: Multilingual Reasoning}
\chatuser
Given the following reasoning and answer, what was the question?\NL
Reasoning: Tom served on Monday to $8 \times 10 = 80$ clients. So Tom earned $20/100 \times 80 = 16$ bonus points.\NL
Answer: 16\NL\vspace{1mm}
\chatassistant
In a company, an employee earns bonus points equal to 20\% of the number of customers served. Tom served 10 customers per hour for 8 hours on Monday. How many bonus points did Tom earn?
}

\newcommand{\pcSeven}{%
\tasktitle{PC7: Translation (Persian $\rightarrow$ Hebrew)}
\chatuser
Translate the Persian sentence into Hebrew.\NL
Input (romanized Persian):\NL
H\=al\=a chetor mitav\=an\=im b\=a dar nazar gereftan-e raft\=arh\=a-ye sal\=amati be pas-and\=az beres\=im?\NL\vspace{1mm}
\chatassistant
\heb{k`t ky.sd ;nw ykwlyM l.h/swb `l hhtnhgwywt bry;wt kdy l.h/swb `l l.hskwN}\textnormal{?}
}

\newcommand{\pcEight}{%
\tasktitle{PC8: Verifiable Fact}
\chatuser
Decide whether each claim is a factual statement.\NL
Example: ``In the film \textit{Who Framed Roger Rabbit} real-life characters interacted with cartoon characters.''\NL
Answer: Yes.\NL
Claim: ``The space opera \textit{Babylon 5} was produced by Warner Bros.''\NL
Answer: Yes.\NL\vspace{1mm}
\chatassistant
Warner Bros ran \textit{Babylon 5} from 1993 to 1998.
}

\newcommand{\pcNine}{%
\tasktitle{PC9: Non-English Sentence (French)}
\chatuser
Write a sentence not in English.\NL\vspace{1mm}
\chatassistant
Code canadien du travail -- et l'Union des employés du transport local et industries diverses, section locale 931, syndicat requérant, -- et Camionnage Millman Inc., Saint-Laurent (Québec), employeur.
}

\begin{center}
\small
\begin{longtable}{p{0.32\textwidth} p{0.32\textwidth} p{0.32\textwidth}}
\caption{We use one Telugu \textsc{TydiQA} validation instance to retrieve the most relevant training examples under \texttt{GIST}. 
Target Projector (nine principal directions) is precomputed via SVD on the original target set and reused here. For each direction, candidate examples are scored by their projection-based alignment, and the highest-scoring examples are retrieved. We then aggregate the nine directional scores to obtain a single overall score per example, and report the \textbf{Top-5} examples with the largest aggregated scores.}
\label{tab:pc_examples_long}\\

\toprule
\multicolumn{3}{c}{\textbf{A Telugu QA Example in Validation Set}} \\
\midrule
\multicolumn{3}{p{\dimexpr\textwidth-2\tabcolsep\relax}}{\teluguqa} \\
\midrule

\multicolumn{1}{c}{\textbf{PC1}} &
\multicolumn{1}{c}{\textbf{PC2}} &
\multicolumn{1}{c}{\textbf{PC3}} \\
\midrule
\endfirsthead

\toprule
\multicolumn{1}{c}{\textbf{PC7}} &
\multicolumn{1}{c}{\textbf{PC8}} &
\multicolumn{1}{c}{\textbf{PC9}} \\
\midrule
\endhead

\midrule
\multicolumn{3}{r}{\small Continued on next page}\\
\endfoot

\bottomrule
\endlastfoot

\midrule
\pcOne & \pcTwo & \pcThree \\
\midrule

\multicolumn{1}{c}{\textbf{PC4}} &
\multicolumn{1}{c}{\textbf{PC5}} &
\multicolumn{1}{c}{\textbf{PC6}} \\
\midrule
\pcFour & \pcFive & \pcSix \\
\midrule

\multicolumn{1}{c}{\textbf{PC7}} &
\multicolumn{1}{c}{\textbf{PC8}} &
\multicolumn{1}{c}{\textbf{PC9}} \\
\midrule
\pcSeven & \pcEight & \pcNine \\
\midrule

\multicolumn{3}{c}{\textbf{Overall (Top-5)}} \\
\midrule
\multicolumn{3}{p{\textwidth}}{\overallTopFive\strut} \tabularnewline

\end{longtable}
\end{center}



\end{document}